\documentclass{article}

\PassOptionsToPackage{round}{natbib}

\usepackage[dblblindworkshop, final]{neurips_2026}
\workshoptitle{Attributing Model Behavior at Scale}

\usepackage[utf8]{inputenc} 
\usepackage[T1]{fontenc}    
\usepackage[pagebackref]{hyperref}
\renewcommand*{\backref}[1]{}
\renewcommand*{\backrefalt}[4]{%
  \ifcase #1\or (Cited on page~#2.)\else (Cited on pages~#2.)\fi}

\hypersetup{
    colorlinks,
    linkcolor={purple},
    citecolor={blue!50!black}
}
\usepackage{url}            
\usepackage{booktabs}       
\usepackage{amsfonts}       
\usepackage{nicefrac}       
\usepackage{microtype}      
\usepackage[table]{xcolor}  
\usepackage{arydshln}
\usepackage{url}
\usepackage{booktabs}       
\usepackage{amsfonts}       
\usepackage{nicefrac}       
\usepackage{microtype}      
\usepackage{mathtools}
\usepackage{titletoc}
\usepackage{nicefrac}
\usepackage{amsmath, amssymb, amsthm}
\usepackage{thmtools}
\usepackage{thm-restate}
\makeatletter
\newcommand{\originalonly}[1]{\ifthmt@thisistheone #1\fi}
\makeatother
\usepackage{tabularray}
\usepackage[dvipsnames]{xcolor}
\usepackage[most]{tcolorbox}
\usepackage{wrapfig}
\UseTblrLibrary{booktabs}
\usepackage[labelfont=bf]{caption}
\newtheorem{theorem}{Theorem}
\newtheorem{remark}{Remark}

\newtheorem{assumption}{Assumption}
\newtheorem{lemma}{Lemma}
\newtheorem{proposition}{Proposition}
\newtheorem{corollary}{Corollary}
\DeclareMathOperator*{\argmin}{arg\,min}
\DeclareMathOperator*{\argmax}{arg\,max}

\newtcbox{\impboxg}{on line, rounded corners, box align=base, colback=Green!10,
  colframe=white, size=fbox, arc=3pt, before upper=\strut, top=-2.5pt, bottom=-3.5pt,
  left=-2pt, right=-2pt, boxrule=0pt}
\newtcbox{\impboxr}{on line, rounded corners, box align=base, colback=Orange!10,
  colframe=white, size=fbox, arc=3pt, before upper=\strut, top=-2.5pt, bottom=-3.5pt,
  left=-2pt, right=-2pt, boxrule=0pt}
\newcommand{\impg}[1]{{\small\impboxg{\textcolor{Green!20!black}{\scriptsize #1}}}}
\newcommand{\impr}[1]{{\small\impboxr{\textcolor{Orange!20!black}{\scriptsize #1}}}}
\usepackage{xspace}
\newcommand{\method}{\textsc{LinMix}}
\newcommand{\methodx}{\textsc{LinMix}\xspace}
\newcommand{\regmix}{\textsc{RegMix}}
\newcommand{\regmixx}{\textsc{RegMix}\xspace}
\newcommand{\olmix}{\textsc{Olmix}}
\newcommand{\olmixx}{\textsc{Olmix}\xspace}
\newtcolorbox{greybox}{colback=black!6, boxrule=0pt, sharp corners,
                        before skip=4pt, after skip=4pt,
                       left=4pt, right=4pt, top=2pt, bottom=2pt, breakable}
\newenvironment{tightequation}
  {%
    \small
    \setlength{\abovedisplayskip}{4pt plus 1pt minus 2pt}
    \setlength{\belowdisplayskip}{4pt plus 1pt minus 2pt}
    \setlength{\abovedisplayshortskip}{0pt plus 1pt}
    \setlength{\belowdisplayshortskip}{2pt plus 1pt minus 1pt}
    \begin{equation}%
  }
  {\end{equation}}

\title{LinMix: Data Mixing as Linear Domain Attribution}

\author{%
    \textbf{Hubert Baniecki}\textsuperscript{1,2},\;
    \textbf{Przemyslaw Biecek}\textsuperscript{1,2},\;
    \textbf{Martin Pawelczyk}\textsuperscript{3}\\
    \textsuperscript{1}University of Warsaw\\
    \textsuperscript{2}Centre for Credible AI, Warsaw University of Technology\\
    \textsuperscript{3}Faculty of Computer Science, University of Vienna
}

\begin{document}

\maketitle

\begin{abstract}
Language model performance depends strongly on the mixture of domains in the pretraining data corpus.
Popular data mixing methods select these domain weights by training many small proxy models on random mixes and then fitting a regression model to predict performance from the mix.
We recognize this practice as a domain attribution problem, where the regression measures each domain's contribution to the target metric.
Through this lens, the standard Dirichlet sampling of proxy data mixes is a poor regression design that requires tuning and cannot support a linear fit for hundreds of domains.
Building on classical optimal experimental design, we introduce \method, which samples mixes at the vertices of the capped simplex so that each domain is at its cap or absent.
This hyperparameter-free vertex sampling decorrelates domain effects, enabling an interpretable linear model whose coefficients are the domain attributions.
The optimal mix given by \methodx provably keeps only domains with positive attribution values and requires no search over candidate mixes. 
We further establish the optimality of vertex sampling, and prove that Dirichlet sampling inflates the attribution error when its mixes concentrate around the prior data mix.
Pretraining experiments with 1B-parameter OLMo2 and 4B Qwen3.5 language models show that \methodx outperforms Dirichlet-based data mixing across reasoning, code, and math benchmarks, with relative gains of 46\% on MBPP and 14\% on GSM8K.
At 288 fine-grained domains, only vertex sampling supports a linear fit, and \methodx improves benchmark accuracy by 11\% on average over the best nonlinear baseline.
\end{abstract}

\begin{center}
\vspace{-0.32em}
\href{https://huggingface.co/collections/credible-ai/linmix}{%
  \raisebox{-0.2\height}{\includegraphics[height=1.1em]{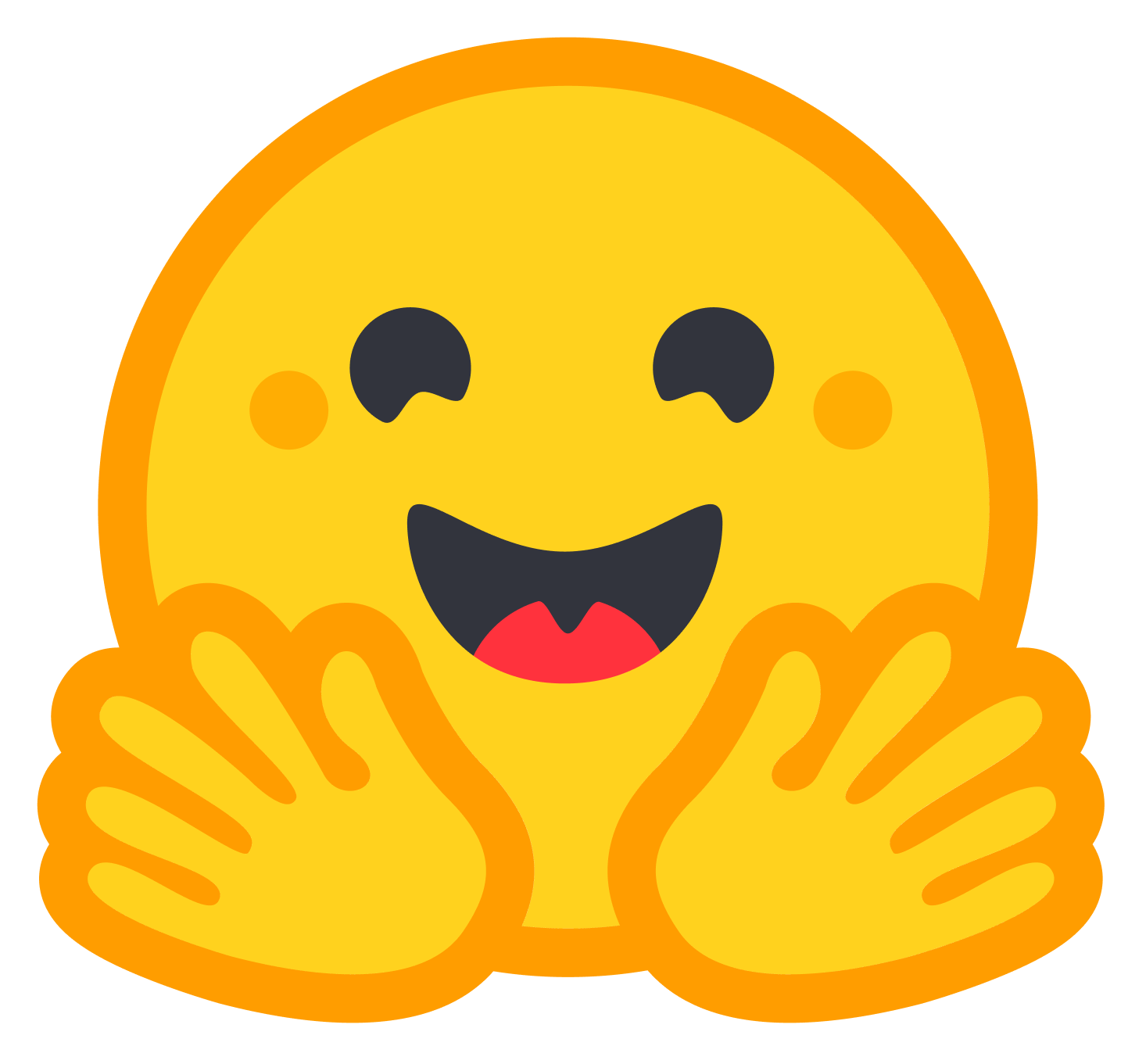}}\;\;\texttt{Models and Data}
}
\hspace{1.5em}
\href{https://github.com/credibleai/linmix}{%
  \raisebox{-0.2\height}{\includegraphics[height=1.1em]{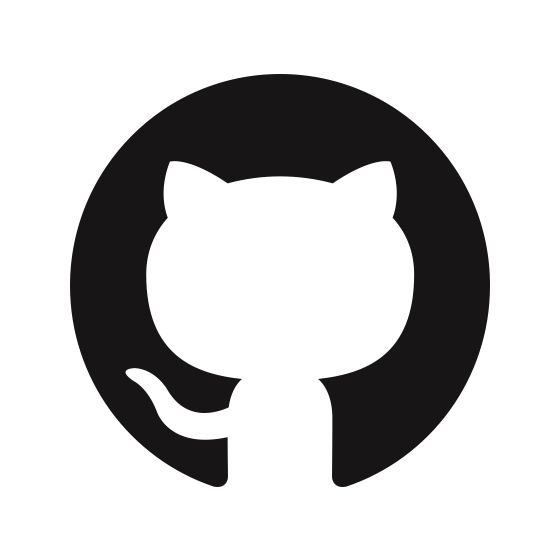}}\;\texttt{Code}%
}
\end{center}

\section{Introduction}

The composition of a pretraining dataset, drawn from an ever-growing data corpus, is one of the most decisive factors governing the downstream capabilities of large foundation models~\citep{gadre2023datacomp,olmo2026olmo3}.
Given the impressive gains in model training efficiency and benchmark performance obtained by curating an optimal dataset, \emph{data mixing} has emerged as a well-founded optimization problem.
Naturally, a source dataset can be split into the so-called \emph{domain} subsets based on their web origin~\citep[e.g., GitHub, arXiv,][]{fan2024doge}, semantic type~\citep[e.g., code, math,][]{li2024datacomp}, or statistical profile \citep[e.g., by clustering data representations,][]{diao2025nemotron}.
Data mixing is then commonly formulated as a domain-weighting problem, where each domain receives a fraction of the target model's token budget and the model is pretrained on the resulting mixture~\citep{xie2023doremi,li2024datacomp,thudi2025mixmin,xie2025chameleon,tan2026fast}.

In this work, we study \emph{regression}-based data mixing~\citep{liu2025regmix,wettig2025weborganizer,ye2025data,chen2026olmix}, which consists of two stages: 
(i)~training a swarm of small \emph{proxy models} on randomly sampled mixes, and 
(ii)~selecting the target mix by optimizing a regression model fit to predict performance from the mix.
This procedure improves both the benchmark performance and the training efficiency of language models, yet two of its design choices remain poorly understood and are addressed ad hoc in practice.
First, it is unclear from which distribution the proxy mixes should be drawn, a question we frame as the classic problem of \emph{regression design}~\citep{kiefer1959regression}, where the sampled mixes govern how much a fixed budget of proxy runs reveals about each domain's effect.
Second, we show that popular methods do not scale to hundreds of fine-grained domains, where their Dirichlet sampling and optimization hyperparameters lack working defaults and require per-setup heuristic tuning, which becomes increasingly difficult as the number of domains~increases.

\begin{figure}
    \centering
    \includegraphics[width=0.94\linewidth]{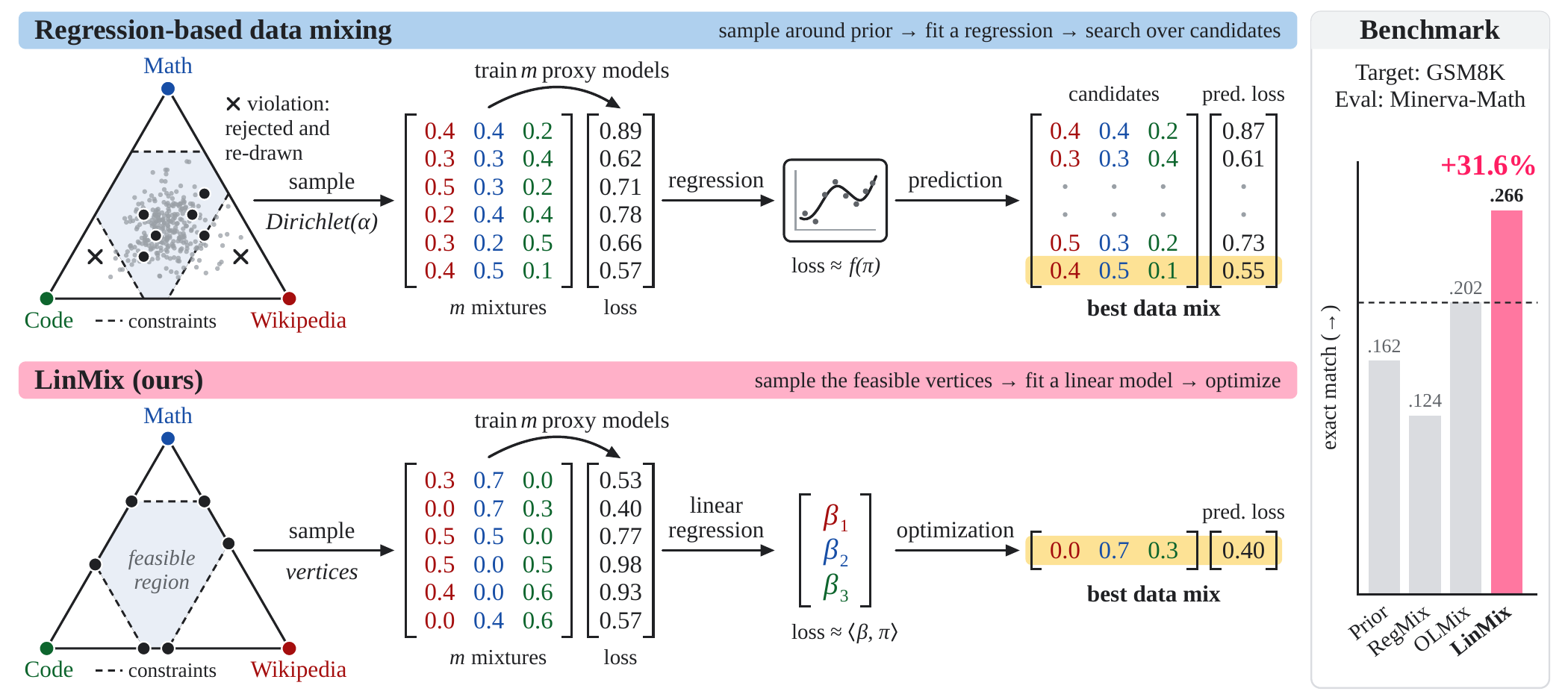}
    \caption{
    \textbf{Data mixing for language model pretraining with linear domain attribution.}
    We show that an interpretable linear regression model suffices for data mixing under vertex sampling.
    }
    \label{fig:figure1}
\end{figure}

\textbf{Domain attribution.}
In parallel, data attribution quantifies how much a model's behavior depends on its training data~\citep{koh2017understanding,ghorbani2019data}.
We formalize regression-based data mixing as a form of data attribution, in which domains replace individual training samples.
A linear fit to the proxy~runs thus yields a domain-level datamodel~\citep{ilyas2022datamodels}, whose coefficients measure each domain's marginal contribution to the target outcome.
Consequently, a reliable regression design identifies these interpretable domain weights, an objective this work addresses from first principles.

\textbf{Contributions.}
We introduce \methodx as a simple and effective data mixing method whose vertex proxy sampling scheme enables selection of the optimal mix for language model pretraining via an interpretable linear~model~(\S\ref{sec:method}).
Figure~\ref{fig:figure1} illustrates its design.
Pretraining experiments with 1B-parameter OLMo2 and 4B Qwen3.5 language models demonstrate that \methodx outperforms data mixing methods relying on Dirichlet sampling across reasoning, code, and math benchmarks~(\S\ref{sec:experiments}).
Furthermore, our theoretical analysis reveals fundamental limitations of Dirichlet-based mixing (\S\ref{sec:theoretical-analysis}). 
Beyond requiring heuristic hyperparameter tuning, it severely inflates attribution error whenever proxy mixes concentrate around the prior or when the prior becomes skewed~(Theorem~\ref{thm:dirichlet_design}).
Finally, we formally establish the theoretical optimality of vertex sampling in \methodx~(Theorem~\ref{thm:capped_vertex}).

\section{Background}

Language models are pretrained on a corpus of $n$ sequences partitioned into $d$ domains $\mathcal{D}_1, \ldots, \mathcal{D}_d$ with $n_k \coloneqq |\mathcal{D}_k|$ sequences each.
The natural proportions $(\boldsymbol{\pi}_{\text{prior}})_k \coloneqq n_k / n$ define the prior data mix, a point on the probability simplex $\Delta^{d-1}$.
Whereas training on the corpus as-is draws data according to $\boldsymbol{\pi}_{\text{prior}}$, data mixing seeks to reweight this prior into domain weights $\boldsymbol{\pi}_{\text{target}} \in \Delta^{d-1}$ that maximize the downstream performance of the \emph{target} model~\citep{xie2023doremi,fan2024doge,albalak2024survey}.
However, evaluating even a single candidate mix at the target scale would require a full pretraining run over $b_{\text{target}}$ sequences.
Instead, data mixing techniques train a swarm of $m$ small \emph{proxy} models, each on $b_{\text{proxy}} \ll b_{\text{target}}$ sequences drawn according to a candidate mix $\boldsymbol{\pi}_i$, and use their performance to predict $\boldsymbol{\pi}_{\text{target}}$; the candidate mixes form the rows of the design matrix $\boldsymbol{\Pi} \in \mathbb{R}^{m \times d}$.

\textbf{Sampling $\boldsymbol{\Pi}$.}
State-of-the-art approaches such as \regmixx~\citep{liu2025regmix} and \olmixx~\citep{chen2026olmix} populate the rows of $\boldsymbol{\Pi}$ via Dirichlet sampling, drawing mixes from a tempered version of the prior for each $i = 1, \ldots, m$:
\begin{equation}
\boldsymbol{\alpha}_i \coloneqq \gamma_i\,\boldsymbol{\pi}_{\text{prior}}^{1/\tau}
\big/
\|\boldsymbol{\pi}_{\text{prior}}^{1/\tau}\|_1,
\qquad
\boldsymbol{\pi}_i \sim \mathrm{Dirichlet}(\boldsymbol{\alpha}_i),
\qquad
\log \gamma_i \sim \mathrm{Unif}(\log \gamma_{\mathrm{lo}}, \log \gamma_{\mathrm{hi}}).
\label{eq:dirichlet_mixing}
\end{equation}
The scale $\gamma_i$ is the total concentration of the Dirichlet distribution, where small $\gamma_i$ produces sparse mixes dominated by a few domains, and large $\gamma_i$ tight perturbations of the tempered prior.
The temperature $\tau$ flattens the prior so that dominant domains (e.g., CommonCrawl) do not absorb most of the probability mass, with $\tau = 2$ by default~\citep{liu2025regmix,wettig2025weborganizer}.

Up-weighting a domain far beyond its prior share can only be realized at the target scale by repeating that domain's data, which brings diminishing returns beyond a few passes~\citep{muennighoff2025scaling}.
Recent methods therefore restrict mixes to the truncated simplex $\mathcal{P}_{\text{cap}} \coloneqq \{ \boldsymbol{\pi} \in \Delta^{d-1} \mid \pi_k \leq u_k \text{ for all } k \}$ with per-domain caps $u_k \coloneqq \min(1, \kappa\, (\boldsymbol{\pi}_{\text{prior}})_k)$ for an up-sampling factor $\kappa \geq 1$.
A mix at the cap makes at most $\kappa \cdot b_{\text{target}} / n$ passes over any domain's data; in practice, \citet{liu2025regmix} hardcode $\kappa = 15$, \citet{wettig2025weborganizer} derive $\kappa \approx n/b_{\text{target}}$ so that no data repeats at the target model scale, and \citet{chen2026olmix} set $\kappa = 4n/b_{\text{target}}$, i.e., at most four passes.

\textbf{Optimizing $\boldsymbol{\pi}_{\text{target}}$.}
Each proxy model $i$ is trained on its mix $\boldsymbol{\pi}_i$ and evaluated to obtain outcome $y_i$, e.g., validation loss or bits-per-byte, yielding the proxy dataset $(\boldsymbol{\Pi}, \boldsymbol{y})$ with $\boldsymbol{y} \in \mathbb{R}^m$.
Regression-based data mixing methods fit a parameterized model $f_\theta : \Delta^{d-1} \to \mathbb{R}$ to predict the outcome from the mix, and then predict the target mix by minimizing the fitted model over the truncated simplex:
\begin{equation}
\hat{\theta} \in \argmin_{\theta} \frac{1}{m} \sum_{i=1}^{m} \left( y_i - f_{\theta}(\boldsymbol{\pi}_i) \right)^2,
\qquad
\hat{\boldsymbol{\pi}}_{\text{target}} \in \argmin_{\boldsymbol{\pi} \in \mathcal{P}_{\text{cap}}} f_{\hat{\theta}}(\boldsymbol{\pi}).
\label{eq:fit_and_optimize}
\end{equation}
In practice, \regmixx solves this optimization approximately, evaluating a gradient-boosted $f_{\hat{\theta}}$ on a large pool of Dirichlet-sampled candidate mixes and averaging the Top-$K$ best-predicted ones for robustness~\citep{liu2025regmix}.
\olmixx~\citep{chen2026olmix} adopts the exponential mixing law of \citet{ye2025data} as $f_{\theta}(\boldsymbol{\pi}) = \exp\big(\sum_{j=1}^{d} \theta_j \pi_j\big) + \theta_0$, which is convex in $\boldsymbol{\pi}$ and can hence be minimized exactly over $\mathcal{P}_{\text{cap}}$.
Finally, $\mathcal{D}_{\text{target}}$ is constructed by drawing $\hat{\pi}_{\text{target},k} \cdot b_{\text{target}}$ sequences from each domain $\mathcal{D}_k$ and the target model is trained on this dataset.
Note that every step of this approach inherits its quality from the design $\boldsymbol{\Pi}$, which determines how well $\theta$ can be identified from only $m$ runs; yet how to sample $\boldsymbol{\Pi}$ has received little attention, which we address in this paper.

\section{Data Mixing as Linear Domain Attribution}
\label{sec:method}

We formalize data mixing as a \emph{domain attribution} problem seeking an interpretable linear model whose coefficients measure each domain's contribution to the target outcome.
We empirically find that such a linear fit cannot be supported using Dirichlet-sampled proxy runs, and prior work compensates by using nonlinear surrogates and per-setup hyperparameter tuning, which we report in~Section~\ref{sec:experiments}.
Instead of changing the surrogate model, we change the design by sampling $\boldsymbol{\Pi}$ to maximize the information the proxy runs carry about the domain effects, after which nonlinear models offer no better fit than the linear one.
Reliable attribution in turn enables the constrained optimization of $\boldsymbol{\pi}_{\text{target}}$, where Dirichlet-based methods degrade once $d$ reaches hundreds of fine-grained domains.

\textbf{Sampling $\boldsymbol{\Pi}$.}
Unlike prior work, we sample the $m$ mixes at the \emph{vertices} of $\mathcal{P}_{\text{cap}}$, aiming to maximize their spread measured by $\det(\boldsymbol{\Pi}^\top \boldsymbol{\Pi})$ without sampling hyperparameters.
This choice follows the D-optimality principle of experimental design~\citep{kiefer1959regression}, which we elaborate on in Appendix~\ref{app:related-work}.
For each run $i$, we draw a uniform random permutation $\sigma_i$ of domains~$\{1, \ldots, d\}$ and fill domains greedily in that order, saturating each at its cap until the unit budget is~exhausted:
\begin{align}
\pi_{i, \sigma_i(k)} \coloneqq \min\!\Big(u_{\sigma_i(k)},\; 1 - \sum_{\ell < k} \pi_{i, \sigma_i(\ell)}\Big) \quad \text{for } k = 1, \ldots, d. && (\text{Greedy fill})
\label{eq:vertex_fill}
\end{align}
Each resulting $\boldsymbol{\pi}_i$ has at most one strictly interior coordinate, with the rest pinned to $u_k$ or $0$.
Such a sparse design is near-orthogonal in expectation, rather than centered at $\boldsymbol{\pi}_{\text{prior}}$, as in Dirichlet sampling.
To further improve $\det(\boldsymbol{\Pi}^\top \boldsymbol{\Pi})$, we pair the proxy runs, letting run $2i$ reuse the permutation of run $2i-1$ in reverse, i.e. $\sigma_{2i}(k) \coloneqq \sigma_{2i-1}(d - k + 1)$, so that domains left empty by one fill are saturated first by its counterpart.
The paired fills are anti-correlated by construction, with largely disjoint supports, and thus contribute near-orthogonal directions to $\boldsymbol{\Pi}$, keeping $\boldsymbol{\Pi}^\top \boldsymbol{\Pi}$ well conditioned.

\textbf{Optimizing $\boldsymbol{\pi}_{\text{target}}$.}
Fitting the linear model $f_{\boldsymbol{\beta}}(\boldsymbol{\pi}) = \boldsymbol{\beta}^\top \boldsymbol{\pi}$ in Eq.~\eqref{eq:fit_and_optimize} by least squares yields the attribution estimate $\hat{\boldsymbol{\beta}} = (\boldsymbol{\Pi}^\top \boldsymbol{\Pi})^{-1} \boldsymbol{\Pi}^\top \boldsymbol{y}$, whose covariance scales with $(\boldsymbol{\Pi}^\top \boldsymbol{\Pi})^{-1}$.
We always center $\hat{\boldsymbol{\beta}}$ to zero mean so that $\hat{\beta}_k$ measures the marginal value of domain $k$ relative to the average.
We also negate $\boldsymbol{y}$ for target metrics where lower is better, so that a positive $\hat{\beta}_k$ indicates a domain of above-average value.
The mix selection then becomes a linear program over the truncated simplex,
\begin{equation}
\hat{\boldsymbol{\pi}}_{\text{target}} \in \argmax_{\boldsymbol{\pi} \in \mathcal{P}_{\text{cap}}} \; \langle \hat{\boldsymbol{\beta}}, \boldsymbol{\pi} \rangle,
\label{eq:linear_objective}
\end{equation}
solved by filling domains to their caps in decreasing order of $\hat{\beta}_k$ until the budget is exhausted, so the solution is itself a vertex of~$\mathcal{P}_{\text{cap}}$.
This allocation inherently treats each additional domain sequence as equally valuable, which is appropriate for most cases, especially quality-filtered data.

\textbf{Transforming the utility of $\pi_k$.}
Beyond sampling $\boldsymbol{\Pi}$ and optimizing $\boldsymbol{\pi}_{\text{target}}$, a third design choice worth studying is the utility assigned to each domain weight, which can capture the diminishing returns of repeating data from a given domain.
For example, if we assume that data value decays with domain repetition, the linear utility $\pi_k$ can be transformed to the logarithmic one $\ln \pi_k$, so that each additional sequence of a domain is worth less than the previous one.
Theoretically, these two extremes can be framed as special cases of the general isoelastic objective
\begin{equation}
\hat{\boldsymbol{\pi}}_{\text{target}} \in \argmax_{\boldsymbol{\pi} \in \mathcal{P}_{\text{cap}}} \; \sum_{k=1}^d \hat{\beta}_k \, \frac{\pi_k^{1-r} - 1}{1-r},
\qquad \text{for curvature } r \in [0,1],
\label{eq:objective_r}
\end{equation}
which recovers the linear objective in Eq.~\eqref{eq:linear_objective} as $r \to 0$ and the log-linear form $\sum_{k:\hat{\beta}_k>0} \hat{\beta}_k \ln \pi_k$ at $r = 1$.
Proposition~\ref{prop:full-support} establishes that positive domain attribution determines the support of the optimal mix, while the curvature $r$ only allocates the budget within it (see an example in App. Figure~\ref{fig:heatmap1b}).
We discuss the classical roots of this theory and its closest data selection counterpart in Appendix~\ref{app:related-work}.
\begin{greybox}
\begin{restatable}[\textbf{Domain support and uniqueness of the optimal mix}]{proposition}{isosupport}
\label{prop:full-support}
Let $\mathcal{A} \coloneqq \{k : \hat{\beta}_k > 0\}$ be the set of domains with positive estimated attribution, and assume their caps can absorb the whole budget, i.e. $u_{\mathcal{A}} \coloneqq \sum_{k \in \mathcal{A}} u_k > 1$.
At $r = 1$, the objective of Eq.~\eqref{eq:objective_r} reads as $\sum_{k \in \mathcal{A}} \hat{\beta}_k \ln \pi_k$, its log-linear form with the sum restricted to $\mathcal{A}$.
Then, for every $r \in (0,1]$, its maximizer $\boldsymbol{\pi}^\star$ is unique and satisfies $\pi^\star_k > 0$ for all $k \in \mathcal{A}$ and $\pi^\star_k = 0$ for all $k \notin \mathcal{A}$.
At $r = 0$, in contrast, a maximizer may assign zero weight to domains in $\mathcal{A}$.
\originalonly{Proofs are given in Appendix~\ref{app:proofs}.}
\end{restatable}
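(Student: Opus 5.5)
The plan is to work directly with the concave program in Eq.~\eqref{eq:objective_r} over the compact polytope $\mathcal{P}_{\text{cap}}$. I will show, in order: existence of a maximizer; that every maximizer is supported in $\mathcal{A}$; strict positivity on $\mathcal{A}$; and uniqueness from strict concavity on the face $\{\pi_k = 0,\ k\notin\mathcal{A}\}$. Write $g_r(t) \coloneqq (t^{1-r}-1)/(1-r)$, with $g_1 = \ln$. For $r\in(0,1]$, $g_r$ is strictly increasing and strictly concave on $(0,1]$, and $g_r'(t)=t^{-r}\to\infty$ as $t\to 0^+$.

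\emph{Existence.} For $r\in(0,1)$ the objective is continuous on the compact set $\mathcal{P}_{\text{cap}}$, so a maximizer exists. For $r=1$ the objective is $\sum_{k\in\mathcal{A}}\hat\beta_k\ln\pi_k$, which is upper semicontinuous with values in $[-\infty,\infty)$. It is finite at some feasible point because $u_{\mathcal{A}}>1$. Concretely, take $\pi_k = u_k/u_{\mathcal{A}}$ on $\mathcal{A}$ and $0$ elsewhere: this sums to $1$, respects every cap, and is strictly positive on $\mathcal{A}$.

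\emph{Support in $\mathcal{A}$.} Let $\boldsymbol{\pi}$ be feasible with $\pi_k>0$ for some $k\notin\mathcal{A}$. Then $\sum_{j\in\mathcal{A}}\pi_j\le 1-\pi_k<1<u_{\mathcal{A}}$, so some $j\in\mathcal{A}$ has $\pi_j<u_j$. Move a small mass $\varepsilon$ from $k$ to $j$. The term for $k$ does not decrease: $\hat\beta_k\le 0$ and $g_r$ is increasing, and at $r=1$ the term is absent altogether. The term $\hat\beta_j g_r(\pi_j)$ strictly increases. Hence no maximizer puts mass outside $\mathcal{A}$. This step also covers $\hat\beta_k=0$, which is exactly what uniqueness will need.

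\emph{Positivity on $\mathcal{A}$.} Suppose a maximizer has $\pi_j=0$ for some $j\in\mathcal{A}$. Since the mass sums to $1$ and lies in $\mathcal{A}$, some $i\in\mathcal{A}$ has $\pi_i>0$. Move $\varepsilon$ from $i$ to $j$. The gain is $\hat\beta_j\varepsilon^{1-r}/(1-r)$, or $+\infty$ at $r=1$, while the loss is $O(\varepsilon)$ by smoothness of $g_r$ at $\pi_i>0$. Because $\varepsilon^{1-r}\gg\varepsilon$ as $\varepsilon\to 0$, the move is a strict improvement for small $\varepsilon$, a contradiction. This infinite-marginal-utility comparison is the step needing the most care, together with the extended-value bookkeeping at $r=1$.

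\emph{Uniqueness.} All maximizers lie in the convex set $F\coloneqq\mathcal{P}_{\text{cap}}\cap\{\pi_k=0,\ k\notin\mathcal{A}\}$. On $F$ the objective equals $\sum_{k\in\mathcal{A}}\hat\beta_k g_r(\pi_k)$ plus a constant, which is strictly concave since each $\hat\beta_k>0$. If $\boldsymbol{\pi},\boldsymbol{\pi}'$ were two distinct maximizers, their midpoint would lie in $F$ and be strictly better, a contradiction.

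\emph{The case $r=0$.} The objective is linear, so a counterexample suffices. Take $d=2$, $\hat{\boldsymbol{\beta}}=(2,1)$, caps $u=(1,1)$. Then $\mathcal{A}=\{1,2\}$, and the unique maximizer $(1,0)$ assigns zero weight to a domain in $\mathcal{A}$, as happens generally with the greedy vertex solution of Eq.~\eqref{eq:linear_objective}. The example uses the raw $\hat{\boldsymbol{\beta}}$ of the proposition, which is not the paper's zero-mean centered estimate; that is fine because the proposition's hypotheses do not require centering.
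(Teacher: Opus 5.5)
Your proof is correct and follows the paper's argument step for step. The shared steps are existence via compactness (upper semicontinuity at $r=1$) using the cap-proportional point $u_k/u_{\mathcal{A}}$, mass transfers for the support and for positivity on $\mathcal{A}$, strict concavity on the face for uniqueness, and an explicit counterexample at $r=0$.

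The differences are cosmetic:
\begin{itemize}
\item You prove the support before positivity, so your donor domain lies in $\mathcal{A}$ and you do not need the paper's $\hat{\beta}_i^+$ bound.
\item At $r=1$, your support step's strict gain uses that the finite maximum forces $\pi_j>0$ on $\mathcal{A}$, which your existence step already supplies.
\item Your two-domain example at $r=0$ is uncentered, as you note, whereas the paper's $\hat{\boldsymbol{\beta}}=(2,1,-3)$ is zero-mean.
\end{itemize}
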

\end{greybox}
\begin{greybox}
\begin{restatable}[\textbf{Closed form of the optimal mix}]{corollary}{coroptimalmix}
\label{cor:optimal_mix}
Under the assumptions of Proposition~\ref{prop:full-support} and for $r \in (0,1]$, the objective of Eq.~\eqref{eq:objective_r} restricted to its positively attributed support $\mathcal{A}$ is strictly concave.
Then, the maximizer takes the closed form
$\hat{\pi}_{\text{target},k} = \min\big(u_k,\, (\hat{\beta}_k/\omega)^{1/r}\big)$ for $k \in \mathcal{A}$ and $\hat{\pi}_{\text{target},k} = 0$ otherwise,
where $\omega > 0$ is the unique scalar enforcing $\sum_k \hat{\pi}_{\text{target},k} = 1$.
\end{restatable}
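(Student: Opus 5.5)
By Proposition~\ref{prop:full-support}, for every $r \in (0,1]$ the maximizer $\boldsymbol{\pi}^\star$ is unique, vanishes off $\mathcal{A}$, and is strictly positive on $\mathcal{A}$. The plan is therefore to restrict to the face $\{\boldsymbol{\pi} \in \mathcal{P}_{\text{cap}} : \pi_k = 0 \text{ for } k \notin \mathcal{A}\}$ and solve a separable concave program over it. On this face the objective is $F(\boldsymbol{\pi}) = \sum_{k\in\mathcal{A}} \hat\beta_k\, \phi_r(\pi_k)$ with $\phi_r(t) = (t^{1-r}-1)/(1-r)$, or $\phi_1 = \ln$. I would first check strict concavity. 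We have $\phi_r''(t) = -r\,t^{-r-1} < 0$ for $t>0$ and $r>0$, and $\hat\beta_k>0$ on $\mathcal{A}$, so each summand is strictly concave on $(0,u_k]$. Since $F$ is separable, it is strictly concave on the restricted feasible set $\{\sum_{k\in\mathcal{A}}\pi_k = 1,\ 0\le \pi_k\le u_k\}$. This set is nonempty because $u_{\mathcal{A}} > 1$.

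Next I would write the KKT conditions, which are necessary and sufficient here. The constraints are linear, and Slater's condition holds because $u_{\mathcal{A}}>1$ lets us choose a point with $0<\pi_k<u_k$ for every $k$ that sums to $1$. Since $\pi^\star_k>0$ on $\mathcal{A}$, the lower bounds are inactive. We therefore introduce a multiplier $\omega$ for the budget and $\mu_k\ge 0$ for the caps. Stationarity reads
\[
\hat\beta_k (\pi_k^\star)^{-r} = \omega + \mu_k, \qquad \mu_k(u_k - \pi^\star_k) = 0 .
\]
Since $\phi_r'(t) = t^{-r}$ is strictly decreasing, this gives two cases. If $\pi^\star_k<u_k$, then $\mu_k = 0$ and $\pi^\star_k = (\hat\beta_k/\omega)^{1/r}$. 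If $\pi^\star_k = u_k$, then $\hat\beta_k u_k^{-r} \ge \omega$, i.e.\ $(\hat\beta_k/\omega)^{1/r}\ge u_k$. Both cases combine into $\pi^\star_k = \min(u_k, (\hat\beta_k/\omega)^{1/r})$. The left-hand side of stationarity is positive, and some coordinate must sit strictly below its cap because $u_{\mathcal{A}}>1$. Hence $\omega>0$.

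It remains to show that $\omega$ is unique. Define $g(\omega) = \sum_{k\in\mathcal{A}} \min(u_k, (\hat\beta_k/\omega)^{1/r})$ on $(0,\infty)$. It is continuous and nonincreasing. It tends to $u_{\mathcal{A}}>1$ as $\omega\to 0^+$ and to $0$ as $\omega\to\infty$. Moreover, it is strictly decreasing wherever some coordinate is uncapped, i.e.\ wherever $g(\omega) < u_{\mathcal{A}}$, which includes the level $g = 1$. By the intermediate value theorem, $g(\omega)=1$ therefore has exactly one solution. Alternatively, uniqueness of $\omega$ follows from uniqueness of $\boldsymbol{\pi}^\star$: any uncapped coordinate determines $\omega$ via $\omega = \hat\beta_k (\pi_k^\star)^{-r}$.

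The main obstacle is mostly bookkeeping, not a deep difficulty. First, the case $r=1$ must be handled so that $\phi_1=\ln$ and its derivative $1/t$ match the general formula; they do. Second, the strict-decrease argument for $g$ must be made carefully, so that the saturated regime $g\equiv u_{\mathcal{A}}$ does not produce a flat segment at level $1$.
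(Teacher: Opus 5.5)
Your proposal is correct and follows essentially the same route as the paper. Both arguments invoke Proposition~\ref{prop:full-support} to restrict to the face supported on $\mathcal{A}$ with the lower bounds inactive, and read the closed form off the KKT stationarity $\hat\beta_k(\pi^\star_k)^{-r}=\omega+\mu_k$ together with complementary slackness. Both then obtain $\omega>0$ from a coordinate strictly below its cap, and get uniqueness of $\omega$ because the capped sum is continuous, nonincreasing, and strictly decreasing at the level $1<u_{\mathcal{A}}$. Your additions (Slater's condition, the second-derivative check, and existence via the intermediate value theorem) are harmless. One small fix: establish $\omega>0$ before you take $1/r$-th powers in the capped case.
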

\end{greybox}
As $r \to 0$, the solution in Corollary~\ref{cor:optimal_mix} collapses to the greedy fill of Eq.~\eqref{eq:vertex_fill} with domains ordered by decreasing $\hat{\beta}_k$ (for distinct $\hat{\beta}_k$).
Optimal mixes thus saturate the caps of their top domains, such as our sampled vertices, and predicting with $\hat{\boldsymbol{\pi}}_{\text{target}}$ interpolates rather than extrapolates. Unlike in Dirichlet sampling, optima here do not fall outside the convex hull of mixes clustered around the prior.

\section{Experiments}
\label{sec:experiments}

\textbf{Data.}
We use the 200B-token WebOrganizer corpus~\citep{wettig2025weborganizer} built from CommonCrawl web data and cleaned for language model pretraining.
Each document is assigned one of 24 topics (e.g., science, politics) and one of 24 formats (e.g., academic writing, news), and we split the corpus into 2048-token sequences, each drawn from a single domain.
We study two mixing granularities: 24 coarse domains given by the topics alone, and 288 fine-grained domains formed by the larger half of the $24 \times 24$ topic--format pairs, covering 94\% of the corpus (cf. App. Figure~\ref{fig:prior_mix}).
Removing the practically empty tail from the prior data mix reduces noise when estimating domain weights.

\textbf{Language models.}
Following \citet{chen2026olmix}, we build on the open source OLMo~2 implementation~\citep{walsh2025olmo2} to train 30M and 60M proxy models and 1B target models.
All models are trained past the $20$ tokens-per-parameter Chinchilla ratio~\citep{hoffmann2022empirical} to avoid undertraining~\citep{chenetal2025revisiting}, on 1B, 2B, and 32B tokens, respectively.
This setup is comparable to prior work that pairs 27M
~\citep{liu2025regmix} and 50M~\citep{wettig2025weborganizer} proxies with 1B targets.
Following \citet{wettig2025weborganizer}, we set the up-sampling cap to $\kappa \approx n / b_{\text{target}}$ so that no sequence repeats at the target scale, a non-arbitrary choice shared by all sampling methods we compare.
Hyperparameters and further training details are given in Appendix~\ref{app:experimental-setup}.

\textbf{Target benchmarks.}
We optimize data mixes for three targets: ARC~\citep[easy and challenge,][]{clark2018think} for scientific question answering; HellaSwag~\citep{zellers2019hellaswag} for commonsense reasoning; and HellaSwag combined with PIQA~\citep{bisk2020piqa} as a two-benchmark target.
In each case, we optimize the mix against proxy-model bits-per-byte~\citep[BPB,][]{gao2021pile} metric on the benchmark's training split and report the target model's accuracy on a held-out set: test for ARC, and validation for HellaSwag and PIQA, whose test labels are withheld.
We also optimize for BPB validation loss on the Common Pile~\citep[CPile,][]{kandpal2025common} data corpus to diversify the setup.

\subsection{Dirichlet sampling of proxy data mixes requires tuning hyperparameters}\label{sec:experiments-tuning}

\begin{wrapfigure}[16]{r}{0.6\textwidth}
    \vspace{-1.5em}
    \centering
    \includegraphics[width=\linewidth]{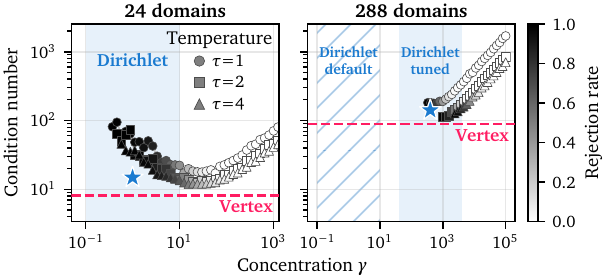}
    \caption{
    \textbf{Dirichlet sampling with default $\gamma,\tau$ hyperparameters does not converge for 288 domains.}
    Fraction of Dirichlet-sampled domain mixes that violate the target model cap constraint.
    We choose new Dirichlet hyperparameters by minimizing the condition number of matrix $\boldsymbol{\Pi}$ (blue star).
    }
    \label{fig:dirichlet_rejection_rate}
\end{wrapfigure}

Before training proxy models, we need to sample the design matrix~$\boldsymbol{\Pi}$.
Figure~\ref{fig:dirichlet_rejection_rate} shows the fraction of Dirichlet-sampled domain mixes that violate the cap constraint of $\mathcal{P}_{\text{cap}}$ and must be rejected.
For 288 domains, the baseline algorithm does not converge with the default hyperparameters $\tau = 2, \gamma_{\mathrm{lo}} = 0.1, \gamma_{\mathrm{hi}} = 5$, or even $\gamma_{\mathrm{hi}} = 10$ (cf. the hatched area in Figure~\ref{fig:dirichlet_rejection_rate}).
Since related work gives no guidance on how to tune them, we set $\tau = 2$ and propose to select $\gamma$ by minimizing the condition number of $\boldsymbol{\Pi}$, which yields mixes much closer to the prior at $\gamma_{\mathrm{lo}} = 40$ and $\gamma_{\mathrm{hi}} = 4000$.

\subsection{Vertex sampling outperforms Dirichlet, enabling a linear model fit}
\label{sec:experiments_proxy_metrics}
\begin{figure}[t]
    \centering
    \includegraphics[width=\linewidth]{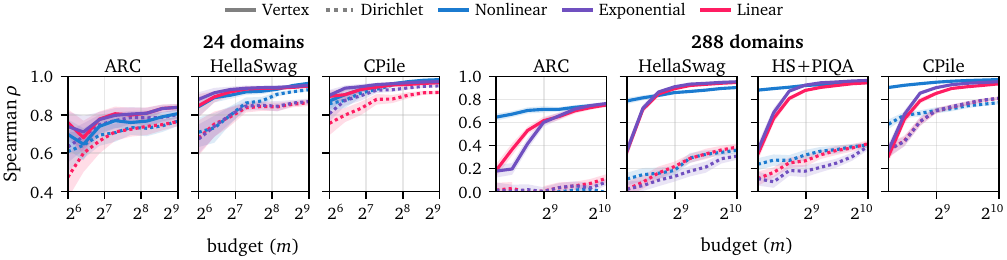}
    \caption{
    \textbf{Optimizer performance under the vertex and Dirichlet samplers.}
    Spearman correlation is averaged across 5 random repetitions of 10-fold cross-validation under a fixed proxy model budget.
    Nonlinear is implemented using LightGBM, as in \regmix, and exponential fits the exponential law to the target, as in \olmix.
    The corresponding $R^2$ values are reported in App. Figure~\ref{fig:plot_budget_extended}.
    }
    \label{fig:plot_budget_main}
\end{figure}
\vspace{-0.4em}

We train $512\times$ 30M and $1024\times$ 60M proxy models per sampler for 24 and 288 domains, respectively.
Figure~\ref{fig:plot_budget_main} shows the cross-validated fit between predicted and observed benchmark BPB (Spearman~$\rho$) under each design as the number of proxy models grows.
At 24 domains, every surrogate fits well under both samplers, but at 288 domains, no surrogate helps Dirichlet sampling even when $\gamma$ is tuned ($\rho \leq 0.39$ at $m = 1024$).
Conversely, our hyperparameter-free vertex sampling puts the interpretable linear model on par with the best nonlinear alternative (see also $R^2$ in App. Figure~\ref{fig:plot_budget_extended}).
Half of the vertex swarm already fits better than the full Dirichlet swarm, because near-prior Dirichlet mixes vary HellaSwag BPB five times less than vertex mixes, leaving little signal to regress on.
As an ablation in App. Figure~\ref{fig:scale_transfer}, we train 30M and 100M models on 288 domains, fit a linear model from the 30M swarm, and use it to rank the outcomes of 60M and 100M models at $\rho \geq 0.95$, while the Dirichlet fit does not transfer ($\rho \leq 0.42$).
We visualize the optimized domain weights in App. Figure~\ref{fig:heatmap1b}, where relaxing curvature $r$ retains the full positively attributed support~(Prop.~\ref{prop:full-support}).
Based on it, we transform the utility on 24 domains with $\ln\pi$ for HellaSwag, which adds Social Life, Fashion \& Beauty, Games, and Industrial to the four domains at their caps (Health, Home \& Hobbies, Food \& Dining, Adult); and with $\sqrt{\pi}$ for CPile, which adds a sixth Hardware domain to the data mix.

\subsection{\methodx outperforms data mixing baselines on a 1B target model benchmark}

We compare \methodx to a nonlinear \regmix, which outperforms several data selection methods and was applied in WebOrganizer, as well as \olmix, which benchmarks several data mixing approaches and finds the exponential model to be state-of-the-art.
Table~\ref{tab:performance1b} shows target benchmark performance where \methodx with a linear model significantly improves data mixing with 288 domains, and is on par or better with 24 domains.
This result is consistent with the proxy metric values reported in Section~\ref{sec:experiments_proxy_metrics}, which here translate into downstream performance.
An interpretable linear regression suffices for data mixing that targets the CPile validation loss under vertex sampling, i.e. it is on par with or better than complex Dirichlet sampling with nonlinear regression.
We ablate on two optimization factors in Appendix~\ref{app:additional-results}: Table~\ref{tab:ablation_scaling} indicates that scaling the linear model coefficients by standard error could improve performance, and Table~\ref{tab:ablation_r} demonstrates how transforming the utility of $\pi$ towards $\ln\pi$ relaxes the 24-domain support for~HellaSwag (Proposition~\ref{prop:full-support}), yielding better results.

\subsection{Scaling to a 4B Qwen3.5 model benchmarked on code and math capabilities}
\begin{wrapfigure}[33]{r}{0.5\textwidth}
    \vspace{-1.5em}
    \centering
    \includegraphics[width=\linewidth]{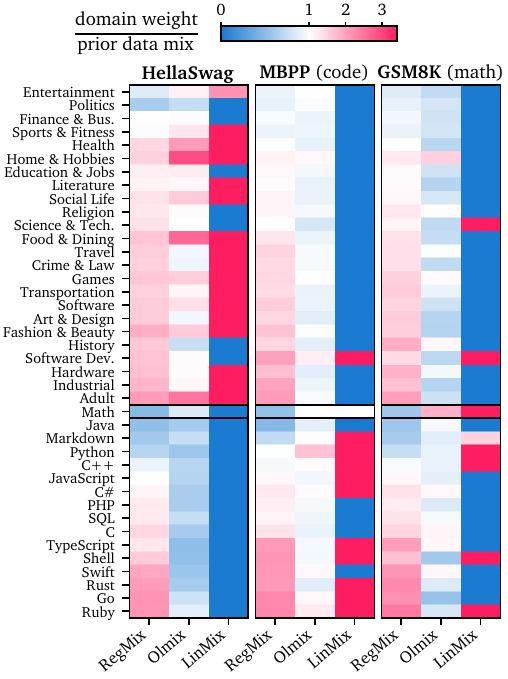}
    \caption{
    Domain weights relative to the prior mix across data mixing methods and target benchmarks.
    \methodx downweights math and code domains for language modeling (blue in HellaSwag), upweights programming languages and software domains for coding (red in MBPP), as well as math and science domains for math capabilities (red in GSM8K).
    }
    \label{fig:heatmap4b}
\end{wrapfigure}
We demonstrate the broader applicability of \methodx by doubling the data corpus and training a target model with ten times more compute.
The 24-topic WebOrganizer dataset is extended with Nemotron-CC-Math~\citep{mahabadi2026nemotron} and Stack-Edu~\citep{allal2025smollm} split into 15 programming languages, yielding a 416B-token corpus spanning 40 domains.
We pretrain 4B Qwen3.5~\citep{qwen2026qwen35} target models on 80B tokens at a $20\times$ ratio to allow for more diverse experiments.
We train $512\times$ 100M proxy models per sampler to prevent under-parametrization, given the more challenging coding and math capabilities.
Mixes are optimized towards training splits of HellaSwag, MBPP~\citep{austin2021mbpp}, and GSM8K~\citep{cobbe2021gsm8k}.
Figure~\ref{fig:heatmap4b} shows an example of optimized domain weights targeting three diverse model capabilities.
It illustrates that \regmixx obtains similar mixes across the three target capabilities, while the exponential law of \olmixx differentiates between targets, but to a lesser degree than \methodx.
Table~\ref{tab:performance4b} shows that \methodx delivers strong performance across the language, coding, and math benchmarks, transferring these capabilities to held-out benchmarks such as HumanEval~\citep{chen2021humaneval} and MATH~\citep{hendrycks2021math,lewkowycz2022minerva}.
We additionally train a 1B-parameter Qwen3.5-like model on 32B tokens targeting CPile from the same proxy model swarm and report its results in the last column.

\begin{table}[t]
\caption{
\textbf{Performance comparison of data mixing methods across $\mathbf{23\times}$ 1B par. OLMo2 target~models pretrained on 32B tokens.}
Normalized accuracy and bits-per-byte (BPB) are measured on a held-out set per target~benchmark.
Color denotes relative difference to the best baseline.
}
\label{tab:performance1b}
\vspace{0.5em}
\centering
\small
\begin{tblr}{
    colspec = {l *{12}{c}},
    colsep = 3.5pt,
    row{7} = {bg=black!10}
}
\toprule
 & \SetCell[c=5]{c} {\small\textbf{24 domains} ($512\times$ 30M models)} &  &  &  &  & \SetCell[c=7]{c} {\small\textbf{288 domains} ($1024\times$ 60M proxy models)} &  &  &  &  &  &  \\
\cmidrule[dashed, lr]{2-6} \cmidrule[dashed, lr]{7-13}
 & \SetCell[c=2]{c} ARC &  & \SetCell[c=2]{c} HellaSwag &  & CPile & \SetCell[c=2]{c} ARC &  & \SetCell[c=2]{c} HellaSwag &  & \SetCell[c=2]{c} HS + PIQA &  & CPile \\
\cmidrule[dashed, lr]{2-3} \cmidrule[dashed, lr]{4-5} \cmidrule[dashed, lr]{6-6} \cmidrule[dashed, lr]{7-8} \cmidrule[dashed, lr]{9-10} \cmidrule[dashed, lr]{11-12} \cmidrule[dashed, lr]{13-13}
\textbf{Method} & Acc$\uparrow$ & BPB$\downarrow$ & Acc$\uparrow$ & BPB$\downarrow$ & BPB$\downarrow$ & Acc$\uparrow$ & BPB$\downarrow$ & Acc$\uparrow$ & BPB$\downarrow$ & Acc$\uparrow$ & BPB$\downarrow$ & BPB$\downarrow$ \\
\midrule
Prior & .437 & .864 & .543 & .788 & .888 & .436 & .870 & .541 & .788 & .634 & .902 & .890 \\
\regmix & .496 & \textbf{.780} & .563 & \textbf{.781} & \textbf{.842} & .449 & .850 & .547 & .784 & .638 & .896 & .878 \\
\olmix & .486 & .785 & .550 & .782 & .860 & .435 & .872 & .550 & .783 & .641 & .892 & .850 \\
\method & \textbf{.510} & .785 & \textbf{.568} & .785 & .847 & \textbf{.511} & \textbf{.757} & \textbf{.615} & \textbf{.758} & \textbf{.690} & \textbf{.855} & \textbf{.834} \\
 & \impg{+2.8\%} & \impr{$-$0.6\%} & \impg{+0.9\%} & \impr{$-$0.5\%} & \impr{$-$0.6\%} & \impg{+13.8\%} & \impg{+10.9\%} & \impg{+11.8\%} & \impg{+3.2\%} & \impg{+7.6\%} & \impg{+4.1\%} & \impg{+1.9\%} \\
\bottomrule
\end{tblr}
\end{table}

\begin{table}[t]
\caption{
\textbf{Performance of data mixing methods across $\mathbf{10\times}$ 4B Qwen3.5 target models pretrained on 80B tokens.}
Mixes are optimized with 40 domains for each target benchmark and evaluated on its held-out set and a held-out benchmark.
The additional CPile column reports $4\times$ 1B Qwen3.5 models optimized for held-out Common Pile validation loss.
Color denotes relative diff. to the best baseline.
}
\label{tab:performance4b}
\vspace{0.5em}
\centering
\small
\begin{tblr}{
    colspec = {l *{13}{c}},
    colsep = 2.6pt,
    row{7} = {bg=black!10},
}
\toprule
 & \SetCell[c=4]{c} \textbf{Target: HellaSwag} &  &  &  & \SetCell[c=4]{c} \textbf{Target: MBPP} (code) &  &  &  & \SetCell[c=4]{c} \textbf{Target: GSM8K} (math) &  &  &  & \textbf{CPile} \\
\cmidrule[dashed, lr]{2-5} \cmidrule[dashed, lr]{6-9} \cmidrule[dashed, lr]{10-13} \cmidrule[dashed, lr]{14-14}
 & \SetCell[c=2]{c} HellaSwag &  & \SetCell[c=2]{c} PIQA &  & \SetCell[c=2]{c} MBPP &  & \SetCell[c=2]{c} HumanEval &  & \SetCell[c=2]{c} GSM8K &  & \SetCell[c=2]{c} MATH &  & CPile \\
\cmidrule[dashed, lr]{2-3} \cmidrule[dashed, lr]{4-5} \cmidrule[dashed, lr]{6-7} \cmidrule[dashed, lr]{8-9} \cmidrule[dashed, lr]{10-11} \cmidrule[dashed, lr]{12-13} \cmidrule[dashed, lr]{14-14}
\textbf{Method} & {\small Acc$\uparrow$} & {\small BPB$\downarrow$} & {\small Acc$\uparrow$} & {\small BPB$\downarrow$} & {\small P@1$\uparrow$} & {\small BPB$\downarrow$} & {\small P@1$\uparrow$} & {\small BPB$\downarrow$} & {\small EM$\uparrow$} & {\small BPB$\downarrow$} & {\small EM$\uparrow$} & {\small BPB$\downarrow$} & {\small BPB$\downarrow$} \\
\midrule
Prior & .632 & .743 & .764 & .952 & .304 & .375 & .213 & .285 & .214 & .403 & .162 & .463 & .814 \\
\regmix & .659 & .735 & .775 & .933 & .304 & .379 & .213 & .290 & .135 & .418 & .124 & .487 & .808 \\
\olmix & .667 & .731 & \textbf{.783} & .932 & .302 & .368 & .244 & .277 & .294 & .391 & .202 & .446 & .801 \\
\method & \textbf{.696} & \textbf{.716} & .782 & \textbf{.920} & \textbf{.444} & \textbf{.334} & \textbf{.280} & \textbf{.258} & \textbf{.337} & \textbf{.383} & \textbf{.266} & \textbf{.427} & \textbf{.797} \\
 & \impg{+4.4\%} & \impg{+2.0\%} & \impr{$-$0.1\%} & \impg{+1.3\%} & \impg{+46.1\%} & \impg{+9.3\%} & \impg{+15.0\%} & \impg{+7.0\%} & \impg{+14.4\%} & \impg{+2.1\%} & \impg{+31.6\%} & \impg{+4.3\%} & \impg{+0.5\%} \\
\bottomrule
\end{tblr}
\end{table}

\section{Theoretical Analysis}\label{sec:theoretical-analysis}

We formalize the domain attribution view of data mixing from first principles in Appendix~\ref{app:theoretical-analysis} and summarize these results here.
Starting from linear datamodels~\citep{ilyas2022datamodels}, we show that the expected outcome of a proxy run is linear in its domain mix, with coefficients that average sample-level attribution scores (Proposition~\ref{prop:domain_attribution}, Corollary~\ref{cor:singleton}).
Domain attribution thus interpolates between data attribution and a single-domain corpus, preserving the linear structure in between.
We further relate it to the Shapley and Banzhaf data values of cooperative game theory in Remark~\ref{rem:banzhaf}.

With a finite budget of proxy runs, estimating the domain attribution is a linear regression whose error is governed by the design $\boldsymbol{\Pi}$ (Lemma~\ref{lem:ols_error}).
Dirichlet sampling inflates this error as the concentration $\gamma$ grows and keeps its mixes close to the prior (Theorem~\ref{thm:dirichlet_design}).
With many domains, Dirichlet mixes cannot stay far from the prior because sparse mixes at a small $\gamma$ violate the per-domain caps and are rejected.
We therefore tune $\gamma$ empirically to a near-prior range (Section~\ref{sec:experiments-tuning}), for which the expected error of Dirichlet sampling is theoretically several times the lowest error possible with equal caps~(Appendix~\ref{app:dirichlet-theory}).
This matches Figure~\ref{fig:plot_budget_main}, where both samplers lead to reliable fits at 24 domains, but no surrogate fits the Dirichlet design well at 288 domains.
In theory, vertex sampling without caps trains each proxy model on a single domain and thus estimates every domain attribution independently and equally well, which no other design can improve on (Proposition~\ref{prop:vertex_design}).
In practice, the caps force every mix to spread over several domains and reveal less about each of them, but vertex sampling remains the best possible design in expectation for equal caps~(Theorem~\ref{thm:capped_vertex}).

Under these idealizations, Theorems~\ref{thm:dirichlet_design} and~\ref{thm:capped_vertex} help explain why the linear fit of \methodx can be more reliable than one based on Dirichlet sampling.
Dirichlet mixes concentrate around the prior, while vertices carry as much evenly spread information as the caps allow.

\section{Conclusion}

We demonstrate that an interpretable linear model suffices for data mixing under vertex sampling and validate it across language model pretraining benchmarks.
We further develop the theoretical foundations of \methodx and relate data mixing to the attribution literature.
While this paper contributes to the interpretability of data mixing, our results suggest that optimizing a vertex-sampled matrix with a nonlinear model could yield even better target model performance, which we leave for future work.
We discuss limitations of our work in Appendix~\ref{app:limitations}, most importantly the cost of training a proxy swarm and the single training run per target model.

\begin{ack}
Hubert Baniecki was supported by the Foundation for Polish Science
(FNP), and the state budget within the Polish Ministry of Education and Science program ``Pearls of Science'' project number PN/01/0087/2022.
Work on this project is financially supported by the Foundation for Polish Science (FNP) grant `Centre for Credible AI' No. FENG.02.01-IP.05-0058/24, and we gratefully acknowledge the Polish high-performance computing infrastructure PLGrid (HPC Center: ACK Cyfronet AGH) for providing computer facilities and support within the computational grant no. PLG/2026/019857.
\end{ack}

\bibliography{references}
\bibliographystyle{plainnat}

\clearpage
\appendix
\section*{Appendix ``LinMix: Data Mixing as Linear Domain Attribution''}

\startcontents[sections]
\noindent\rule{\textwidth}{1pt}
\printcontents[sections]{l}{1}{\setcounter{tocdepth}{2}}
\noindent\rule{\textwidth}{1pt}

\clearpage
\section{Related Work}\label{app:related-work}

\paragraph{Design of experiments.}
Sampling $\boldsymbol{\Pi}$ is a classic problem in statistics.
Because each row is a vector of domain proportions that sum to one, the proxy swarm is a mixture experiment~\citep{scheffe1958experiments}.
Choosing inputs that minimize the variance of a regression estimate is the subject of optimal experimental design.
A design is \emph{D-optimal} when it maximizes $\det(\boldsymbol{\Pi}^\top \boldsymbol{\Pi})$, which minimizes the volume of the confidence ellipsoid of $\hat{\boldsymbol{\beta}}$~\citep{kiefer1959regression}.
For approximate designs, \citet{kiefer1960equivalence} show that maximizing this determinant also minimizes the worst-case prediction variance.
The D-optimal design for the first-degree mixture model places equal weight on the vertices of the simplex~\citep{kiefer1961optimum}.
When upper bounds such as our caps $u_k$ truncate the simplex, the extreme vertices designs of \citet{mclean1966extreme} likewise use the vertices of the constrained region.
\methodx follows this principle at a scale where computing exact optimal designs on $\mathcal{P}_{\text{cap}}$ is impractical, sampling vertices generated by uniformly random domain permutations.
We prove that this sampling scheme is optimal in expectation on $\mathcal{P}_{\text{cap}}$ with equal caps (Theorem~\ref{thm:capped_vertex}) and leave the case of unequal caps to future work.
In Section~\ref{sec:experiments-tuning}, we tune Dirichlet sampling by minimizing the condition number of $\boldsymbol{\Pi}$, a criterion closely related to D-optimality and $\det(\boldsymbol{\Pi}^\top \boldsymbol{\Pi})$.
Both degrade together because nearly-collinear mixes reduce the smallest singular value of $\boldsymbol{\Pi}$, thereby collapsing the determinant and inflating the condition number.
\methodx also samples proxy mixes in pairs by filling domains in a random order and then in reverse order, following the idea of antithetic variates~\citep{hammersley1956new}.
This relates to pairing sampled coalitions with their complements that has improved Shapley value approximation~\citep{fumagalli2026odd}, though our capped mixes need not be exact complements.
More broadly, regression design is of interest for improving attribution, e.g. \citet{musco2025provably} improve regression-based Shapley value approximation via leverage-score sampling of coalitions.

\paragraph{Data mixing.}
Data mixing methods differ in how they estimate domain value and spend compute to choose a mix~\citep{liu2026rethinking}.
Online methods adapt the mixture during training.
DoReMi~\citep{xie2023doremi} learns weights during proxy training through group distributionally robust optimization, DoGE~\citep{fan2024doge} and \citet{tan2026fast} use gradient-based updates guided by generalization or validation performance, and PiKE~\citep{li2025pike} re-weights data sources from their gradients.
MixMin~\citep{thudi2025mixmin} optimizes a weighted ensemble of proxy predictions, while Chameleon~\citep{xie2025chameleon} derives weights from relationships between domain embeddings.
Data Mixing Laws~\citep{ye2025data}, AutoScale~\citep{kang2025autoscale}, and \citet{shukor2025scaling} use scaling relationships to predict mixture performance or select mixtures at larger training scales.
Mixture optimization also extends to tokenizers~\citep{won2026trex} and multimodal midtraining~\citep{wen2026mixatlas}.
Adaptive search allocates new proxy runs to promising mixtures through iterative predictor fitting~\citep{diao2025nemotron} or Bayesian optimization~\citep{ouyang2025admirebayesopt,yen2025data}.
\methodx instead fixes its proxy mixtures before observing their outcomes, using vertex sampling to estimate domain effects across the feasible region.
This design enables a useful linear fit across hundreds of domains; combining it with adaptive search is a direction for future work.

\paragraph{Theory of optimal regression-based data mixing.}
Regression-based data mixing methods select the target mix by minimizing a fitted surrogate model, offering little theoretical insight into the optimal mix itself~\citep{liu2025regmix,wettig2025weborganizer,chen2026olmix}.
Recently, \citet{chen2026olmix} bound the performance loss when a previously optimized mix is reused after the domain set changes, taking the mixing law and its exact minimization as given.
\citet{thrush2025improving} rank domains by correlating the per-domain losses of public models with their benchmark scores instead, and prove that such rankings are optimally used by saturating the caps in rank order.
In our framework, this rule is the vertex solution of Eq.~\eqref{eq:objective_r} at $r = 0$, with the ranking estimated from a designed experiment on training mixtures rather than a correlation across uncontrolled models.
For $r > 0$, the objective is a weighted \emph{isoelastic} utility, also known as the alpha-fair family of resource allocation~\citep{mo2000fair}, whose marginal utility grows unbounded at zero weight, so the maximizer places positive weight on every positively attributed domain (Proposition~\ref{prop:full-support}).
Corollary~\ref{cor:optimal_mix} solves the KKT conditions in closed form, yielding a capped water-filling allocation~\citep[cf.][]{boyd2004convex}.

\paragraph{Data attribution.}
Our work relates to the parallel literature on data attribution, which typically studies the influence of individual training examples rather than entire domains~\citep{ghorbani2019data,feldman2020neural,wang2023data,li2023robust}.
We identify a connection between regression-based data mixing and datamodels~\citep{ilyas2022datamodels,saunshi2023understanding}, which predict model outputs from training-set membership and show that a linear model can capture these dependencies.
DsDm~\citep{engstrom2024dsdm} turns this predictive view into data selection, using gradient-based TRAK estimates~\citep{park2023trak} to rank individual examples and select a fixed-size subset.
\methodx shares this principle but estimates domain proportions from proxy runs and allocates tokens under domain caps.
Using these estimates for a larger target model requires that data effects transfer across scales, as studied by \citet{khaddaj2025smalltolarge}.
Transfer depends on proxy training hyperparameters, as lower learning rates improve agreement with the dataset rankings of larger models~\citep{wang2026can}.
With a limited number of runs~\citep{wang2025data}, the question becomes which mixtures reveal enough about each domain to support the allocation, which vertex sampling answers by design.

\paragraph{Data selection and filtering.}
Data quality and filtering methods improve training corpora by selecting or weighting individual examples.
DataComp~\citep{gadre2023datacomp,li2024datacomp} benchmarks how data curation affects large model performance.
Methods that combine selection criteria include Meta-rater~\citep{zhuang2025metarater}, which learns combinations of quality scores from proxy runs, and \citet{fan2026joint}, who jointly optimize quality and diversity through learned selection masks, while Group-MATES~\citep{yu2025group} further accounts for interactions between examples.
During training, selection can adapt by coordinating multiple criteria~\citep{bai2025efficient}, updating example weights~\citep{yu2025llm}, or scoring examples by their optimizer updates~\citep{wang2026opus}.
For fine-tuning, \citet{ling2025diversity} select data using a diversity reward, while LimaCost~\citep{moon2025limacost} measures instruction-data value against a reference dataset using gradients.
Filtering and mixing can be combined by curating the corpus before optimizing its domain mix, as in our experiments.

\section{Proofs}\label{app:proofs}

\begin{greybox}
\isosupport*
\end{greybox}

\begin{proof}
Throughout, let $\tilde{\boldsymbol{\pi}}$ denote the feasible mix that spreads the budget over $\mathcal{A}$ in proportion to the caps, $\tilde{\pi}_k \coloneqq u_k / u_{\mathcal{A}}$ for $k \in \mathcal{A}$ and $\tilde{\pi}_k \coloneqq 0$ otherwise.
Its entries sum to one and $u_{\mathcal{A}} > 1$ gives $\tilde{\pi}_k < u_k$, so it is feasible and puts positive weight on every domain in $\mathcal{A}$.

\emph{Existence.}
For $r \in (0,1)$, the objective $\nu_r(\boldsymbol{\pi}) \coloneqq \sum_{k=1}^d \hat{\beta}_k \frac{\pi_k^{1-r}-1}{1-r}$ is continuous on the compact set $\mathcal{P}_{\text{cap}}$, so a maximizer exists.
For $r = 1$, the proposition defines the objective as $\nu_1(\boldsymbol{\pi}) \coloneqq \sum_{k \in \mathcal{A}} \hat{\beta}_k \ln \pi_k$ with the convention $\ln 0 = -\infty$.
This convention continues the case $r < 1$, where all maximizers lie on the face $\{\pi_k = 0 \text{ for } k \notin \mathcal{A}\}$.
On this face the omitted terms are constants that do not affect the maximizer, and the remaining sum converges pointwise to $\nu_1$ as $r \to 1$.
$\nu_1$ is upper semicontinuous and bounded above by zero, so it attains a finite maximum, as $\nu_1(\tilde{\boldsymbol{\pi}}) > -\infty$.

\emph{Positive weight on every domain in $\mathcal{A}$.}
The utility of a domain in $\mathcal{A}$ has an unbounded derivative at zero weight, so leaving such a domain empty cannot be optimal.
To make this precise, let $\boldsymbol{\pi}^\star$ be any maximizer and suppose $\pi^\star_j = 0$ for some $j \in \mathcal{A}$.
For $r = 1$ this gives $\nu_1(\boldsymbol{\pi}^\star) = -\infty < \nu_1(\tilde{\boldsymbol{\pi}})$, a contradiction.
For $r \in (0,1)$, pick any $i$ with $\pi^\star_i > 0$ and move a small weight $\varepsilon \in (0, \min(\pi^\star_i/2,\, u_j))$ from $i$ to $j$.
Only two terms change.
The $j$-th gains $\hat{\beta}_j \frac{\varepsilon^{1-r}}{1-r} > 0$.
The $i$-th loses at most $\hat{\beta}_i^+ (\pi^\star_i/2)^{-r}\, \varepsilon$ with $\hat{\beta}_i^+ \coloneqq \max(\hat{\beta}_i, 0)$, by the mean value theorem on $[\pi^\star_i/2,\, \pi^\star_i]$.
The gain is roughly $\varepsilon^{1-r}$ while the loss is only proportional to $\varepsilon$, so the move strictly improves the objective for small $\varepsilon$, again a contradiction.
Hence $\pi^\star_k > 0$ for all $k \in \mathcal{A}$.

\emph{Zero weight outside $\mathcal{A}$.}
Conversely, weight outside $\mathcal{A}$ adds no value and is better spent inside.
Suppose $\pi^\star_j > 0$ for some $j \notin \mathcal{A}$.
Because $\sum_{k \in \mathcal{A}} \pi^\star_k \le 1 < u_{\mathcal{A}}$, some domain $i \in \mathcal{A}$ sits strictly below its cap.
Moving the weight $\varepsilon \coloneqq \min(\pi^\star_j,\, u_i - \pi^\star_i) > 0$ from $j$ to $i$ is feasible.
The $j$-th term does not decrease, since $\hat{\beta}_j \le 0$ makes it non-increasing in $\pi_j$, and for $r = 1$ it is absent from $\nu_1$ altogether.
The $i$-th term strictly increases, a contradiction.
Hence $\pi^\star_k = 0$ for all $k \notin \mathcal{A}$.

\emph{Uniqueness.}
By the two steps above, every maximizer lies in the convex set $\mathcal{F} \coloneqq \{\boldsymbol{\pi} \in \mathcal{P}_{\text{cap}} : \pi_k = 0 \text{ for } k \notin \mathcal{A}\}$ and is strictly positive on $\mathcal{A}$.
On $\mathcal{F}$, the objective is a sum of strictly concave functions of the coordinates in $\mathcal{A}$, since $\hat{\beta}_k > 0$ and $r \in (0,1]$.
Two distinct maximizers would therefore be beaten by their midpoint, so the maximizer is unique.

\emph{Linear limit.}
At $r = 0$, the objective $\nu_0(\boldsymbol{\pi}) = \hat{\boldsymbol{\beta}}^\top \boldsymbol{\pi} - \hat{\boldsymbol{\beta}}^\top \boldsymbol{1}$ is linear, so it is maximized at a vertex of the truncated simplex.
The greedy fill solving Eq.~\eqref{eq:linear_objective} can then exhaust the budget before reaching the lowest-value domains in $\mathcal{A}$ and leave them at zero weight.
For example, $\hat{\boldsymbol{\beta}} = (2,1,-3)$ with $u = (1,1,1)$ yields the optimum $(1,0,0)$.
\end{proof}

\begin{greybox}
\coroptimalmix*
\end{greybox}

\begin{proof}
Fix $r \in (0,1]$.
The claimed strict concavity on the support was established in the uniqueness step of Proposition~\ref{prop:full-support}, as each summand over $\mathcal{A}$ is strictly concave.
By Proposition~\ref{prop:full-support}, the maximizer $\boldsymbol{\pi}^\star$ is unique, lies in $\mathcal{F}$, and satisfies $\pi^\star_k > 0$ for all $k \in \mathcal{A}$.
On $\mathcal{F}$, the problem maximizes this concave objective under the affine constraints $\sum_{k \in \mathcal{A}} \pi_k = 1$ and $\pi_k \le u_k$.
Because every $\pi^\star_k$ is positive, the objective is differentiable at $\boldsymbol{\pi}^\star$ and the constraints $\pi_k \ge 0$ are inactive, so $\boldsymbol{\pi}^\star$ satisfies the KKT conditions~\citep[cf.][]{boyd2004convex} with a scalar $\omega$ for the budget and multipliers $\mu_k \ge 0$ for the caps.
The marginal utility of domain $k$ is $\hat{\beta}_k \pi_k^{-r}$, so stationarity reads $\hat{\beta}_k (\pi^\star_k)^{-r} = \omega + \mu_k$ for every $k \in \mathcal{A}$.
In words, the optimum equalizes marginal utilities at the common level $\omega$, except where a cap binds.
Because $u_{\mathcal{A}} > 1 = \sum_{k \in \mathcal{A}} \pi^\star_k$, some domain lies strictly below its cap.
For this domain, complementary slackness gives $\mu_k = 0$, so $\omega = \hat{\beta}_k (\pi^\star_k)^{-r} > 0$.
Now the closed form can be read off.
If $\pi^\star_k < u_k$, then $\mu_k = 0$ and stationarity gives $\pi^\star_k = (\hat{\beta}_k/\omega)^{1/r}$.
If $\pi^\star_k = u_k$, then $\mu_k \ge 0$ gives $(\hat{\beta}_k/\omega)^{1/r} \ge u_k$.
In both cases, $\pi^\star_k = \min\big(u_k,\, (\hat{\beta}_k/\omega)^{1/r}\big)$.
Since $\boldsymbol{\pi}^\star$ is feasible, summing this form over $k \in \mathcal{A}$ shows that $\omega$ enforces the normalization $\sum_{k \in \mathcal{A}} \min\big(u_k,\, (\hat{\beta}_k/\omega)^{1/r}\big) = 1$.
Finally, $\omega$ is unique.
The sum $\sum_{k \in \mathcal{A}} \min\big(u_k,\, (\hat{\beta}_k/\omega)^{1/r}\big)$ is continuous and non-increasing in $\omega$.
At any $\omega$ where it equals $1 < u_{\mathcal{A}}$, some term lies strictly below its cap and there decreases strictly in $\omega$.
Therefore the value $1$ is attained at exactly one $\omega$.
\end{proof}

\section{Experimental Setup}
\label{app:experimental-setup}

\paragraph{Pretraining data.}
We tokenize data into single-domain sequences of 2048 tokens with the GPT-NeoX-20B tokenizer~\citep{black2022gptneox}, which has a vocabulary of 50{,}277 tokens.
This way, the WebOrganizer corpus~\citep{wettig2025weborganizer} yields 195B tokens, and the 40-domain corpus adds 168B tokens of Stack-Edu~\citep{allal2025smollm} across 15 programming languages, plus the 53B-token highest-quality subset of Nemotron-CC-Math~\citep{mahabadi2026nemotron}, bringing the total data size to 416B. 
Figure~\ref{fig:prior_mix} shows the resulting prior data mixes.

\begin{figure}[ht]
    \centering
    \includegraphics[width=\linewidth]{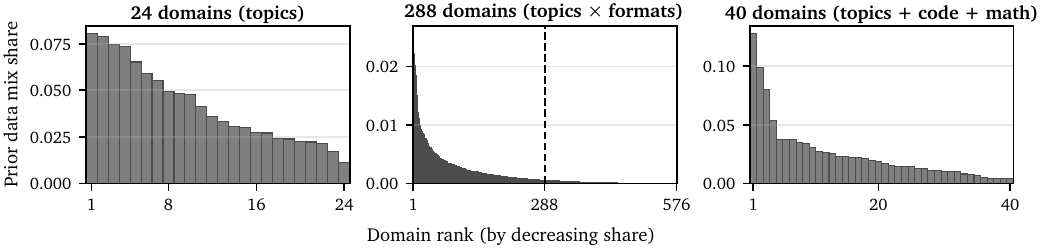}
    \caption{
    \textbf{Prior data mix distributions.}
    Domain shares for the three setups: 24 domains (topics), 288 domains formed from topic and format pairs, and 40 domains extending topics with code and math data.
    The dashed line marks the removal of the distribution's tail, where the kept larger half of all 576 pairs covers 94\% of the corpus.
    }
    \label{fig:prior_mix}
\end{figure}

\paragraph{Sampling data mixes.} 
The caps $\kappa$ become $5.98$ and $5.63$ for the 32B target token budget at 24 and 288 domains, and $\kappa=3.40$ at 40 domains.
We initially derived the latter for a 120B budget, allowing us to further train the models beyond the 80B tokens used.
Dirichlet sampling uses $\tau = 2$ and $[\gamma_{\mathrm{lo}}, \gamma_{\mathrm{hi}}] = [0.1, 10]$, $[40, 4000]$, and $[100, 10^{4}]$ at 24, 288, and 40 domains, respectively.
The 40-domain range follows the criterion of Section~\ref{sec:experiments-tuning}: the default parameters reject over 99.9\% of draws, and the condition number of $\boldsymbol{\Pi}$ is minimized near $\gamma = 10^{2}$.
We keep the $100\times$ span of the tuned range, over which the rejection rate falls from 0.96 to 0.01.
For both samplers, we draw $10^{4}$ designs and keep the best-conditioned $\boldsymbol{\Pi}$, which gives the most stable regression coefficients.

\paragraph{Language models.}
Table~\ref{tab:hparams} lists model architectures and hyperparameters. 
Both model families follow their reference implementations~\citep{walsh2025olmo2,qwen2026qwen35}; Qwen3.5 interleaves three gated DeltaNet layers per attention layer.
All models use AdamW with $\beta_1 = 0.9$ and weight decay $0.1$.
The peak learning rate scales with the inverse cube root of the model parameter count~\citep{porian2024resolving}.
The schedule warms up linearly over the first 10\% of steps, decays by cosine to 10\% of the peak, and then linearly to zero over the final 10\%.
\begin{table}[ht]
\centering\small
\caption{
\textbf{Model architectures and training hyperparameters.} 
Parameters include embeddings. 
Qwen3.5 layer counts list gated DeltaNet + full-attention layers. 
Batch gives sequences per step.
}
\label{tab:hparams}
\vspace{0.5em}
\begin{tabular}{lrrrrrrrr}
\toprule
Model & Params & Layers & Heads & Hidden size & Batch & Tokens & Peak LR & $\beta_2$ \\
\midrule
OLMo2 30M   & 29.2M  & 4    & 8  & 256  & 64   & 1B   & $3.6\times10^{-3}$ & 0.99 \\
OLMo2 60M   & 57.6M  & 8    & 8  & 384  & 128  & 2B   & $2.9\times10^{-3}$ & 0.98 \\
OLMo2 100M  & 102.0M & 12   & 8  & 512  & 128  & 3.2B & $2.4\times10^{-3}$ & 0.97 \\
OLMo2 1B    & 1.41B  & 18   & 16 & 2048 & 512  & 32B  & $1.1\times10^{-3}$ & 0.95 \\
Qwen3.5 100M & 104.5M & 9+3  & 4  & 576  & 128  & 3.2B & $2.4\times10^{-3}$ & 0.97 \\
Qwen3.5 1B   & 1.33B  & 12+4 & 16 & 2048 & 512  & 32B  & $1.1\times10^{-3}$ & 0.95 \\
Qwen3.5 4B   & 3.70B  & 24+8 & 16 & 2560 & 1024 & 80B  & $7.1\times10^{-4}$ & 0.95 \\
\bottomrule
\end{tabular}
\end{table}

\paragraph{Evaluation.}
We use the LM Evaluation Harness~\citep{gao2024harness} for a standardized evaluation.
Proxy models are scored by BPB on the training split of each benchmark, and target models by accuracy and BPB on their held-out set (Table~\ref{tab:benchmarks}).
Shots are drawn from the training split, except MBPP and MATH, which use their standard fixed prompts, and HumanEval BPB, which relies on three hand-written examples.
For the two-benchmark targets, ARC (easy and challenge) and HS+PIQA, we fit a surrogate per benchmark, minimize the mean of their predictions, and report the mean of the two target model scores.
For the 4B models, math and code benchmarks are scored using greedy decoding, with exact match for math and pass@1 for code.
We draw two disjoint held-out sets from the Common Pile dataset~\citep{kandpal2025common}: a 150k-sequence split on which we score the proxy models to fit the surrogates, and a 300k-sequence split reserved for scoring the target models.
\begin{table}[h]
\centering\small
\caption{
\textbf{Benchmark splits, sample sizes, and few-shot counts.}
Dashes mark benchmarks used only as held-out checks.
}
\label{tab:benchmarks}
\vspace{0.5em}
\begin{tabular}{lllcl}
\toprule
Benchmark & Proxy split (size) & Held-out set (size) & Shots & Target metric \\
\midrule
ARC-Easy      & train (2,251)  & test (2,376)         & 5 & normalized acc. \\
ARC-Challenge & train (1,119)  & test (1,172)         & 5 & normalized acc. \\
HellaSwag     & train (39,905) & validation (10,042)  & 5 & normalized acc. \\
PIQA          & train (16,113) & validation (1,838)   & 5 & normalized acc. \\
MBPP          & train (374)    & test (500)           & 3 & pass@1 \\
GSM8K         & train (7,473)  & test (1,319)         & 5 & exact match \\
HumanEval     & ---            & test (164)           & 0 & pass@1 \\
MATH          & ---            & test (5,000)         & 4 & exact match \\
\bottomrule
\end{tabular}
\end{table}

\clearpage
\section{Additional Results}
\label{app:additional-results}

Figure~\ref{fig:plot_budget_extended} extends Figure~\ref{fig:plot_budget_main} with $R^2$ measurements, and Figure~\ref{fig:plot_budget_40domains} repeats it for the four targets of the 4B experiments at 40 domains, where vertex sampling reaches $\rho \geq 0.94$ with every surrogate at $m = 512$, while the tuned Dirichlet swarm stays at $\rho \leq 0.68$ on the three benchmarks and $0.80$ on CPile.
Figure~\ref{fig:scale_transfer} shows that the linear fit of the vertex swarm transfers across proxy scales.
Figures~\ref{fig:heatmap1b} and~\ref{fig:heatmap4b_cpile} visualize the interpretable domain weights of the optimized mixes at 24 and 288 domains and for the Common Pile loss at 40 domains.

Tables~\ref{tab:ablation_scaling} and~\ref{tab:ablation_r} ablate the two optimization choices of \methodx on the HellaSwag target.
Table~\ref{tab:ablation_scaling} tests our hypothesis that dividing the coefficients $\hat{\boldsymbol{\beta}}$ by their standard errors, which discounts uncertain domains, improves the mix.
The gain is at most half a point of accuracy for both the vertex swarm and the tuned Dirichlet swarm with $\gamma \in [40, 4000]$.
Table~\ref{tab:ablation_r} shows that the interpretable transformation of $\pi$ matters at 24 domains, where $\pi$ ($r = 0$) concentrates the mix on four domains at their caps $u_k$.
This support is too narrow for the target budget of 32B tokens, which a larger budget would spread through lower caps, and $\sqrt{\pi}$ or $\ln \pi$ ($r > 0$) spreads at the current one.

\begin{figure}[ht]
    \centering
    \includegraphics[width=\linewidth]{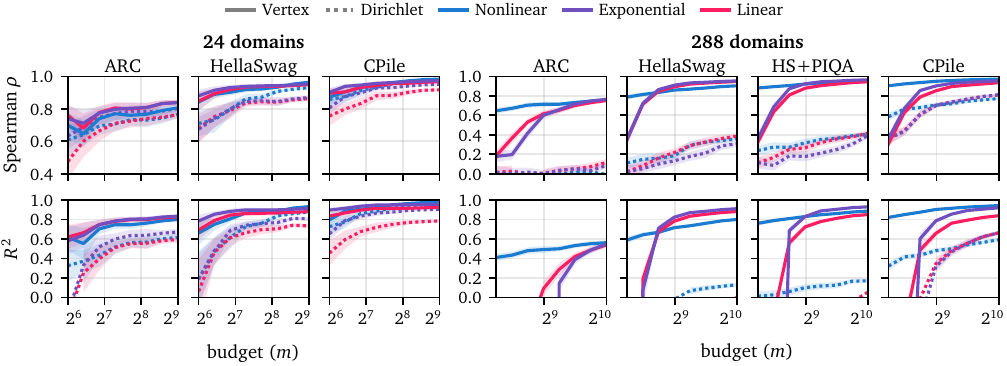}
    \caption{
    \textbf{Performance of optimizers under Vertex and Dirichlet samplers.}
    Extended Figure~\ref{fig:plot_budget_main}.
    Average Spearman correlation and $R^2$ across 5 random repetitions of 10-fold cross-validation under a fixed proxy model budget.
    Nonlinear is implemented using LightGBM, as in \regmix, and exponential fits the exponential law to the target, as in \olmix.
    }
    \label{fig:plot_budget_extended}
\end{figure}

\begin{figure}[ht]
    \centering
    \includegraphics[width=0.582\linewidth]{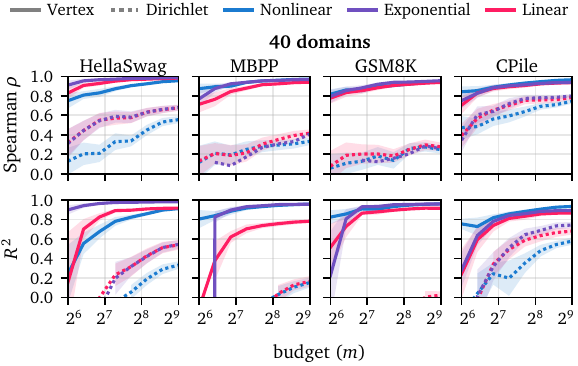}
    \caption{
    \textbf{Optimizer performance on the targets of the 4B experiments.}
    Figure~\ref{fig:plot_budget_extended} for the 40-domain setup, where $512\times$ 100M Qwen3.5 proxy models per sampler are fit to HellaSwag, MBPP, GSM8K, and the held-out Common Pile loss (CPile).
    Spearman correlation and $R^2$ are averaged across 5 random repetitions of 10-fold cross-validation.
    }
    \label{fig:plot_budget_40domains}
\end{figure}

\begin{figure}[ht]
    \centering
    \includegraphics[width=0.67\textwidth]{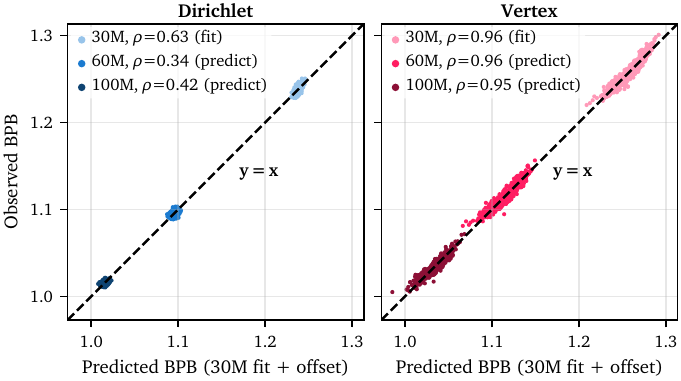}
    \caption{
    \textbf{The linear attribution fit transfers across proxy scales only under vertex sampling.}
    A linear model is fit to predict HellaSwag BPB from the mix over the $1024\times$ 30M proxy models at 288 domains, and predicts the outcomes of the 60M and 100M swarms trained on the same mixes.
    Absolute BPB drops with proxy scale, so we offset predictions by one constant per swarm without changing rankings.
    Dirichlet-sampled mixes yield a weak in-sample fit whose ranking does not transfer, whereas vertex sampling yields a ranking that transfers across model scales.
    }
    \label{fig:scale_transfer}
\end{figure}

\begin{figure}[ht]
    \centering
    \includegraphics[width=0.95\linewidth]{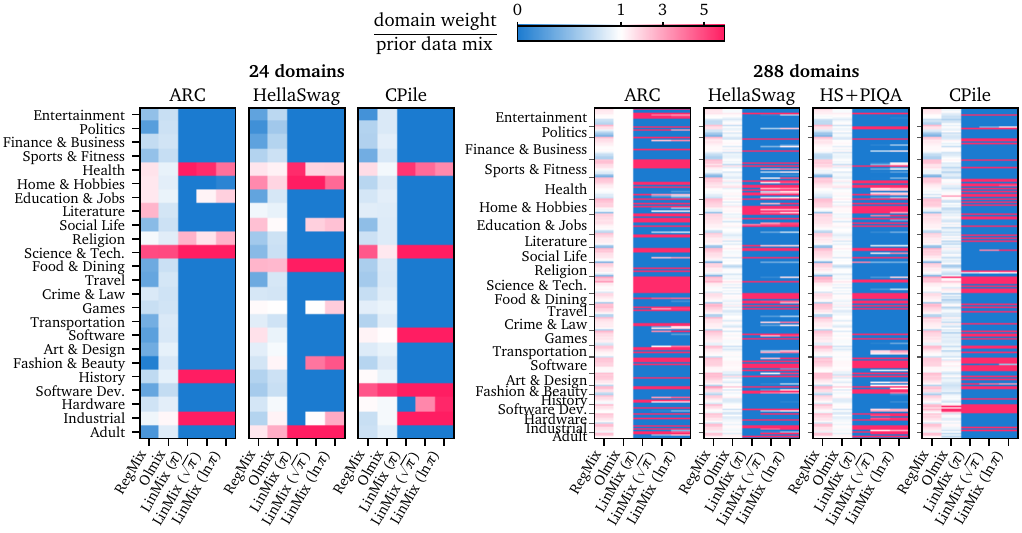}
    \caption{
    \textbf{Domain weights optimized for the 1B target benchmarks.}
    Domain weights relative to the prior mix across data mixing methods and targets on the 200B-token corpus split into 24 and 288 domains, including the held-out Common Pile loss (CPile).
    For 288 domains, rows retain the 24-topic order and sort each topic's formats by increasing share of the prior data mix.
    Topic names are at the group centers, and since a topic group with few formats is shorter than its label, it can overlap vertically.
    Beyond $\pi\;(r=0)$, the support of the target mix is the set of all positively attributed domains; the mixes at $\sqrt{\pi}\;(r=0.5)$ and $\ln\pi\;(r=1)$ thus share this support and differ only in the domain weight magnitudes~(Proposition~\ref{prop:full-support}).
    Based on this, Table~\ref{tab:performance1b} trains the \methodx $\ln\pi$ mix for HellaSwag at 24 domains, the $\sqrt{\pi}$ mix for CPile at 24 domains, and the $\pi$ mix otherwise.
    }
    \label{fig:heatmap1b}
\end{figure}

\begin{figure}[ht]
    \centering
    \includegraphics[width=0.412\linewidth]{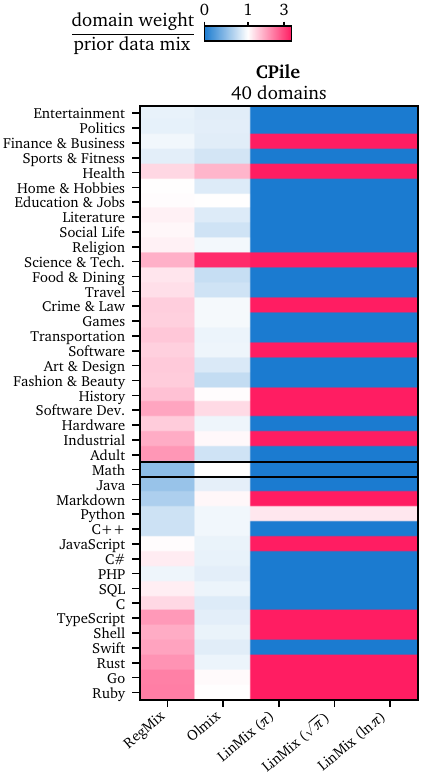}
    \caption{
    \textbf{Domain weights optimized for validation loss at 40 domains.}
    Domain weights relative to the prior mix across data mixing methods for BPB on the Common Pile.
    The three \methodx columns vary the utility transformation. 
    }
    \label{fig:heatmap4b_cpile}
\end{figure}

\begin{table}[ht]
\small
\begin{minipage}[t]{0.47\textwidth}
    \centering
    \caption{Ablation of linear model scaling and proxy sampling at 288 domains on HellaSwag.}
    \label{tab:ablation_scaling}
    \vspace{0.5em}
    \begin{tblr}{
        colspec = {l l c c},
        hline{4} = {dashed, 0.5pt},
    }
    \toprule
    \textbf{Sampling} & \textbf{Scaling} & Acc$\uparrow$ & BPB$\downarrow$ \\
    \midrule
    \SetCell[r=2]{l} Dirichlet & $\hat{\beta}$ & .597 & .764 \\
     & $\hat{\beta}/\mathrm{SE}$ & .602 & .761 \\
    \SetCell[r=2]{l} Vertex & $\hat{\beta}$ & .615 & .758 \\
     & $\hat{\beta}/\mathrm{SE}$ & .617 & .759 \\
    \bottomrule
    \end{tblr}
\end{minipage}\hfill
\begin{minipage}[t]{0.49\textwidth}
    \centering
    \caption{Ablation of the utility transformation via curvature $r$ in \methodx on the HellaSwag target.}
    \label{tab:ablation_r}
    \vspace{0.5em}
    \begin{tblr}{
        colspec = {r *{4}{c}},
    }
    \toprule
     & \SetCell[c=2]{c} \textbf{24 domains} & & \SetCell[c=2]{c} \textbf{288 domains} & \\
    \cmidrule[dashed, lr]{2-3} \cmidrule[dashed, lr]{4-5}
    \textbf{Utility} & Acc$\uparrow$ & BPB$\downarrow$ & Acc$\uparrow$ & BPB$\downarrow$ \\
    \midrule
    $\pi$ ($r=0$) & .543 & .795 & .615 & .758 \\
    $\sqrt{\pi}$ ($r=.5$) & .565 & .782 & .611 & .758 \\
    $\ln\pi$ ($r=1$) & .568 & .785 & .610 & .759 \\
    \bottomrule
    \end{tblr}
\end{minipage}
\end{table}

\clearpage

\section{Theoretical Analysis} \label{app:theoretical-analysis}

We formalize the domain attribution view of data mixing from first principles.
Starting from linear datamodels, we show that the expected outcome of a proxy run is linear in its domain mix, with coefficients that average sample-level attribution scores (Proposition~\ref{prop:domain_attribution}, Corollary~\ref{cor:singleton}), and that estimating them is a linear regression whose error is governed by the design $\boldsymbol{\Pi}$ (Lemma~\ref{lem:ols_error}).
Dirichlet sampling inflates this error as the concentration $\gamma$ grows and keeps its mixes close to the prior, and also as the number of domains grows when the prior is more skewed than the temperature $\tau$ can flatten (Theorem~\ref{thm:dirichlet_design}).
Vertex sampling is instead optimal on the simplex~(Proposition~\ref{prop:vertex_design}) and, in expectation, on the simplex with equal caps~(Theorem~\ref{thm:capped_vertex}).

\subsection{From datamodels to domain attribution}\label{app:domainmodels}

Datamodeling~\citep{ilyas2022datamodels} attributes model outcomes to training data counterfactually, in a regime where many training runs on random subsets of the data are feasible.
Fix a learning algorithm and a target input $\boldsymbol{z}$, e.g. a benchmark question, and let $y(S; \boldsymbol{z})$ denote the loss on $\boldsymbol{z}$ of a model trained on a subset $S$ of the corpus $\mathcal{D} \coloneqq \mathcal{D}_1 \cup \cdots \cup \mathcal{D}_d$ of $n$ sequences.
A datamodel is fit to the pairs $(\boldsymbol{1}_{S_i}, y(S_i; \boldsymbol{z}))$ collected over runs $i = 1, \ldots, m$, where $\boldsymbol{1}_{S_i} \in \{0, 1\}^n$ indicates the sequences included in run $i$.
Empirically, a linear fit predicts the outcomes of held-out runs well and motivates the model
\begin{align}
y(S; \boldsymbol{z}) = \langle \boldsymbol{\beta}(\boldsymbol{z}), \boldsymbol{1}_S \rangle + \eta(\boldsymbol{z}), && (\text{Datamodel})
\label{eq:linear_datamodels}
\end{align}
where $\boldsymbol{\beta}(\boldsymbol{z}) \in \mathbb{R}^n$ collects the attribution scores, i.e., the marginal effect of including each sequence on the loss at $\boldsymbol{z}$, and $\eta(\boldsymbol{z})$ is a noise term with zero mean given $S$ that captures the randomness of training.
Follow-up work relies on this local linearity to relate attribution scores to model outcomes~\citep{karchmer2025efficiently}, and we adopt it as the assumption underlying this analysis.
As all subsets considered below have the same size, an intercept can be absorbed into $\boldsymbol{\beta}(\boldsymbol{z})$ and is omitted.

Consider a proxy run on the mix $\boldsymbol{\pi}$ that trains on $b \coloneqq b_{\text{proxy}}$ sequences, of which $b \pi_k$ are drawn uniformly without replacement from domain $\mathcal{D}_k$, independently across domains, where we assume that $b \pi_k \le n_k$ is an integer.
Let its outcome $y$ be the benchmark loss, i.e., the average of $y(S; \boldsymbol{z})$ over the set of benchmark inputs $\mathcal{Z}$.
Let $\boldsymbol{\beta}^{\mathcal{Z}} \coloneqq \frac{1}{|\mathcal{Z}|} \sum_{\boldsymbol{z} \in \mathcal{Z}} \boldsymbol{\beta}(\boldsymbol{z})$ denote the benchmark-averaged attribution scores, and let $\mu_k$ and $\sigma_k^2 \coloneqq \frac{1}{n_k} \sum_{s \in \mathcal{D}_k} (\beta^{\mathcal{Z}}_s - \mu_k)^2$ denote their mean and variance over the sequences of domain $k$.
Proposition~\ref{prop:domain_attribution} formalizes the data attribution view of data mixing.

\begin{greybox}
\begin{proposition}[\textbf{Datamodels coarsen to domain attribution}]
\label{prop:domain_attribution}
Under the datamodel of Eq.~\eqref{eq:linear_datamodels}, the expected outcome of a proxy run is linear in its mix.
Specifically, with $\bar{\beta}_k \coloneqq b\, \mu_k$,
\begin{equation}
\mathbb{E}[y \mid \boldsymbol{\pi}] = \langle \bar{\boldsymbol{\beta}}, \boldsymbol{\pi} \rangle
\qquad \text{and} \qquad
\mathrm{Var}(y \mid \boldsymbol{\pi}) \le \sigma_\eta^2 + b \max_k \sigma_k^2,
\end{equation}
where $\sigma_\eta^2$ bounds the variance of the benchmark-averaged noise $\frac{1}{|\mathcal{Z}|} \sum_{\boldsymbol{z} \in \mathcal{Z}} \eta(\boldsymbol{z})$ given $S$.
\end{proposition}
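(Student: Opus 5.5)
The plan is to average the datamodel of Eq.~\eqref{eq:linear_datamodels} over the benchmark inputs, and then take moments over the randomness of the sampled subset $S$ and of training. Averaging over $\boldsymbol{z}\in\mathcal{Z}$ gives
\[
y = \langle \boldsymbol{\beta}^{\mathcal{Z}}, \boldsymbol{1}_S\rangle + \bar\eta,
\qquad
\bar\eta \coloneqq \tfrac{1}{|\mathcal{Z}|}\textstyle\sum_{\boldsymbol{z}\in\mathcal{Z}}\eta(\boldsymbol{z}).
\]
Here $\bar\eta$ has zero mean given $S$, and by assumption its conditional variance is at most $\sigma_\eta^2$. The intercept is absorbed, as stated, because every subset has size $b$. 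Write $S=\bigcup_k S_k$, where $S_k\subseteq\mathcal{D}_k$ has size $b_k\coloneqq b\pi_k$, and the $S_k$ are drawn uniformly without replacement and independently across $k$.

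\emph{Expectation.} By the tower property, $\mathbb{E}[y\mid\boldsymbol{\pi}]=\mathbb{E}[\langle\boldsymbol{\beta}^{\mathcal{Z}},\boldsymbol{1}_S\rangle]$. For $s\in\mathcal{D}_k$, uniform sampling gives $\mathbb{P}(s\in S)=b_k/n_k$. Hence
\[
\mathbb{E}[y\mid\boldsymbol{\pi}]=\sum_k \frac{b\pi_k}{n_k}\sum_{s\in\mathcal{D}_k}\beta^{\mathcal{Z}}_s=\sum_k b\mu_k\pi_k=\langle\bar{\boldsymbol{\beta}},\boldsymbol{\pi}\rangle,
\]
which is linear in the mix.

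\emph{Variance.} I would apply the law of total variance conditioning on $S$:
\[
\mathrm{Var}(y\mid\boldsymbol{\pi})=\mathbb{E}[\mathrm{Var}(\bar\eta\mid S)]+\mathrm{Var}\big(\langle\boldsymbol{\beta}^{\mathcal{Z}},\boldsymbol{1}_S\rangle\big).
\]
The cross term vanishes because $\mathbb{E}[\bar\eta\mid S]=0$. The first term is at most $\sigma_\eta^2$. For the second term, write $\langle\boldsymbol{\beta}^{\mathcal{Z}},\boldsymbol{1}_S\rangle=\sum_k T_k$ with $T_k\coloneqq\sum_{s\in S_k}\beta^{\mathcal{Z}}_s$. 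Independence across domains makes the variance additive over $k$. Each $T_k$ is the sum of a simple random sample of size $b_k$ drawn without replacement from a population with variance $\sigma_k^2$. The classical finite-population formula then gives
\[
\mathrm{Var}(T_k)=b_k\sigma_k^2\,\frac{n_k-b_k}{n_k-1}.
\]

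\emph{Bounding the finite-population factor.} This is the step needing slight care. When $b_k\ge 1$, the factor satisfies $\frac{n_k-b_k}{n_k-1}\le 1$. When $b_k=0$, or when $n_k=1$ (which forces $b_k\in\{0,1\}$ and $\sigma_k^2=0$), we have $T_k$ deterministic and its variance is zero. In all cases $\mathrm{Var}(T_k)\le b\pi_k\sigma_k^2$. Summing over $k$ and using $\sum_k\pi_k=1$ gives $\sum_k b\pi_k\sigma_k^2\le b\max_k\sigma_k^2$. Together with the first term, this yields $\mathrm{Var}(y\mid\boldsymbol{\pi})\le\sigma_\eta^2+b\max_k\sigma_k^2$. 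Apart from recalling or quickly re-deriving the without-replacement variance via pairwise covariances $-\sigma_k^2/(n_k-1)$, the argument is routine.
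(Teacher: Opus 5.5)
Your proposal is correct and follows essentially the same route as the paper: you average the datamodel over $\mathcal{Z}$ and use the inclusion probabilities $b\pi_k/n_k$ with linearity of expectation for the mean. For the variance, you combine the law of total variance conditioned on $S$ with the finite-population variance formula, bounded by $b\pi_k\sigma_k^2$ and summed using $\sum_k \pi_k = 1$, and your edge-case handling for $n_k = 1$ matches the paper's convention of reading the fraction as zero.
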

\end{greybox}

\begin{proof}
\emph{Linearity.}
For a fixed subset $S$, averaging Eq.~\eqref{eq:linear_datamodels} over $\boldsymbol{z} \in \mathcal{Z}$ gives $y = \langle \boldsymbol{\beta}^{\mathcal{Z}}, \boldsymbol{1}_S \rangle + \bar{\eta}$ with $\bar{\eta} \coloneqq \frac{1}{|\mathcal{Z}|} \sum_{\boldsymbol{z} \in \mathcal{Z}} \eta(\boldsymbol{z})$ and $\mathbb{E}[\bar{\eta} \mid S] = 0$, because the same subset produces the loss at every $\boldsymbol{z}$ and the average commutes with the inner product.
Given $\boldsymbol{\pi}$, the subset $S$ contains $b \pi_k$ sequences drawn uniformly without replacement from $\mathcal{D}_k$, so every $s \in \mathcal{D}_k$ is included with probability $\Pr[s \in S] = b \pi_k / n_k$.
By linearity of expectation,
\begin{equation*}
\mathbb{E}[y \mid \boldsymbol{\pi}] = \sum_{k=1}^d \sum_{s \in \mathcal{D}_k} \beta^{\mathcal{Z}}_s \, \frac{b \pi_k}{n_k} = \sum_{k=1}^d \pi_k \, \frac{b}{n_k} \sum_{s \in \mathcal{D}_k} \beta^{\mathcal{Z}}_s = \langle \bar{\boldsymbol{\beta}}, \boldsymbol{\pi} \rangle.
\end{equation*}
Evaluating this identity at $\boldsymbol{\pi} = \boldsymbol{e}_k$ identifies $\bar{\beta}_k$ as the expected outcome of a run on domain $k$ alone, which is feasible whenever $n_k \ge b$.

\emph{Variance.}
By the law of total variance and $\mathbb{E}[y \mid S] = \langle \boldsymbol{\beta}^{\mathcal{Z}}, \boldsymbol{1}_S \rangle$,
$\mathrm{Var}(y \mid \boldsymbol{\pi}) = \mathrm{Var}(\langle \boldsymbol{\beta}^{\mathcal{Z}}, \boldsymbol{1}_S \rangle \mid \boldsymbol{\pi}) + \mathbb{E}[\mathrm{Var}(\bar{\eta} \mid S) \mid \boldsymbol{\pi}]$, where the second term is at most $\sigma_\eta^2$.
The domains are sampled independently, and within domain $k$ the sum of $\beta^{\mathcal{Z}}_s$ over a simple random sample of $b \pi_k$ out of $n_k$ sequences has variance $b \pi_k \sigma_k^2 \frac{n_k - b \pi_k}{n_k - 1} \le b \pi_k \sigma_k^2$ by the finite population variance formula~\citep{cochran1977sampling}, where the fraction is read as zero for $n_k = 1$.
Summing over $k$ bounds the first term by $b \sum_k \pi_k \sigma_k^2 \le b \max_k \sigma_k^2$.
\end{proof}

The linear model $f_{\boldsymbol{\beta}}(\boldsymbol{\pi}) = \boldsymbol{\beta}^\top \boldsymbol{\pi}$ fit in Eq.~\eqref{eq:fit_and_optimize} is a datamodel whose units are domains instead of sequences, with coefficients that are budget-scaled averages of sample-level attribution scores.
The partition into domains is arbitrary, since the only assumption is uniform sampling within each domain, and it affects only the noise, which stems from the randomness of training and from which sequences realize the mix, which we call the noise from sequence selection.
Domains that are homogeneous in attribution reduce the latter, but their coefficients take more proxy runs to estimate, as the estimation error grows with $d$ (Lemma~\ref{lem:ols_error}); the finest partition recovers data attribution itself.

\begin{greybox}
\begin{corollary}[\textbf{Domain attribution at the finest granularity is data attribution}]
\label{cor:singleton}
Let every sequence form its own domain, $d = n$, so that a run trained on a subset $S$ of $b$ sequences has the mix $\boldsymbol{\pi} = \boldsymbol{1}_S / b$.
Then $\bar{\boldsymbol{\beta}} = b\, \boldsymbol{\beta}^{\mathcal{Z}}$, so the domain model of Proposition~\ref{prop:domain_attribution} reduces to the benchmark-averaged datamodel of Eq.~\eqref{eq:linear_datamodels}: $\langle \bar{\boldsymbol{\beta}}, \boldsymbol{\pi} \rangle = \langle \boldsymbol{\beta}^{\mathcal{Z}}, \boldsymbol{1}_S \rangle$.
The noise from sequence selection vanishes, leaving $\mathrm{Var}(y \mid \boldsymbol{\pi}) \le \sigma_\eta^2$.
Further, with the equal caps $u_k = 1/b$ ($\kappa = n/b$) that let no sequence repeat within a run, the greedy fill of Eq.~\eqref{eq:vertex_fill} selects a uniformly random subset of $b$ sequences, so vertex sampling coincides with the subsampling design of \citet{ilyas2022datamodels} with fraction $b/n$.
\end{corollary}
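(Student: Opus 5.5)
The corollary has three claims, and I would check each one directly against Proposition~\ref{prop:domain_attribution}, specialised to singleton domains. With $d = n$, every domain $\mathcal{D}_k = \{s_k\}$ has $n_k = 1$.

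\emph{Coefficients.} The domain mean is $\mu_k = \beta^{\mathcal{Z}}_{s_k}$, so $\bar{\beta}_k = b\,\mu_k$ gives $\bar{\boldsymbol{\beta}} = b\,\boldsymbol{\beta}^{\mathcal{Z}}$.

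\emph{Mix and identity.} A run on a subset $S$ of $b$ sequences draws $b\pi_k \in \{0,1\}$ copies of each singleton. Its mix is therefore $\pi_k = \mathbb{1}[s_k \in S]/b$, that is, $\boldsymbol{\pi} = \boldsymbol{1}_S/b$. Substituting,
\[
\langle \bar{\boldsymbol{\beta}}, \boldsymbol{\pi} \rangle = \langle b\,\boldsymbol{\beta}^{\mathcal{Z}}, \boldsymbol{1}_S/b \rangle = \langle \boldsymbol{\beta}^{\mathcal{Z}}, \boldsymbol{1}_S \rangle,
\]
which is the benchmark-averaged datamodel of Eq.~\eqref{eq:linear_datamodels}.

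\emph{Variance.} Each singleton domain has zero within-domain spread, $\sigma_k^2 = \frac{1}{1}(\beta^{\mathcal{Z}}_{s_k} - \mu_k)^2 = 0$. The bound of Proposition~\ref{prop:domain_attribution} then reduces to $\mathrm{Var}(y \mid \boldsymbol{\pi}) \le \sigma_\eta^2$. The same follows more directly from the proof of that proposition: given $\boldsymbol{\pi}$, the subset $S$ is deterministic, so $\mathrm{Var}(\langle \boldsymbol{\beta}^{\mathcal{Z}}, \boldsymbol{1}_S \rangle \mid \boldsymbol{\pi}) = 0$ and only the training noise $\bar\eta$ remains.

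\emph{Vertex sampling.} I would analyse the greedy fill of Eq.~\eqref{eq:vertex_fill} with equal caps $u_k = 1/b$, under a uniform permutation $\sigma$.
\begin{itemize}
\item By induction on $k$: for $k \le b$, the residual budget before step $k$ is $1 - (k-1)/b \ge 1/b$, so coordinate $\sigma(k)$ is filled to $1/b$.
\item After $b$ steps the budget is exactly exhausted, so all later coordinates are $0$.
\item Hence $\boldsymbol{\pi} = \boldsymbol{1}_{S}/b$ with $S = \{\sigma(1), \ldots, \sigma(b)\}$.
\item The first $b$ entries of a uniformly random permutation form a uniformly random $b$-subset of the $n$ sequences.
\end{itemize}
This is exactly the subsampling design of \citet{ilyas2022datamodels} with fraction $b/n$. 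This also confirms that the cap $\kappa = n/b$ yields $u_k = \min(1, \kappa/n) = 1/b$, since $(\boldsymbol{\pi}_{\text{prior}})_k = 1/n$.

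\emph{Caveat on pairing.} The paired reversal means runs $2i-1$ and $2i$ are not independent: they take the first and last $b$ entries of the same permutation. Each run is still marginally a uniform $b$-subset, so I would state the coincidence at the level of each run's marginal distribution. The remaining points are bookkeeping: $b\pi_k$ is an integer at most $n_k = 1$, and the finite-population fraction is read as zero for $n_k = 1$, so the proposition applies. Beyond this pairing caveat I expect no real obstacle.
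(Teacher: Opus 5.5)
Your proposal is correct and follows essentially the same route as the paper's proof: specialize Proposition~\ref{prop:domain_attribution} to $n_k = 1$, so that $\mu_k = \beta^{\mathcal{Z}}_k$, $\sigma_k^2 = 0$, and $b\pi_k \in \{0,1\}$ forces $\boldsymbol{\pi} = \boldsymbol{1}_S/b$, and then note that the greedy fill with caps $1/b$ assigns $1/b$ to the first $b$ entries of a uniform permutation. Your added caveat is a valid refinement that the paper's proof leaves implicit: with reversed-permutation pairing, consecutive runs are dependent, so the coincidence with the subsampling design of \citet{ilyas2022datamodels} holds for each run's marginal distribution.
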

\end{greybox}

\begin{proof}
With $n_k = 1$ for every $k$, Proposition~\ref{prop:domain_attribution} applies with $\mu_k = \beta^{\mathcal{Z}}_k$ and $\sigma_k^2 = 0$, and $b \pi_k \in \{0, 1\}$ forces $\boldsymbol{\pi} = \boldsymbol{1}_S / b$.
With caps $u_k = 1/b$, the greedy fill of Eq.~\eqref{eq:vertex_fill} assigns $1/b$ to the first $b$ sequences of a uniformly random permutation, which form a uniformly random subset of size~$b$.
\end{proof}

Domain attribution thus interpolates between data attribution at $d = n$ and a single undifferentiated corpus at $d = 1$, preserving the linear structure in between.
Remark~\ref{rem:banzhaf} relates domain attribution to the data values of cooperative game theory, such as the Shapley and Banzhaf values.

\begin{remark}[\textbf{Relation of domain attribution to Shapley and Banzhaf data values}]\label{rem:banzhaf}
Data valuation scores a sequence $s$ by a weighted average of its marginal contributions $y(S \cup \{s\}; \boldsymbol{z}) - y(S; \boldsymbol{z})$ over subsets $S$, with weights that define the Shapley value~\citep{ghorbani2019data}, the Banzhaf value~\citep{wang2023data}, or the weighted Banzhaf value~\citep{li2023robust}, which includes every other sequence in $S$ independently with probability $\alpha$.
If the linear Datamodel of Eq.~\eqref{eq:linear_datamodels} holds with an intercept for subsets of every size, these values coincide with the attribution score $\beta_s(\boldsymbol{z})$, and by Proposition~\ref{prop:domain_attribution} the domain attribution $\bar{\beta}_k$ is $b$ times the average value of the sequences in domain $k$, up to an additive constant shared by all domains.
Without the linear assumption, the slope coefficients of the affine least squares approximation of $y(S; \boldsymbol{z})$ over all subsets $S$ are the Banzhaf values~\citep{hammer1992approximations}, and under independent inclusion with probability $\alpha$ they are the weighted Banzhaf values with weight $\alpha$~\citep{marichal2011weighted}.
Vertex sampling with equal caps resembles independent inclusion with probability about $1/\kappa$, except that every run contains the same number of domains.
For nonlinear outcomes, its least squares coefficients are therefore a fixed-size analogue of the weighted Banzhaf value.
While a Banzhaf value captures the effect of adding a sequence to a random training set, differences between domain coefficients capture the effect of moving budget between domains at a fixed training size, which is the question that data mixing asks.
\end{remark}

\subsection{The design matrix governs the estimation error}\label{app:estimation-error}

By Proposition~\ref{prop:domain_attribution}, the outcomes of $m$ proxy runs satisfy $\boldsymbol{y} = \boldsymbol{\Pi} \bar{\boldsymbol{\beta}} + \boldsymbol{\xi}$ for a noise vector $\boldsymbol{\xi} \in \mathbb{R}^m$, and the attribution estimate $\hat{\boldsymbol{\beta}} = (\boldsymbol{\Pi}^\top \boldsymbol{\Pi})^{-1} \boldsymbol{\Pi}^\top \boldsymbol{y}$ is the least squares estimator of $\bar{\boldsymbol{\beta}}$.
We make the standard assumptions of linear regression, writing $\boldsymbol{H}_{\boldsymbol{\Pi}} \coloneqq \frac{1}{m} \boldsymbol{\Pi}^\top \boldsymbol{\Pi}$ for the normalized information matrix, i.e., the Gram matrix of the mixes, whose inverse scales the covariance of $\hat{\boldsymbol{\beta}}$.
\begin{greybox}
\begin{assumption}\label{ass:noise}
The noise is unbiased, $\mathbb{E}[\boldsymbol{\xi} \mid \boldsymbol{\Pi}] = \boldsymbol{0}$, its covariance $\boldsymbol{\Sigma}_{\boldsymbol{\Pi}} \coloneqq \mathrm{Cov}(\boldsymbol{\xi} \mid \boldsymbol{\Pi})$ satisfies $v_{\min} \boldsymbol{I}_m \preceq \boldsymbol{\Sigma}_{\boldsymbol{\Pi}} \preceq v_{\max} \boldsymbol{I}_m$ for constants $0 \le v_{\min} \le v_{\max} < \infty$, and the design has full column rank, $\mathrm{rank}(\boldsymbol{\Pi}) = d$.
\end{assumption}
\end{greybox}
Proposition~\ref{prop:domain_attribution} provides the first two conditions for independent proxy runs, where $v_{\max} = \sigma_\eta^2 + b \max_k \sigma_k^2$ is admissible and, for homoscedastic noise $\boldsymbol{\Sigma}_{\boldsymbol{\Pi}} = \sigma^2 \boldsymbol{I}_m$, $v_{\min} = v_{\max} = \sigma^2$, while the third condition is a property of the design that we discuss below.
The simplex is an affine hyperplane that avoids the origin, so $d$ mixes can span $\mathbb{R}^d$, with one direction for the overall outcome level and $d - 1$ for contrasts between domains.

\begin{greybox}
\begin{lemma}[\textbf{Estimation error of linear domain attribution}]
\label{lem:ols_error}
Under Assumption~\ref{ass:noise}, $\hat{\boldsymbol{\beta}}$ is unbiased and its mean squared error satisfies
{\small
\begin{equation}
\label{eq:ols_bound}
v_{\min}\, \mathrm{tr}\big((\boldsymbol{\Pi}^\top \boldsymbol{\Pi})^{-1}\big)
\;\le\;
\mathbb{E}\big[\|\hat{\boldsymbol{\beta}} - \bar{\boldsymbol{\beta}}\|_2^2 \mid \boldsymbol{\Pi}\big]
\;\le\;
v_{\max}\, \mathrm{tr}\big((\boldsymbol{\Pi}^\top \boldsymbol{\Pi})^{-1}\big)
\;\le\;
\frac{d}{m} \cdot \frac{v_{\max}}{\lambda_{\min}(\boldsymbol{H}_{\boldsymbol{\Pi}})},
\end{equation}
}%
with equality in the first two when all runs share the same noise variance, $v_{\min} = v_{\max} = \sigma^2$.
\end{lemma}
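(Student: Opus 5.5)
The plan is to write the estimator error explicitly, take the conditional expectation of its squared norm as a trace, and then sandwich the noise covariance. Substituting $\boldsymbol{y} = \boldsymbol{\Pi}\bar{\boldsymbol{\beta}} + \boldsymbol{\xi}$ into $\hat{\boldsymbol{\beta}} = (\boldsymbol{\Pi}^\top\boldsymbol{\Pi})^{-1}\boldsymbol{\Pi}^\top\boldsymbol{y}$ gives
\[
\hat{\boldsymbol{\beta}} - \bar{\boldsymbol{\beta}} = \boldsymbol{A}\boldsymbol{\xi},
\qquad
\boldsymbol{A} \coloneqq (\boldsymbol{\Pi}^\top\boldsymbol{\Pi})^{-1}\boldsymbol{\Pi}^\top .
\]
The inverse exists by the full column rank in Assumption~\ref{ass:noise}. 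Unbiasedness then follows at once from $\mathbb{E}[\boldsymbol{\xi}\mid\boldsymbol{\Pi}] = \boldsymbol{0}$, since $\boldsymbol{A}$ is $\boldsymbol{\Pi}$-measurable.

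For the mean squared error, I write $\|\boldsymbol{A}\boldsymbol{\xi}\|_2^2 = \mathrm{tr}(\boldsymbol{A}\boldsymbol{\xi}\boldsymbol{\xi}^\top\boldsymbol{A}^\top)$. Taking conditional expectations and using the zero mean gives
\[
\mathbb{E}\big[\|\hat{\boldsymbol{\beta}} - \bar{\boldsymbol{\beta}}\|_2^2 \mid \boldsymbol{\Pi}\big] = \mathrm{tr}(\boldsymbol{A}\boldsymbol{\Sigma}_{\boldsymbol{\Pi}}\boldsymbol{A}^\top).
\]
The Loewner sandwich $v_{\min}\boldsymbol{I}_m \preceq \boldsymbol{\Sigma}_{\boldsymbol{\Pi}} \preceq v_{\max}\boldsymbol{I}_m$ is preserved under the congruence $\boldsymbol{X}\mapsto \boldsymbol{A}\boldsymbol{X}\boldsymbol{A}^\top$. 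The trace is monotone in the Loewner order, so the error lies between $v_{\min}\,\mathrm{tr}(\boldsymbol{A}\boldsymbol{A}^\top)$ and $v_{\max}\,\mathrm{tr}(\boldsymbol{A}\boldsymbol{A}^\top)$. A direct computation shows $\boldsymbol{A}\boldsymbol{A}^\top = (\boldsymbol{\Pi}^\top\boldsymbol{\Pi})^{-1}\boldsymbol{\Pi}^\top\boldsymbol{\Pi}(\boldsymbol{\Pi}^\top\boldsymbol{\Pi})^{-1} = (\boldsymbol{\Pi}^\top\boldsymbol{\Pi})^{-1}$, which yields the first two inequalities. When $\boldsymbol{\Sigma}_{\boldsymbol{\Pi}} = \sigma^2\boldsymbol{I}_m$, both sides coincide, which gives the equality case.

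For the last inequality, I use $(\boldsymbol{\Pi}^\top\boldsymbol{\Pi})^{-1} = \frac{1}{m}\boldsymbol{H}_{\boldsymbol{\Pi}}^{-1}$. The trace of the positive definite matrix $\boldsymbol{H}_{\boldsymbol{\Pi}}^{-1}$ is the sum of its $d$ eigenvalues $1/\lambda_j(\boldsymbol{H}_{\boldsymbol{\Pi}})$, and each of these is at most $1/\lambda_{\min}(\boldsymbol{H}_{\boldsymbol{\Pi}})$. Hence $\mathrm{tr}\big((\boldsymbol{\Pi}^\top\boldsymbol{\Pi})^{-1}\big) \le \frac{d}{m}\cdot\frac{1}{\lambda_{\min}(\boldsymbol{H}_{\boldsymbol{\Pi}})}$, and multiplying by $v_{\max}$ finishes the chain.

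No step is a genuine obstacle. The only point needing care is the monotonicity of the trace under congruence. I would justify it by noting that $\boldsymbol{A}(\boldsymbol{\Sigma}_{\boldsymbol{\Pi}} - v_{\min}\boldsymbol{I}_m)\boldsymbol{A}^\top$ is positive semidefinite, so its trace is nonnegative, and likewise for $v_{\max}\boldsymbol{I}_m - \boldsymbol{\Sigma}_{\boldsymbol{\Pi}}$.
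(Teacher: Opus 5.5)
Your proposal is correct and follows essentially the same route as the paper's proof: write $\hat{\boldsymbol{\beta}} - \bar{\boldsymbol{\beta}} = (\boldsymbol{\Pi}^\top\boldsymbol{\Pi})^{-1}\boldsymbol{\Pi}^\top\boldsymbol{\xi}$, express the conditional MSE as a trace, sandwich it using the Loewner bounds on $\boldsymbol{\Sigma}_{\boldsymbol{\Pi}}$, and bound the eigenvalues of $(\boldsymbol{\Pi}^\top\boldsymbol{\Pi})^{-1} = \frac{1}{m}\boldsymbol{H}_{\boldsymbol{\Pi}}^{-1}$ by $1/(m\,\lambda_{\min}(\boldsymbol{H}_{\boldsymbol{\Pi}}))$. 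Your explicit justification of trace monotonicity under congruence is a detail the paper leaves implicit.
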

\end{greybox}

\begin{proof}
This is the standard variance calculation for least squares.
Substituting $\boldsymbol{y} = \boldsymbol{\Pi} \bar{\boldsymbol{\beta}} + \boldsymbol{\xi}$ gives $\hat{\boldsymbol{\beta}} - \bar{\boldsymbol{\beta}} = (\boldsymbol{\Pi}^\top \boldsymbol{\Pi})^{-1} \boldsymbol{\Pi}^\top \boldsymbol{\xi}$, which has zero conditional mean and expected squared norm $\mathrm{tr}((\boldsymbol{\Pi}^\top \boldsymbol{\Pi})^{-1} \boldsymbol{\Pi}^\top \boldsymbol{\Sigma}_{\boldsymbol{\Pi}} \boldsymbol{\Pi} (\boldsymbol{\Pi}^\top \boldsymbol{\Pi})^{-1})$.
The semidefinite bounds on $\boldsymbol{\Sigma}_{\boldsymbol{\Pi}}$ sandwich this trace between $v_{\min}$ and $v_{\max}$ times $\mathrm{tr}((\boldsymbol{\Pi}^\top \boldsymbol{\Pi})^{-1})$, with equality for $\boldsymbol{\Sigma}_{\boldsymbol{\Pi}} = \sigma^2 \boldsymbol{I}_m$.
The last inequality holds as the $d$ eigenvalues of $(\boldsymbol{\Pi}^\top \boldsymbol{\Pi})^{-1} = \frac{1}{m} \boldsymbol{H}_{\boldsymbol{\Pi}}^{-1}$ are at most $1 / (m \lambda_{\min}(\boldsymbol{H}_{\boldsymbol{\Pi}}))$.
\end{proof}

The last inequality bounds the error by the smallest eigenvalue of~$\boldsymbol{H}_{\boldsymbol{\Pi}}$, which is small when some combination of domains barely varies across the proxy runs, i.e., when the runs leave a direction of the simplex unexplored.
In optimal experimental design~\citep{kiefer1959regression,pukelsheim2006optimal}, minimizing the trace in Eq.~\eqref{eq:ols_bound}, maximizing this eigenvalue, and maximizing $\det(\boldsymbol{H}_{\boldsymbol{\Pi}})$ are known as the A-, E-, and D-criteria.
This eigenvalue is what the condition number $\mathrm{cond}(\boldsymbol{\Pi}) = \sqrt{\lambda_{\max}(\boldsymbol{H}_{\boldsymbol{\Pi}}) / \lambda_{\min}(\boldsymbol{H}_{\boldsymbol{\Pi}})}$ controls: every mix sums to one, so $\boldsymbol{\Pi} \boldsymbol{1}_d = \boldsymbol{1}_m$ and $\boldsymbol{1}_d^\top \boldsymbol{H}_{\boldsymbol{\Pi}} \boldsymbol{1}_d = 1$, hence $\lambda_{\max}(\boldsymbol{H}_{\boldsymbol{\Pi}}) \ge 1/d$ and $1 / \lambda_{\min}(\boldsymbol{H}_{\boldsymbol{\Pi}}) \le d\, \mathrm{cond}(\boldsymbol{\Pi})^2$.
Minimizing the condition number of $\boldsymbol{\Pi}$, as in Section~\ref{sec:experiments-tuning}, therefore controls the last bound of Eq.~\eqref{eq:ols_bound} up to the factor $d$.
The noise level and the number of domains are fixed by the data and its partition, so the number of runs $m$ and the design $\boldsymbol{\Pi}$ are the only two levers of a regression-based data mixing method.
We next compare the designs of Dirichlet and vertex sampling on these quantities.

\subsection{Pitfalls of Dirichlet sampling}\label{app:dirichlet-theory}

Dirichlet sampling in Eq.~\eqref{eq:dirichlet_mixing} draws each mix as $\boldsymbol{\pi}_i \sim \mathrm{Dirichlet}(\gamma_i\, \bar{\boldsymbol{a}})$ around the tempered prior $\bar{\boldsymbol{a}} \coloneqq \boldsymbol{\pi}_{\text{prior}}^{1/\tau} / \|\boldsymbol{\pi}_{\text{prior}}^{1/\tau}\|_1$, with a log-uniform concentration $\gamma_i \in [\gamma_{\mathrm{lo}}, \gamma_{\mathrm{hi}}]$.
We analyze it without the caps and the rounding of $b \pi_k$ to integers, which keeps the moments in closed form, and start with the expected information matrix $\bar{\boldsymbol{H}} = \mathbb{E}[\boldsymbol{H}_{\boldsymbol{\Pi}}]$.

\begin{greybox}
\begin{lemma}[\textbf{Information matrix of Dirichlet sampling}]
\label{lem:dirichlet_H}
Under Eq.~\eqref{eq:dirichlet_mixing},
\begin{tightequation}
\bar{\boldsymbol{H}} \coloneqq \mathbb{E}[\boldsymbol{\pi}_i \boldsymbol{\pi}_i^\top] = c \operatorname{diag}(\bar{\boldsymbol{a}}) + (1 - c)\, \bar{\boldsymbol{a}} \bar{\boldsymbol{a}}^\top,
\quad
c \coloneqq
1 - \frac{\ln\big((1 + \gamma_{\mathrm{hi}}) / (1 + \gamma_{\mathrm{lo}})\big)}{\ln(\gamma_{\mathrm{hi}} / \gamma_{\mathrm{lo}})} \in (0, 1).
\label{eq:dirichlet_H}
\end{tightequation}%
Consequently, $\mathrm{tr}(\bar{\boldsymbol{H}}^{-1}) = \frac{1}{c} \big( \sum_{k=1}^d \bar{a}_k^{-1} - (1 - c)\, d \big)$ and $c\, \bar{a}_{\min} \le \lambda_{\min}(\bar{\boldsymbol{H}}) \le c\, \bar{a}_{(2)}$, where $\bar{a}_{\min}$ and $\bar{a}_{(2)}$ denote the smallest and the second smallest entry of $\bar{\boldsymbol{a}}$.
\end{lemma}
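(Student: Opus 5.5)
The plan is to first compute the second moment of a Dirichlet draw at fixed concentration, then average over the log-uniform $\gamma_i$, and finally read off the trace and the eigenvalue bounds from the diagonal-plus-rank-one structure.

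\emph{Fixed concentration.} For $\boldsymbol{\pi} \sim \mathrm{Dirichlet}(\gamma \bar{\boldsymbol{a}})$ with $\sum_k \bar a_k = 1$, the standard moments are
\begin{equation*}
\mathbb{E}[\pi_k \pi_l] = \frac{\gamma \bar a_k(\gamma \bar a_l + \delta_{kl})}{\gamma(\gamma+1)} .
\end{equation*}
In matrix form this is $\mathbb{E}[\boldsymbol{\pi}\boldsymbol{\pi}^\top \mid \gamma] = \frac{1}{1+\gamma}\operatorname{diag}(\bar{\boldsymbol a}) + \frac{\gamma}{1+\gamma}\bar{\boldsymbol a}\bar{\boldsymbol a}^\top$.

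\emph{Averaging over $\gamma$.} The diagonal weight becomes $c \coloneqq \mathbb{E}[1/(1+\gamma)]$, and the rank-one weight is then $1-c$. With $t = \log\gamma$ uniform on $[\log\gamma_{\mathrm{lo}}, \log\gamma_{\mathrm{hi}}]$, we have
\begin{equation*}
\int \frac{d\gamma}{\gamma(1+\gamma)} = \ln\frac{\gamma}{1+\gamma}.
\end{equation*}
Hence
\begin{equation*}
c = \frac{\ln(\gamma_{\mathrm{hi}}/\gamma_{\mathrm{lo}}) - \ln\big((1+\gamma_{\mathrm{hi}})/(1+\gamma_{\mathrm{lo}})\big)}{\ln(\gamma_{\mathrm{hi}}/\gamma_{\mathrm{lo}})},
\end{equation*}
which matches the stated formula. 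Since $1/(1+\gamma) \in (0,1)$ for all $\gamma>0$, we get $c \in (0,1)$. The degenerate case $\gamma_{\mathrm{lo}} = \gamma_{\mathrm{hi}}$ would be read as the limit and needs only a remark. Because the rows of $\boldsymbol{\Pi}$ are i.i.d., $\mathbb{E}[\boldsymbol{H}_{\boldsymbol{\Pi}}] = \mathbb{E}[\boldsymbol{\pi}_i\boldsymbol{\pi}_i^\top]$.

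\emph{Trace of the inverse.} Write $\boldsymbol D = c\operatorname{diag}(\bar{\boldsymbol a})$ and apply Sherman--Morrison:
\begin{equation*}
\bar{\boldsymbol H}^{-1} = \boldsymbol D^{-1} - \frac{(1-c)\,\boldsymbol D^{-1}\bar{\boldsymbol a}\bar{\boldsymbol a}^\top \boldsymbol D^{-1}}{1+(1-c)\,\bar{\boldsymbol a}^\top \boldsymbol D^{-1}\bar{\boldsymbol a}}.
\end{equation*}
Here $\boldsymbol D^{-1}\bar{\boldsymbol a} = \boldsymbol 1/c$ and $\bar{\boldsymbol a}^\top \boldsymbol D^{-1}\bar{\boldsymbol a} = 1/c$, so the denominator equals $1/c$. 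The correction term is therefore $\frac{1-c}{c}\boldsymbol 1\boldsymbol 1^\top$, with trace $(1-c)d/c$. Subtracting this from $\mathrm{tr}(\boldsymbol D^{-1}) = \frac1c\sum_k \bar a_k^{-1}$ gives the claimed trace.

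\emph{Eigenvalue bounds.} The lower bound is immediate: $(1-c)\bar{\boldsymbol a}\bar{\boldsymbol a}^\top \succeq 0$, so $\bar{\boldsymbol H} \succeq c\operatorname{diag}(\bar{\boldsymbol a}) \succeq c\,\bar a_{\min}\boldsymbol I$.

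For the upper bound I would use Courant--Fischer, restricting to the two-dimensional span of $\boldsymbol e_{(1)}$ and $\boldsymbol e_{(2)}$, the coordinates of the two smallest entries. Inside that span, pick a unit vector $\boldsymbol v$ orthogonal to $\bar{\boldsymbol a}$, which exists since the span is two-dimensional. Then
\begin{equation*}
\boldsymbol v^\top\bar{\boldsymbol H}\boldsymbol v = c\sum_k \bar a_k v_k^2 \le c\,\bar a_{(2)}.
\end{equation*}
This step, choosing the test vector, is the only non-mechanical one. The remaining steps are routine algebra, and the main care is in the log-uniform integral.
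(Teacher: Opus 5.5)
Your proposal is correct and follows the paper's proof essentially step for step. You compute the fixed-$\gamma$ Dirichlet second moment, average it to $c = \mathbb{E}_\gamma[1/(1+\gamma)]$ via the log-uniform integral, apply Sherman--Morrison for the trace, take the lower bound from $\bar{\boldsymbol{H}} \succeq c\operatorname{diag}(\bar{\boldsymbol{a}})$, and obtain the upper bound from the same test vector in $\mathrm{span}(\boldsymbol{e}_{k_1}, \boldsymbol{e}_{k_2})$ orthogonal to $\bar{\boldsymbol{a}}$. The only differences are cosmetic: you integrate directly in $\gamma$ with antiderivative $\ln(\gamma/(1+\gamma))$ rather than substituting $\gamma = e^t$, and you fold $c$ into the diagonal before applying Sherman--Morrison.
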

\end{greybox}

\begin{proof}
\emph{Second moment.}
For $\boldsymbol{\pi} \sim \mathrm{Dirichlet}(\boldsymbol{\alpha})$ with $\alpha_0 \coloneqq \sum_k \alpha_k$, the moments $\mathbb{E}[\pi_j \pi_k] = \frac{\alpha_j \alpha_k}{\alpha_0 (\alpha_0 + 1)}$ for $j \ne k$ and $\mathbb{E}[\pi_k^2] = \frac{\alpha_k (\alpha_k + 1)}{\alpha_0 (\alpha_0 + 1)}$ read in matrix form as $\mathbb{E}[\boldsymbol{\pi} \boldsymbol{\pi}^\top] = \frac{\boldsymbol{\alpha} \boldsymbol{\alpha}^\top + \operatorname{diag}(\boldsymbol{\alpha})}{\alpha_0 (\alpha_0 + 1)}$.
With $\boldsymbol{\alpha} = \gamma \bar{\boldsymbol{a}}$ and $\alpha_0 = \gamma$, since $\bar{\boldsymbol{a}} \in \Delta^{d-1}$,
\begin{equation*}
\mathbb{E}[\boldsymbol{\pi} \boldsymbol{\pi}^\top \mid \gamma] = \frac{1}{1 + \gamma} \operatorname{diag}(\bar{\boldsymbol{a}}) + \frac{\gamma}{1 + \gamma}\, \bar{\boldsymbol{a}} \bar{\boldsymbol{a}}^\top.
\end{equation*}
The two weights sum to one, so averaging over $\gamma$ gives Eq.~\eqref{eq:dirichlet_H} with $c = \mathbb{E}_\gamma[1/(1 + \gamma)] \in (0, 1)$.
For $\log \gamma$ uniform on $[\log \gamma_{\mathrm{lo}}, \log \gamma_{\mathrm{hi}}]$, substituting $\gamma = e^t$ and using $\frac{\mathrm{d}}{\mathrm{d}t} [t - \ln(1 + e^t)] = \frac{1}{1 + e^t}$ yields the closed form
\begin{equation*}
c = \frac{1}{\ln(\gamma_{\mathrm{hi}} / \gamma_{\mathrm{lo}})} \int_{\ln \gamma_{\mathrm{lo}}}^{\ln \gamma_{\mathrm{hi}}} \frac{\mathrm{d}t}{1 + e^t} = \frac{\big[ t - \ln(1 + e^t) \big]_{\ln \gamma_{\mathrm{lo}}}^{\ln \gamma_{\mathrm{hi}}}}{\ln(\gamma_{\mathrm{hi}} / \gamma_{\mathrm{lo}})} = 1 - \frac{\ln\big((1 + \gamma_{\mathrm{hi}}) / (1 + \gamma_{\mathrm{lo}})\big)}{\ln(\gamma_{\mathrm{hi}} / \gamma_{\mathrm{lo}})}.
\end{equation*}

\emph{Inverse.}
Let $\boldsymbol{D} \coloneqq \operatorname{diag}(\bar{\boldsymbol{a}})$, which is positive definite as $\bar{a}_k > 0$, so $\bar{\boldsymbol{H}} \succeq c \boldsymbol{D} \succ 0$.
The Sherman--Morrison formula~\citep{sherman1950adjustment} applied to the rank-one update $\bar{\boldsymbol{H}} = c \boldsymbol{D} + (1 - c)\, \bar{\boldsymbol{a}} \bar{\boldsymbol{a}}^\top$, together with $\boldsymbol{D}^{-1} \bar{\boldsymbol{a}} = \boldsymbol{1}_d$ and $\bar{\boldsymbol{a}}^\top \boldsymbol{1}_d = 1$, gives
\begin{equation*}
\bar{\boldsymbol{H}}^{-1} = \frac{1}{c} \boldsymbol{D}^{-1} - \frac{(1 - c)\, c^{-2}\, \boldsymbol{1}_d \boldsymbol{1}_d^\top}{1 + (1 - c) / c} = \frac{1}{c} \boldsymbol{D}^{-1} - \frac{1 - c}{c}\, \boldsymbol{1}_d \boldsymbol{1}_d^\top,
\end{equation*}
and taking the trace yields $\mathrm{tr}(\bar{\boldsymbol{H}}^{-1}) = \frac{1}{c} (\sum_k \bar{a}_k^{-1} - (1 - c)\, d)$.

\emph{Spectrum.}
The lower bound follows from $\bar{\boldsymbol{H}} \succeq c \boldsymbol{D}$.
For the upper bound, let $k_1, k_2$ index the two smallest entries of $\bar{\boldsymbol{a}}$ and pick a unit vector $\boldsymbol{w} \in \mathrm{span}(\boldsymbol{e}_{k_1}, \boldsymbol{e}_{k_2})$ orthogonal to $\bar{\boldsymbol{a}}$, which exists as orthogonality is a single linear constraint on a two-dimensional space.
Then $\boldsymbol{w}^\top \bar{\boldsymbol{H}} \boldsymbol{w} = c\, (\bar{a}_{k_1} w_{k_1}^2 + \bar{a}_{k_2} w_{k_2}^2) \le c\, \bar{a}_{(2)}$, and the Rayleigh principle gives $\lambda_{\min}(\bar{\boldsymbol{H}}) \le c\, \bar{a}_{(2)}$.
\end{proof}

\begin{figure}[t]
    \centering
    \includegraphics[width=0.5\linewidth]{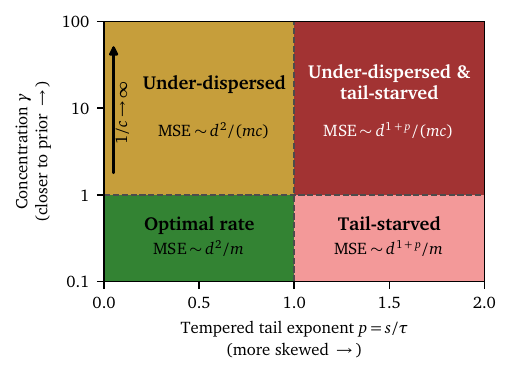}
    \caption{
    \textbf{Two pitfalls of Dirichlet sampling.}
    Order of the estimation error in Theorem~\ref{thm:dirichlet_design} as a function of the Dirichlet concentration $\gamma$ and the tempered tail exponent $p = s/\tau$ of the Zipfian prior.
    The concentration is fixed, so that $c = 1/(1 + \gamma)$, and the boundaries at $\gamma = 1$ and $p = 1$ are schematic.
    For $\gamma \gg 1$, the mixes cluster around the prior, and the error grows with $1/c \approx \gamma$.
    For $p > 1$, the rarest domains receive too little mass, and the error grows super-quadratically in $d$.
    }
    \label{fig:dirichlet_phase_transitions}
\end{figure}

By Lemma~\ref{lem:dirichlet_H}, $\bar{\boldsymbol{H}}$ averages two designs with weights $c$ and $1 - c$: single-domain runs at $\boldsymbol{e}_k$ drawn with probability $\bar{a}_k$, and the tempered prior repeated in every run.
These two terms lead to two pitfalls of Dirichlet sampling (Figure~\ref{fig:dirichlet_phase_transitions}).
(i) \emph{Under-dispersion}: a large concentration keeps the mixes close to the tempered prior, so $c$ is small and the information in every direction orthogonal to $\bar{\boldsymbol{a}}$ shrinks by the factor $c$.
(ii) \emph{Tail starvation}: even at $c \rightarrow 1$, rare domains with a small share $\bar{a}_k$ are seldom explored, so their attributions are estimated poorly and dominate the error, as $\mathrm{tr}(\bar{\boldsymbol{H}}^{-1})$ in Lemma~\ref{lem:dirichlet_H} grows with $\sum_k \bar{a}_k^{-1}$.

To quantify the second pitfall, we model the prior mix as Zipfian~\citep{zipf1949human}, $(\boldsymbol{\pi}_{\text{prior}})_k \propto k^{-s}$ for domains ranked by size (Figure~\ref{fig:prior_zipf_extended}).
Tempering preserves this form, as $(k^{-s})^{1/\tau} = k^{-p}$, so the tempered prior is $\bar{a}_k = k^{-p} / \zeta_d(p)$ with the flatter exponent $p \coloneqq s / \tau$ and the truncated zeta function $\zeta_d(p) \coloneqq \sum_{k=1}^d k^{-p}$.
Its rarest domain has the share $\bar{a}_{\min} = d^{-p} / \zeta_d(p)$, which shrinks as $d$ grows, so by Lemma~\ref{lem:dirichlet_H}, $\lambda_{\min}(\bar{\boldsymbol{H}}) \ge c\, d^{-p} / \zeta_d(p)$.
To bound the error of a sampled design, which depends on $\boldsymbol{H}_{\boldsymbol{\Pi}}$ rather than on $\bar{\boldsymbol{H}}$, we use the following auxiliary results.

\begin{figure}[t]
    \centering
    \includegraphics[width=\linewidth]{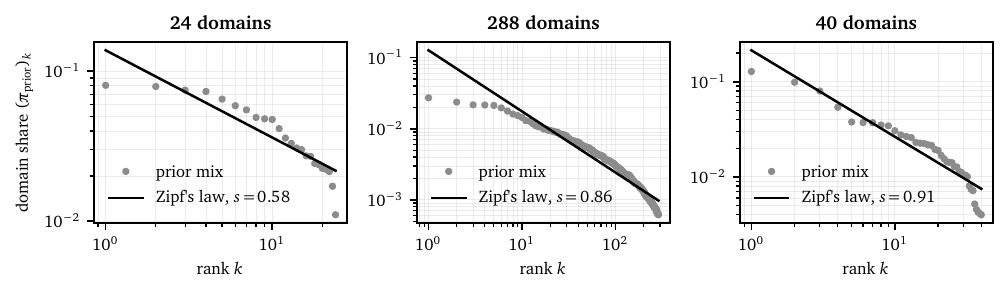}
    \caption{
    \textbf{Prior data mixes are approximately Zipfian.}
    Domain share against rank on log-log axes for the three setups, with least-squares fits of Zipf's law $(\boldsymbol{\pi}_{\text{prior}})_k \propto k^{-s}$ in log-log coordinates.
    The 288-domain setup retains the larger half of the $576$ topic--format pairs, which truncates its tail.
    }
    \label{fig:prior_zipf_extended}
\end{figure}

\begin{greybox}
\begin{lemma}[\textbf{Jensen's inequality for the inverse}~\citep{groves1969note}]\label{lem:jensen_inverse}
Let $\boldsymbol{X}$ be a random positive definite matrix with $\bar{\boldsymbol{X}} \coloneqq \mathbb{E}[\boldsymbol{X}]$ positive definite.
Then $\mathbb{E}[\boldsymbol{w}^\top \boldsymbol{X}^{-1} \boldsymbol{w}] \ge \boldsymbol{w}^\top \bar{\boldsymbol{X}}^{-1} \boldsymbol{w}$ for every vector $\boldsymbol{w}$, and in particular $\mathbb{E}[\mathrm{tr}(\boldsymbol{X}^{-1})] \ge \mathrm{tr}(\bar{\boldsymbol{X}}^{-1})$.
\end{lemma}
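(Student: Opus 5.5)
The plan is to first prove the quadratic-form inequality pointwise, for each realization of $\boldsymbol{X}$, and only then take expectations. For fixed $\boldsymbol{w}$, consider the map $\boldsymbol{X} \mapsto \boldsymbol{w}^\top \boldsymbol{X}^{-1} \boldsymbol{w}$ on the cone of positive definite matrices. It admits the variational representation
\begin{equation*}
\boldsymbol{w}^\top \boldsymbol{X}^{-1} \boldsymbol{w} = \sup_{\boldsymbol{v} \in \mathbb{R}^d} \big( 2\, \boldsymbol{v}^\top \boldsymbol{w} - \boldsymbol{v}^\top \boldsymbol{X} \boldsymbol{v} \big),
\end{equation*}
with the supremum attained at $\boldsymbol{v} = \boldsymbol{X}^{-1} \boldsymbol{w}$. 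This follows by completing the square: $2\boldsymbol{v}^\top \boldsymbol{w} - \boldsymbol{v}^\top \boldsymbol{X}\boldsymbol{v} = \boldsymbol{w}^\top \boldsymbol{X}^{-1}\boldsymbol{w} - (\boldsymbol{v} - \boldsymbol{X}^{-1}\boldsymbol{w})^\top \boldsymbol{X} (\boldsymbol{v} - \boldsymbol{X}^{-1}\boldsymbol{w})$. As a pointwise supremum of affine functions of $\boldsymbol{X}$, the map is convex, and this representation is enough to carry out the Jensen step directly.

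The key step is to avoid invoking a general matrix Jensen inequality and instead fix the single test vector $\boldsymbol{v}^\star \coloneqq \bar{\boldsymbol{X}}^{-1} \boldsymbol{w}$, which is deterministic and well defined because $\bar{\boldsymbol{X}}$ is positive definite. For every realization, the variational formula gives
\begin{equation*}
\boldsymbol{w}^\top \boldsymbol{X}^{-1} \boldsymbol{w} \ge 2\, \boldsymbol{v}^{\star\top} \boldsymbol{w} - \boldsymbol{v}^{\star\top} \boldsymbol{X} \boldsymbol{v}^\star .
\end{equation*}
Taking expectations and using linearity, so that $\mathbb{E}[\boldsymbol{v}^{\star\top} \boldsymbol{X} \boldsymbol{v}^\star] = \boldsymbol{v}^{\star\top} \bar{\boldsymbol{X}} \boldsymbol{v}^\star$, yields
\begin{equation*}
\mathbb{E}[\boldsymbol{w}^\top \boldsymbol{X}^{-1} \boldsymbol{w}] \ge 2\boldsymbol{w}^\top \bar{\boldsymbol{X}}^{-1}\boldsymbol{w} - \boldsymbol{w}^\top \bar{\boldsymbol{X}}^{-1}\boldsymbol{w} = \boldsymbol{w}^\top \bar{\boldsymbol{X}}^{-1}\boldsymbol{w}.
\end{equation*}
The right-hand side of the pointwise bound is integrable whenever $\mathbb{E}[\boldsymbol{X}]$ exists, and the left-hand side is nonnegative. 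So if $\mathbb{E}[\boldsymbol{X}^{-1}]$ is infinite the inequality holds trivially, and no integrability assumption on $\boldsymbol{X}^{-1}$ is needed.

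For the trace statement, write $\mathrm{tr}(\boldsymbol{X}^{-1}) = \sum_{k=1}^d \boldsymbol{e}_k^\top \boldsymbol{X}^{-1} \boldsymbol{e}_k$. Apply the first part with $\boldsymbol{w} = \boldsymbol{e}_k$ for each $k$ and sum; linearity of expectation holds for nonnegative terms even when some are infinite. The only delicate point is this measure-theoretic bookkeeping of possibly infinite expectations, which the pointwise lower bound handles. I expect no real obstacle beyond checking the completing-the-square identity.
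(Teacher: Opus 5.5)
Your proof is correct and is essentially the paper's argument: the paper expands $(\boldsymbol{X}^{-1}\boldsymbol{w} - \bar{\boldsymbol{X}}^{-1}\boldsymbol{w})^\top \boldsymbol{X}\, (\boldsymbol{X}^{-1}\boldsymbol{w} - \bar{\boldsymbol{X}}^{-1}\boldsymbol{w}) \ge 0$, which is your completing-the-square identity at $\boldsymbol{v}^\star = \bar{\boldsymbol{X}}^{-1}\boldsymbol{w}$, then takes expectations and sums over $\boldsymbol{w} = \boldsymbol{e}_1, \ldots, \boldsymbol{e}_d$. Your integrability remark (a nonnegative left side bounded below by an integrable function) is a small point the paper leaves implicit.
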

\end{greybox}
\begin{proof}
Expanding $(\boldsymbol{X}^{-1} \boldsymbol{w} - \bar{\boldsymbol{X}}^{-1} \boldsymbol{w})^\top \boldsymbol{X}\, (\boldsymbol{X}^{-1} \boldsymbol{w} - \bar{\boldsymbol{X}}^{-1} \boldsymbol{w}) \ge 0$ gives $\boldsymbol{w}^\top \boldsymbol{X}^{-1} \boldsymbol{w} \ge 2\, \boldsymbol{w}^\top \bar{\boldsymbol{X}}^{-1} \boldsymbol{w} - \boldsymbol{w}^\top \bar{\boldsymbol{X}}^{-1} \boldsymbol{X} \bar{\boldsymbol{X}}^{-1} \boldsymbol{w}$, whose expectation is $\boldsymbol{w}^\top \bar{\boldsymbol{X}}^{-1} \boldsymbol{w}$; summing over $\boldsymbol{w} = \boldsymbol{e}_1, \ldots, \boldsymbol{e}_d$ gives the trace inequality.
\end{proof}

\begin{greybox}
\begin{lemma}[\textbf{Matrix Chernoff bound, lower tail}]\label{lem:matrix_chernoff}
Let $\boldsymbol{X}_1, \ldots, \boldsymbol{X}_m$ be independent random positive semidefinite $d \times d$ matrices with $\lambda_{\max}(\boldsymbol{X}_i) \le R$ almost surely, and let $\mu_{\min} \coloneqq \lambda_{\min}(\sum_{i=1}^m \mathbb{E}[\boldsymbol{X}_i])$.
Then, for every $t \in [0, 1)$,
\begin{equation*}
\Pr\Big[ \lambda_{\min}\Big( \sum_{i=1}^m \boldsymbol{X}_i \Big) \le (1 - t)\, \mu_{\min} \Big] \le d \Big[ \frac{e^{-t}}{(1 - t)^{1 - t}} \Big]^{\mu_{\min} / R} \le d\, e^{-t^2 \mu_{\min} / (2 R)}.
\end{equation*}
\end{lemma}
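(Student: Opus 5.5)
This is Tropp's matrix Chernoff bound, so I would follow the standard matrix Laplace-transform argument rather than invent anything new. First I reduce to the scale $R = 1$. Dividing every $\boldsymbol{X}_i$ by $R$ turns both $\mu_{\min}$ and $\lambda_{\min}(\sum_i \boldsymbol{X}_i)$ into their rescaled versions, and the exponent $\mu_{\min}/R$ is invariant. I write $\boldsymbol{Y} \coloneqq \sum_i \boldsymbol{X}_i$ with $0 \preceq \boldsymbol{X}_i \preceq \boldsymbol{I}$.

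\emph{Laplace transform step.} For any $\theta > 0$, I apply Markov's inequality to $\lambda_{\min}$:
\[
\Pr[\lambda_{\min}(\boldsymbol{Y}) \le s] \le e^{\theta s}\, \mathbb{E}\, e^{-\theta \lambda_{\min}(\boldsymbol{Y})}.
\]
I then bound $e^{-\theta\lambda_{\min}(\boldsymbol{Y})} = \lambda_{\max}(e^{-\theta \boldsymbol{Y}}) \le \mathrm{tr}\, e^{-\theta\boldsymbol{Y}}$. This trace bound is where the dimensional factor $d$ enters.

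\emph{Subadditivity of matrix cumulants.} This is the main obstacle, because the summands do not commute, so $\mathbb{E}\,\mathrm{tr}\exp(\sum_i -\theta\boldsymbol{X}_i)$ does not factor. I would invoke Lieb's concavity theorem: the map $\boldsymbol{A} \mapsto \mathrm{tr}\exp(\boldsymbol{H} + \log \boldsymbol{A})$ is concave on positive definite matrices. Applying Jensen's inequality to it iteratively over the independent $\boldsymbol{X}_i$ gives
\[
\mathbb{E}\,\mathrm{tr}\, e^{-\theta\boldsymbol{Y}} \le \mathrm{tr}\exp\Big(\sum_i \log \mathbb{E}\, e^{-\theta \boldsymbol{X}_i}\Big).
\]
I would cite this step rather than reprove it, as Lieb's theorem is the one genuinely deep ingredient.

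\emph{Scalar mgf bound and optimization.}
\begin{enumerate}
\item By convexity of $x \mapsto e^{-\theta x}$ on $[0,1]$, I have $e^{-\theta x} \le 1 - (1 - e^{-\theta})x$. The transfer rule for spectral functions then gives $\mathbb{E}\, e^{-\theta\boldsymbol{X}_i} \preceq \boldsymbol{I} - (1-e^{-\theta})\mathbb{E}\boldsymbol{X}_i$.
\item Operator monotonicity of $\log$ and $\log(1-u) \le -u$ turn this into $\log \mathbb{E}\, e^{-\theta \boldsymbol{X}_i} \preceq -(1-e^{-\theta})\mathbb{E}\boldsymbol{X}_i$.
\item Summing over $i$ and using monotonicity of $\mathrm{tr}\exp$, I get
\[
\mathrm{tr}\exp\big(-(1-e^{-\theta})\textstyle\sum_i\mathbb{E}\boldsymbol{X}_i\big) \le d\, e^{-(1-e^{-\theta})\mu_{\min}}.
\]
\item I set $s = (1-t)\mu_{\min}$ and choose $\theta = -\log(1-t)$, which is valid for $t \in [0,1)$. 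The right-hand side then becomes
\[
d\,\big[(1-t)^{-(1-t)} e^{-t}\big]^{\mu_{\min}},
\]
which is the first bound.
\end{enumerate}

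\emph{Second inequality.} It is scalar. I need $-t - (1-t)\log(1-t) \le -t^2/2$ on $[0,1)$. Let $g(t) \coloneqq t + (1-t)\log(1-t) - t^2/2$. Then $g(0) = 0$ and $g'(t) = -\log(1-t) - t \ge 0$, so $g \ge 0$ and the inequality follows.
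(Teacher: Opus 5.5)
Your proposal is correct and takes essentially the paper's route. The paper simply cites Theorem~1.1 of Tropp for the first inequality, and what you write out is the standard proof of that theorem: the Laplace transform with the trace bound, Lieb-based subadditivity of matrix cumulant generating functions, the chord bound $e^{-\theta x}\le 1-(1-e^{-\theta})x$, and the choice $\theta=-\log(1-t)$. Your argument for the second inequality, via $g(0)=0$ and $g'(t)=-\log(1-t)-t\ge 0$, is the same as the paper's.
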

\end{greybox}
\begin{proof}
The first inequality is Theorem~1.1 of \citet{tropp2012user}.
The second one follows from $(1 - t) \ln(1 - t) \ge t^2/2 - t$ for $t \in [0, 1)$, which holds as both sides vanish at $t = 0$ and the derivative of their difference, $-\ln(1 - t) - t$, is nonnegative.
\end{proof}

\begin{greybox}
\begin{corollary}[\textbf{Concentration of the information matrix}]\label{cor:concentration}
Let the rows of $\boldsymbol{\Pi}$ be independent with $\|\boldsymbol{\pi}_i\|_2^2 \le R$ almost surely and a common expected information matrix $\bar{\boldsymbol{H}} \coloneqq \mathbb{E}[\boldsymbol{\pi}_i \boldsymbol{\pi}_i^\top] \succ 0$, and let the noise satisfy the first two conditions of Assumption~\ref{ass:noise}.
For any $\delta \in (0, 1)$, if $m \ge 8 R \ln(d / \delta) / \lambda_{\min}(\bar{\boldsymbol{H}})$, then with probability at least $1 - \delta$ over the draw of $\boldsymbol{\Pi}$,
\begin{equation}
\lambda_{\min}(\boldsymbol{H}_{\boldsymbol{\Pi}}) \ge \tfrac{1}{2} \lambda_{\min}(\bar{\boldsymbol{H}})
\qquad \text{and} \qquad
\mathbb{E}\big[\|\hat{\boldsymbol{\beta}} - \bar{\boldsymbol{\beta}}\|_2^2 \mid \boldsymbol{\Pi}\big] \le \frac{2\, d\, v_{\max}}{m\, \lambda_{\min}(\bar{\boldsymbol{H}})}.
\label{eq:concentration}
\end{equation}
\end{corollary}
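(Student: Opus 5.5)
We apply the matrix Chernoff bound of Lemma~\ref{lem:matrix_chernoff} to the rank-one summands $\boldsymbol{X}_i \coloneqq \boldsymbol{\pi}_i \boldsymbol{\pi}_i^\top$, and then feed the resulting eigenvalue bound into the last inequality of Lemma~\ref{lem:ols_error}. Each $\boldsymbol{X}_i$ is positive semidefinite, and the $\boldsymbol{X}_i$ are independent because the rows of $\boldsymbol{\Pi}$ are. Their only nonzero eigenvalue is $\|\boldsymbol{\pi}_i\|_2^2 \le R$, so $\lambda_{\max}(\boldsymbol{X}_i) \le R$ almost surely. Moreover, $\sum_{i=1}^m \boldsymbol{X}_i = \boldsymbol{\Pi}^\top \boldsymbol{\Pi} = m \boldsymbol{H}_{\boldsymbol{\Pi}}$, and because every row shares the expected information matrix $\bar{\boldsymbol{H}}$, we get $\mu_{\min} = \lambda_{\min}(m \bar{\boldsymbol{H}}) = m\, \lambda_{\min}(\bar{\boldsymbol{H}})$.

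Take $t = 1/2$ in the weaker form of Lemma~\ref{lem:matrix_chernoff}. This gives
\[
\Pr\big[\lambda_{\min}(m \boldsymbol{H}_{\boldsymbol{\Pi}}) \le \tfrac{1}{2} m\, \lambda_{\min}(\bar{\boldsymbol{H}})\big] \le d\, \exp\!\big(-m\, \lambda_{\min}(\bar{\boldsymbol{H}}) / (8R)\big).
\]
Under the assumption $m \ge 8 R \ln(d/\delta) / \lambda_{\min}(\bar{\boldsymbol{H}})$, the exponent is at most $-\ln(d/\delta)$, so the right-hand side is at most $\delta$. Dividing by $m$, we obtain $\lambda_{\min}(\boldsymbol{H}_{\boldsymbol{\Pi}}) \ge \tfrac{1}{2}\lambda_{\min}(\bar{\boldsymbol{H}})$ with probability at least $1-\delta$, which is the first claim.

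On this event, $\lambda_{\min}(\boldsymbol{H}_{\boldsymbol{\Pi}}) > 0$ because $\bar{\boldsymbol{H}} \succ 0$. Hence $\boldsymbol{\Pi}^\top\boldsymbol{\Pi}$ is invertible and $\boldsymbol{\Pi}$ has full column rank. This supplies the third condition of Assumption~\ref{ass:noise}; the first two are assumed. Lemma~\ref{lem:ols_error} therefore applies, and its last bound gives
\[
\mathbb{E}\big[\|\hat{\boldsymbol{\beta}} - \bar{\boldsymbol{\beta}}\|_2^2 \mid \boldsymbol{\Pi}\big] \le \frac{d\, v_{\max}}{m\, \lambda_{\min}(\boldsymbol{H}_{\boldsymbol{\Pi}})} \le \frac{2\, d\, v_{\max}}{m\, \lambda_{\min}(\bar{\boldsymbol{H}})},
\]
which is the second claim.

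The only delicate point is the bookkeeping in the Chernoff step. The bound must be applied to the unnormalized sum $\sum_i \boldsymbol{X}_i$, not to $\boldsymbol{H}_{\boldsymbol{\Pi}}$, so that $\mu_{\min}/R$ scales with $m$ and the exponent $t^2\mu_{\min}/(2R)$ at $t = 1/2$ equals $m\lambda_{\min}(\bar{\boldsymbol{H}})/(8R)$. This matches the constant $8$ in the required sample size. One must also check that full column rank follows from the eigenvalue event rather than having to be assumed.
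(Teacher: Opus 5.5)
Your proposal is correct and follows essentially the same route as the paper's proof. Both apply the matrix Chernoff bound to the rank-one summands $\boldsymbol{\pi}_i\boldsymbol{\pi}_i^\top$ with $\mu_{\min} = m\,\lambda_{\min}(\bar{\boldsymbol{H}})$ and $t = 1/2$, deduce full column rank on the complementary event, and then invoke the last bound of Lemma~\ref{lem:ols_error}.
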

\end{greybox}
\begin{proof}
The summands $\boldsymbol{X}_i \coloneqq \boldsymbol{\pi}_i \boldsymbol{\pi}_i^\top$ of $m \boldsymbol{H}_{\boldsymbol{\Pi}}$ are independent and positive semidefinite with $\mathbb{E}[\boldsymbol{X}_i] = \bar{\boldsymbol{H}}$ and $\lambda_{\max}(\boldsymbol{X}_i) = \|\boldsymbol{\pi}_i\|_2^2 \le R$, so Lemma~\ref{lem:matrix_chernoff} with $\mu_{\min} = m \lambda_{\min}(\bar{\boldsymbol{H}})$ and $t = 1/2$ bounds the probability of $\lambda_{\min}(\boldsymbol{H}_{\boldsymbol{\Pi}}) \le \frac{1}{2} \lambda_{\min}(\bar{\boldsymbol{H}})$ by $d\, e^{-m \lambda_{\min}(\bar{\boldsymbol{H}}) / (8 R)} \le \delta$, using the assumed lower bound on $m$.
On the complementary event, $\lambda_{\min}(\boldsymbol{H}_{\boldsymbol{\Pi}}) > 0$, so $\boldsymbol{\Pi}$ has full column rank and the last bound of Eq.~\eqref{eq:ols_bound} gives the error bound.
\end{proof}

\begin{greybox}
\begin{theorem}[\textbf{Estimation error of Dirichlet sampling under a Zipfian prior}]
\label{thm:dirichlet_design}
Let the rows of $\boldsymbol{\Pi}$ be drawn independently by Eq.~\eqref{eq:dirichlet_mixing} with a Zipfian prior of exponent $s$, let the noise satisfy the first two conditions of Assumption~\ref{ass:noise}, and let $p = s/\tau$ and $c$ be as above.
\begin{enumerate}
\item[(i)] For any $\delta \in (0, 1)$, if $m \ge 8\, d^{p} \zeta_d(p) \ln(d/\delta) / c$, then with probability at least $1 - \delta$ over the draw of $\boldsymbol{\Pi}$,
\begin{equation}
\mathbb{E}\big[\|\hat{\boldsymbol{\beta}} - \bar{\boldsymbol{\beta}}\|_2^2 \mid \boldsymbol{\Pi}\big] \le \frac{2\, v_{\max}}{m\, c}\, d^{1+p} \zeta_d(p).
\label{eq:dirichlet_upper}
\end{equation}
\item[(ii)] For any $m \ge d$, in expectation over the draw of $\boldsymbol{\Pi}$,
\begin{equation}
\mathbb{E}\big[\|\hat{\boldsymbol{\beta}} - \bar{\boldsymbol{\beta}}\|_2^2\big] \ge \frac{v_{\min}}{m\, c} \Big( \frac{d^{1+p} \zeta_d(p)}{1 + p} - (1 - c)\, d \Big).
\label{eq:dirichlet_lower}
\end{equation}
\end{enumerate}
For $d \ge 2 (1 + p)$ and homoscedastic noise, both bounds are within constant factors depending only on $p$ of $\frac{\sigma^2}{m c}\, d^{1+p} \zeta_d(p)$, which for fixed $p$ grows as
\begin{equation}
\frac{d^{1+p} \zeta_d(p)}{m\, c} = \Theta\Bigg( \frac{d^2}{m\, c} \cdot
\begin{cases}
1, & p < 1,\\
\ln d, & p = 1,\\
d^{\,p - 1}, & p > 1.
\end{cases} \Bigg)
\label{eq:dirichlet_rate}
\end{equation}
\end{theorem}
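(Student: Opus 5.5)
Part (i) follows by plugging the Dirichlet spectrum into the concentration result, and part (ii) by combining Jensen's inequality for the inverse with the exact trace of $\bar{\boldsymbol{H}}^{-1}$.

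\emph{Part (i).} We invoke Corollary~\ref{cor:concentration} with $R = 1$, which is valid because every mix lies on the simplex and so $\|\boldsymbol{\pi}_i\|_2^2 \le \|\boldsymbol{\pi}_i\|_1^2 = 1$. By Lemma~\ref{lem:dirichlet_H}, the rows are i.i.d.\ with common second moment $\bar{\boldsymbol{H}}$, and $\lambda_{\min}(\bar{\boldsymbol{H}}) \ge c\,\bar{a}_{\min} = c\, d^{-p}/\zeta_d(p)$ under the tempered Zipfian prior. Substituting this lower bound into the sample-size condition $m \ge 8 R \ln(d/\delta)/\lambda_{\min}(\bar{\boldsymbol{H}})$ gives exactly the stated requirement on $m$. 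Substituting it into the error bound $2 d v_{\max}/(m \lambda_{\min}(\bar{\boldsymbol{H}}))$ gives Eq.~\eqref{eq:dirichlet_upper}.

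\emph{Part (ii).} Conditionally on $\boldsymbol{\Pi}$, Lemma~\ref{lem:ols_error} gives the lower bound $v_{\min}\,\mathrm{tr}((\boldsymbol{\Pi}^\top\boldsymbol{\Pi})^{-1}) = \frac{v_{\min}}{m}\mathrm{tr}(\boldsymbol{H}_{\boldsymbol{\Pi}}^{-1})$. Applying Lemma~\ref{lem:jensen_inverse} to $\boldsymbol{X} = \boldsymbol{H}_{\boldsymbol{\Pi}}$, whose mean is $\bar{\boldsymbol{H}} \succ 0$, yields $\mathbb{E}[\mathrm{tr}(\boldsymbol{H}_{\boldsymbol{\Pi}}^{-1})] \ge \mathrm{tr}(\bar{\boldsymbol{H}}^{-1})$.

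The hypothesis $m \ge d$ is needed so that $\boldsymbol{H}_{\boldsymbol{\Pi}}$ is positive definite. Since Dirichlet draws have a continuous density on the simplex, $d$ or more independent rows have full rank almost surely, so Lemma~\ref{lem:jensen_inverse} applies.

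Lemma~\ref{lem:dirichlet_H} gives $\mathrm{tr}(\bar{\boldsymbol{H}}^{-1}) = \frac1c(\sum_k \bar a_k^{-1} - (1-c)d)$, and here $\sum_k \bar a_k^{-1} = \zeta_d(p)\sum_{k=1}^d k^p$. Comparing with an integral, $\sum_{k=1}^d k^p \ge \int_0^d x^p\,dx = d^{1+p}/(1+p)$. Combining these gives Eq.~\eqref{eq:dirichlet_lower}.

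\emph{Matching and rates.} For homoscedastic noise the upper bound is $\frac{2\sigma^2}{mc}d^{1+p}\zeta_d(p)$. For the lower bound we must show that the subtracted term $(1-c)d$ is at most half of $d^{1+p}\zeta_d(p)/(1+p)$. Since $\zeta_d(p) \ge 1$ and $d^p \ge 1$, it suffices that $d \le d^{1+p}/(2(1+p))$, and because $d^{p} \ge 1$ this reduces to $d \ge 2(1+p)$. So the lower bound is at least $\frac{\sigma^2}{2(1+p)mc}d^{1+p}\zeta_d(p)$ whenever $d \ge 2(1+p)$.

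The rate in Eq.~\eqref{eq:dirichlet_rate} then follows from the standard asymptotics of the truncated zeta function, $\zeta_d(p) = \Theta(d^{1-p})$ for $p<1$, $\Theta(\ln d)$ for $p=1$, and $\Theta(1)$ for $p>1$. These give $d^{1+p}\zeta_d(p) = \Theta(d^2)$, $\Theta(d^2\ln d)$, and $\Theta(d^{1+p}) = \Theta(d^2 d^{p-1})$ respectively.

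The main obstacle is rigor in part (ii). One point is the almost-sure invertibility of $\boldsymbol{H}_{\boldsymbol{\Pi}}$ when $m \ge d$. The other is whether $\mathbb{E}[\mathrm{tr}(\boldsymbol{H}_{\boldsymbol{\Pi}}^{-1})]$ is finite; Jensen's inequality still holds if it is infinite, so the bound survives either way. The $p=1$ boundary of the rates must also be treated separately.
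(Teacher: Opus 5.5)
\textbf{A step in the matching argument fails for $p<1$.} Your parts (i) and (ii) follow the paper's proof essentially step for step. Applying Lemma~\ref{lem:jensen_inverse} to $\boldsymbol{H}_{\boldsymbol{\Pi}}$ instead of $\boldsymbol{\Pi}^\top\boldsymbol{\Pi}$ is only a rescaling by $m$. You also handle the almost-sure invertibility for $m \ge d$ and the possibly infinite expectation correctly.

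The problem is in the step that matches the lower bound to the upper bound. You use $\zeta_d(p) \ge 1$ and conclude that it suffices to have $d \le d^{1+p}/(2(1+p))$. That condition is equivalent to $d^p \ge 2(1+p)$, which $d \ge 2(1+p)$ implies only when $p \ge 1$.
\begin{itemize}
\item At $p = 0$ (the uniform prior) it fails for every $d$, since $1 \ge 2$ is false.
\item At $p = 1/2$, $d = 3$ it also fails, since $\sqrt{3} < 3$, even though the theorem's claim holds there.
\end{itemize}
The fact $d^p \ge 1$ that you invoke cannot close this. It only gives $d^{1+p}/(2(1+p)) \ge d/(2(1+p))$, which is smaller than $d$. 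As written, your argument establishes the constant-factor matching only for $p \ge 1$. The regime $p < 1$ is the one the paper's experiments fall in.

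\textbf{The fix is a sharper lower bound on $\zeta_d(p)$.} Each of its $d$ terms is at least $d^{-p}$, so $\zeta_d(p) \ge d^{1-p}$ and hence $d^{1+p}\zeta_d(p) \ge d^2$. Then $(1-c)\,d \le d \le \frac{d^2}{2(1+p)} \le \frac{d^{1+p}\zeta_d(p)}{2(1+p)}$ holds precisely when $d \ge 2(1+p)$. This yields the lower bound $\frac{\sigma^2}{2(1+p)\,m c}\, d^{1+p}\zeta_d(p)$ for every $p \ge 0$, which is how the paper closes this step. With that one-line replacement, the rest of your proposal goes through, including the truncated-zeta asymptotics for the three rate regimes.
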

\end{greybox}

\begin{proof}
Recall that $\bar{a}_{\min} = d^{-p} / \zeta_d(p)$ and $\lambda_{\min}(\bar{\boldsymbol{H}}) \ge c\, d^{-p} / \zeta_d(p)$.

\emph{Part (i).}
Every mix satisfies $\|\boldsymbol{\pi}_i\|_2^2 \le \|\boldsymbol{\pi}_i\|_1^2 = 1$, so Corollary~\ref{cor:concentration} applies with $R = 1$.
Its condition on $m$ is implied by the assumed one, as $8 \ln(d/\delta) / \lambda_{\min}(\bar{\boldsymbol{H}}) \le 8\, d^{p} \zeta_d(p) \ln(d/\delta) / c$, and its error bound reads $\frac{2 d\, v_{\max}}{m \lambda_{\min}(\bar{\boldsymbol{H}})} \le \frac{2\, v_{\max}}{m c}\, d^{1+p} \zeta_d(p)$.

\emph{Part (ii).}
For $m \ge d$, the design has full column rank almost surely, because the Dirichlet distribution has a density on the simplex, which does not contain the origin, so any $d$ of its draws are almost surely linearly independent.
Hence Eq.~\eqref{eq:ols_bound} applies almost surely, and Lemma~\ref{lem:jensen_inverse} with $\boldsymbol{X} = \boldsymbol{\Pi}^\top \boldsymbol{\Pi}$ and $\bar{\boldsymbol{X}} = m \bar{\boldsymbol{H}}$ gives
\begin{equation*}
\mathbb{E}\big[\|\hat{\boldsymbol{\beta}} - \bar{\boldsymbol{\beta}}\|_2^2\big] \ge v_{\min}\, \mathbb{E}\big[\mathrm{tr}\big((\boldsymbol{\Pi}^\top \boldsymbol{\Pi})^{-1}\big)\big] \ge \frac{v_{\min}}{m}\, \mathrm{tr}(\bar{\boldsymbol{H}}^{-1}) = \frac{v_{\min}}{m c} \Big( \zeta_d(p) \sum_{k=1}^d k^{p} - (1 - c)\, d \Big),
\end{equation*}
using the trace formula of Lemma~\ref{lem:dirichlet_H} with $\bar{a}_k^{-1} = k^{p} \zeta_d(p)$.
The claim follows from $\sum_{k=1}^d k^{p} \ge \int_0^d x^{p} \, \mathrm{d}x = \frac{d^{1+p}}{1 + p}$.

\emph{Rate.}
Since $\zeta_d(p) \ge d \cdot d^{-p}$, we have $d^{1+p} \zeta_d(p) \ge d^2$, so for $d \ge 2 (1 + p)$ the subtracted term satisfies $(1 - c)\, d \le d \le \frac{d^{1+p} \zeta_d(p)}{2 (1 + p)}$, and the right-hand side of Eq.~\eqref{eq:dirichlet_lower} is at least $\frac{v_{\min}}{m c} \cdot \frac{d^{1+p} \zeta_d(p)}{2 (1 + p)}$.
Together with Eq.~\eqref{eq:dirichlet_upper}, both bounds are within the factors $\frac{1}{2(1+p)}$ and $2$ of $\frac{\sigma^2}{m c}\, d^{1+p} \zeta_d(p)$ for $v_{\min} = v_{\max} = \sigma^2$.
Finally, comparing the sum $\zeta_d(p)$ with the integral of $x^{-p}$ gives $\zeta_d(p) = \frac{d^{1-p}}{1 - p} (1 + o(1))$ for $p < 1$, $\zeta_d(p) = \ln d + O(1)$ for $p = 1$, and $\zeta_d(p) = \zeta(p) (1 + o(1))$ for $p > 1$ with the Riemann zeta function $\zeta$, which yields Eq.~\eqref{eq:dirichlet_rate}.
\end{proof}

Theorem~\ref{thm:dirichlet_design} shows that, compared to the optimal error $\sigma^2 d^2 / m$ on the simplex (Proposition~\ref{prop:vertex_design} below), under-dispersion increases the error of Dirichlet sampling by a factor of order $1/c$, and tail starvation by a further factor $d^{p - 1}$ when $p > 1$, i.e., when the prior is too skewed for the temperature $\tau$ to flatten (Figure~\ref{fig:dirichlet_phase_transitions}).
Both factors also appear in the lower bound of part (ii), so the two bounds match up to constants.
For the uniform prior of Theorem~\ref{thm:capped_vertex}, $p = 0$ and only the factor $1/c$ remains.
In our experiments, tail starvation is negligible: the Zipf fits in Figure~\ref{fig:prior_zipf_extended} give $s \approx 0.6$ at 24 domains and $s \approx 0.9$ at 288 and 40 domains, so with $\tau = 2$ all three setups lie in the regime $p < 1$.
Under-dispersion, however, is forced by the caps.
Without caps, a smaller concentration always helps, as it increases $c$, and the range $\gamma \in [0.1, 10]$ used at 24 domains gives $c = 0.5$.
At 288 and 40 domains, small concentrations produce sparse mixes that violate the caps and are almost all rejected, so we tune the ranges with the criterion of Section~\ref{sec:experiments-tuning} to $\gamma \in [40, 4000]$ and $[100, 10^4]$, for which $c \approx 0.005$ and $0.002$.
By Lemma~\ref{lem:dirichlet_H} and the proof of part (ii), the expected error of the uncapped Dirichlet design is then at least $\mathrm{tr}(\bar{\boldsymbol{H}}^{-1}) / d^2 \approx 2$, $230$, and $540$ times the optimum $\sigma^2 d^2 / m$ for the empirical priors and $\gamma$ ranges of our three setups.
Part of this gap is unavoidable, as under equal caps every design on $\mathcal{P}_{\text{cap}}$ pays a factor of about $d / (\kappa - 1)$ (Theorem~\ref{thm:capped_vertex}).
Under the equal-cap approximation, Dirichlet sampling thus remains about $4$ and $30$ times worse than the best capped design at 288 and 40 domains.
This matches Figure~\ref{fig:plot_budget_main}, where both samplers fit well at 24 domains but no surrogate fits the Dirichlet swarm at 288, and Section~\ref{sec:experiments_proxy_metrics}, where near-prior Dirichlet mixes vary HellaSwag BPB five times less than vertex mixes.

\subsection{On optimal sampling}\label{app:vertex-theory}

Without caps, the vertices of the simplex are the pure-domain mixes $\boldsymbol{e}_1, \ldots, \boldsymbol{e}_d$, and the greedy fill of Eq.~\eqref{eq:vertex_fill} places each run at a uniformly random vertex.
Allocating exactly $m/d$ runs to each vertex gives the balanced vertex design, the classical optimal design for first-degree mixture models~\citep{scheffe1958experiments,kiefer1961optimum}.
We restate it because its error $\sigma^2 d^2 / m$ is the reference optimum in Appendix~\ref{app:dirichlet-theory} and Theorem~\ref{thm:capped_vertex} builds on its proof.

\begin{greybox}
\begin{proposition}[\textbf{Classical optimality of the balanced vertex design on the simplex}]
\label{prop:vertex_design}
Let $m$ be a multiple of $d$, let Assumption~\ref{ass:noise} hold, and let $\boldsymbol{\Pi}$ be the balanced vertex design that places $m/d$ runs at each pure-domain mix $\boldsymbol{e}_k$.
Then $\boldsymbol{\Pi}^\top \boldsymbol{\Pi} = \frac{m}{d} \boldsymbol{I}_d$, each $\hat{\beta}_k$ is the average outcome of the runs on domain $k$, and
\begin{equation}
\mathbb{E}\big[\|\hat{\boldsymbol{\beta}} - \bar{\boldsymbol{\beta}}\|_2^2 \mid \boldsymbol{\Pi}\big] \le \frac{d^2}{m}\, v_{\max},
\label{eq:vertex_mse}
\end{equation}
with equality for homoscedastic noise.
Among all designs with $m$ rows in $\Delta^{d-1}$, the balanced vertex design is, up to the order of the runs, the unique design that minimizes $\mathrm{tr}((\boldsymbol{\Pi}^\top \boldsymbol{\Pi})^{-1})$, maximizes $\det(\boldsymbol{\Pi}^\top \boldsymbol{\Pi})$, maximizes $\lambda_{\min}(\boldsymbol{\Pi}^\top \boldsymbol{\Pi})$, and minimizes the worst-case prediction variance $\max_{\boldsymbol{\pi} \in \Delta^{d-1}} \boldsymbol{\pi}^\top (\boldsymbol{\Pi}^\top \boldsymbol{\Pi})^{-1} \boldsymbol{\pi}$.
\end{proposition}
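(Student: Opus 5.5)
The computation for the balanced design comes first, and then optimality, which I would organize around one key inequality. For the balanced design, $\boldsymbol{\Pi}^\top\boldsymbol{\Pi}=\sum_i \boldsymbol{\pi}_i\boldsymbol{\pi}_i^\top=\sum_k \frac{m}{d}\boldsymbol{e}_k\boldsymbol{e}_k^\top=\frac{m}{d}\boldsymbol{I}_d$. Hence $\hat{\boldsymbol{\beta}}=\frac{d}{m}\boldsymbol{\Pi}^\top\boldsymbol{y}$, whose $k$-th entry is the average of the $m/d$ outcomes on domain $k$. Lemma~\ref{lem:ols_error} then gives the error bound $v_{\max}\,\mathrm{tr}(\frac{d}{m}\boldsymbol{I}_d)=\frac{d^2}{m}v_{\max}$, with equality under homoscedastic noise.

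For optimality, let $\boldsymbol{M}\coloneqq\boldsymbol{\Pi}^\top\boldsymbol{\Pi}$ for an arbitrary design with rows in $\Delta^{d-1}$. The key observation is the trace bound
\[
\mathrm{tr}(\boldsymbol{M})=\sum_{i=1}^m\|\boldsymbol{\pi}_i\|_2^2\le\sum_{i=1}^m\|\boldsymbol{\pi}_i\|_1^2=m,
\]
with equality iff every row is a vertex $\boldsymbol{e}_k$. This holds because $\sum_k\pi_k^2\le(\sum_k\pi_k)^2$, with equality iff at most one coordinate is nonzero. Given this constraint on $\mathrm{tr}(\boldsymbol{M})$, each criterion is optimized when the eigenvalues $\lambda_1,\dots,\lambda_d$ of $\boldsymbol{M}$ all equal $m/d$:
\begin{itemize}
\item \emph{A-criterion.} By Cauchy--Schwarz (or AM--HM), $\sum_k 1/\lambda_k\ge d^2/\sum_k\lambda_k\ge d^2/m$.
\item \emph{D-criterion.} By AM--GM, $\prod_k\lambda_k\le(\sum_k\lambda_k/d)^d\le(m/d)^d$.
\item \emph{E-criterion.} $\lambda_{\min}\le\mathrm{tr}(\boldsymbol{M})/d\le m/d$.
\end{itemize}
In each case equality forces both $\mathrm{tr}(\boldsymbol{M})=m$, so all rows are vertices, and all eigenvalues equal. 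Together these give $\boldsymbol{M}=\frac{m}{d}\boldsymbol{I}_d$. For a vertex design, $\boldsymbol{M}=\mathrm{diag}(m_1,\dots,m_d)$ with $m_k$ the number of runs at $\boldsymbol{e}_k$, so $\boldsymbol{M}=\frac{m}{d}\boldsymbol{I}_d$ means $m_k=m/d$ for all $k$. This is the balanced design up to row order, which proves the uniqueness claims. (Rank-deficient designs are handled by the convention $\mathrm{tr}(\boldsymbol{M}^{-1})=\infty$ and $\det(\boldsymbol{M})=0$.)

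The worst-case prediction variance (G-criterion) is the step I expect to be the main obstacle, since it is not a spectral function of $\boldsymbol{M}$ alone. For the lower bound, I would average over the vertices rather than invoke the Kiefer--Wolfowitz equivalence theorem:
\[
\max_{\boldsymbol{\pi}\in\Delta^{d-1}}\boldsymbol{\pi}^\top\boldsymbol{M}^{-1}\boldsymbol{\pi}\ge\max_k(\boldsymbol{M}^{-1})_{kk}\ge\frac{1}{d}\,\mathrm{tr}(\boldsymbol{M}^{-1})\ge\frac{d}{m},
\]
using the A-bound above. For the balanced design the value is $\frac{d}{m}\max_{\boldsymbol{\pi}}\|\boldsymbol{\pi}\|_2^2=\frac{d}{m}$, so the bound is attained. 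For uniqueness, equality throughout forces $\mathrm{tr}(\boldsymbol{M}^{-1})=d^2/m$, and the A-criterion uniqueness already established then pins down the balanced design.
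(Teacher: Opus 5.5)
Your proposal is correct and follows the paper's proof essentially step for step. Both use the trace bound $\mathrm{tr}(\boldsymbol{M}) \le m$, with equality exactly when every row is a vertex; the Cauchy--Schwarz, AM--GM, and eigenvalue-averaging arguments for the A-, D-, and E-criteria; and the reduction of the worst-case prediction variance to the A-bound via $\max_k (\boldsymbol{M}^{-1})_{kk} \ge \frac{1}{d}\mathrm{tr}(\boldsymbol{M}^{-1})$. The one cosmetic difference is that you get that lower bound by evaluating at the vertices directly instead of first using convexity to place the maximum at a vertex, which is, if anything, slightly cleaner.
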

\end{greybox}

\begin{proof}
\emph{Balanced vertex design.}
Each row of $\boldsymbol{\Pi}$ is a standard basis vector, so $(\boldsymbol{\Pi}^\top \boldsymbol{\Pi})_{kl}$ counts the runs placed at $\boldsymbol{e}_k$ if $k = l$ and vanishes otherwise, giving $\boldsymbol{\Pi}^\top \boldsymbol{\Pi} = \frac{m}{d} \boldsymbol{I}_d$.
Then $\hat{\boldsymbol{\beta}} = \frac{d}{m} \boldsymbol{\Pi}^\top \boldsymbol{y}$ averages the outcomes of the runs on each domain, and $\mathrm{tr}((\boldsymbol{\Pi}^\top \boldsymbol{\Pi})^{-1}) = d^2 / m$, so Eq.~\eqref{eq:vertex_mse} and its equality case follow from Lemma~\ref{lem:ols_error}.

\emph{Optimality.}
Let $\boldsymbol{\Pi}$ be any design with rows $\boldsymbol{\pi}_i \in \Delta^{d-1}$ and $\boldsymbol{M} \coloneqq \boldsymbol{\Pi}^\top \boldsymbol{\Pi}$ positive definite with eigenvalues $\lambda_1, \ldots, \lambda_d > 0$; designs with a singular $\boldsymbol{M}$ are never optimal, as all four quantities are then degenerate.
The diagonal of $\boldsymbol{M}$ sums to
\begin{equation}
\mathrm{tr}(\boldsymbol{M}) = \sum_{i=1}^m \|\boldsymbol{\pi}_i\|_2^2 \le \sum_{i=1}^m \|\boldsymbol{\pi}_i\|_1^2 = m,
\label{eq:trace_bound}
\end{equation}
with equality if and only if every row has a single nonzero entry, i.e., is a vertex.
For the trace of the inverse, the Cauchy--Schwarz inequality $(\sum_j \lambda_j)(\sum_j \lambda_j^{-1}) \ge d^2$ and Eq.~\eqref{eq:trace_bound} give $\mathrm{tr}(\boldsymbol{M}^{-1}) \ge d^2 / \mathrm{tr}(\boldsymbol{M}) \ge d^2 / m$, with equality if and only if all eigenvalues are equal and $\mathrm{tr}(\boldsymbol{M}) = m$, i.e., $\boldsymbol{M} = \frac{m}{d} \boldsymbol{I}_d$.
For the determinant, the inequality of arithmetic and geometric means gives $\det(\boldsymbol{M}) = \prod_j \lambda_j \le (\mathrm{tr}(\boldsymbol{M}) / d)^d \le (m / d)^d$, with the same equality condition.
For the smallest eigenvalue, $\lambda_{\min}(\boldsymbol{M}) \le \mathrm{tr}(\boldsymbol{M}) / d \le m / d$, again with the same equality condition.
For the worst-case prediction variance, $\boldsymbol{\pi} \mapsto \boldsymbol{\pi}^\top \boldsymbol{M}^{-1} \boldsymbol{\pi}$ is convex, so its maximum over the simplex is attained at a vertex, and $\max_k (\boldsymbol{M}^{-1})_{kk} \ge \frac{1}{d} \mathrm{tr}(\boldsymbol{M}^{-1}) \ge d / m$, with equality if and only if $\boldsymbol{M}^{-1}$ has a constant diagonal and $\mathrm{tr}(\boldsymbol{M}^{-1}) = d^2 / m$, which again forces $\boldsymbol{M} = \frac{m}{d} \boldsymbol{I}_d$.
Finally, $\boldsymbol{M} = \frac{m}{d} \boldsymbol{I}_d$ requires every row to be a vertex by the equality case of Eq.~\eqref{eq:trace_bound}, and then $\boldsymbol{M} = \operatorname{diag}(m_1, \ldots, m_d)$ with $m_k$ runs at $\boldsymbol{e}_k$ forces $m_k = m / d$, i.e., the balanced vertex design up to the order of its rows, which attains all four~bounds.
\end{proof}

Proposition~\ref{prop:vertex_design} formalizes why vertex sampling works: measuring one domain at a time makes the estimates $\hat{\beta}_k$ uncorrelated for independent runs, and measuring every domain equally often gives each the same variance, at most $v_{\max} d / m$.
The balanced vertex design is optimal under all classical criteria at once (A, D, E, and the worst-case prediction variance;~\citealp{pukelsheim2006optimal}), which in general favor different designs, so the choice of criterion does not matter here.
Since $\boldsymbol{\pi}_{\text{target}}$ is chosen by optimizing the prediction $\langle \hat{\boldsymbol{\beta}}, \boldsymbol{\pi} \rangle$ over mixes, the last criterion matters most for data mixing: the prediction of every mix $\boldsymbol{\pi}$ has variance at most $v_{\max} d / m$, which no other design can improve in the worst case.
In practice, the caps are smaller than one, so a run cannot train on a single domain, and \methodx samples the vertices of $\mathcal{P}_{\text{cap}}$, each of which fills several domains up to their caps.
Theorem~\ref{thm:capped_vertex} shows that the expected information matrix of vertex sampling is optimal on the capped simplex for a uniform prior.

\begin{greybox}
\begin{theorem}[\textbf{Vertex sampling is optimal on the capped simplex with equal caps}]
\label{thm:capped_vertex}
Let the prior be uniform, so that all caps equal $u_k = \kappa / d$ with $q \coloneqq d / \kappa \in \{1, \ldots, d - 1\}$ an integer, and let the runs be drawn independently.
\begin{enumerate}
\item[(i)] The greedy fill of Eq.~\eqref{eq:vertex_fill} places each run at the cap on a uniformly random subset of $q$ domains, and its expected information matrix $\bar{\boldsymbol{V}} \coloneqq \mathbb{E}[\boldsymbol{\pi}_i \boldsymbol{\pi}_i^\top]$ has the eigenvalue $1/d$ along $\boldsymbol{1}_d$ and the eigenvalue $\frac{\kappa - 1}{d (d - 1)}$ with multiplicity $d - 1$ on its orthogonal complement, so that $\mathrm{tr}(\bar{\boldsymbol{V}}^{-1}) = d + \frac{d (d-1)^2}{\kappa - 1}$.
\item[(ii)] Every positive definite matrix $\boldsymbol{H}$ in the convex hull of $\{\boldsymbol{\pi} \boldsymbol{\pi}^\top : \boldsymbol{\pi} \in \mathcal{P}_{\text{cap}}\}$, which includes $\boldsymbol{H}_{\boldsymbol{\Pi}}$ for every design with rows in $\mathcal{P}_{\text{cap}}$ and the expected information matrix of every random such design, satisfies
{\small
\begin{equation}
\lambda_{\min}(\boldsymbol{H}) \le \frac{\kappa - 1}{d (d - 1)},
\qquad
\mathrm{tr}(\boldsymbol{H}^{-1}) \ge d + \frac{d (d - 1)^2}{\kappa - 1},
\qquad
\det(\boldsymbol{H}) \le \frac{1}{d} \Big( \frac{\kappa - 1}{d (d - 1)} \Big)^{d - 1},
\label{eq:capped_optimality}
\end{equation}
}
with equality for $\boldsymbol{H} = \bar{\boldsymbol{V}}$, so that the expected design of vertex sampling minimizes $\mathrm{tr}(\boldsymbol{H}^{-1})$ and maximizes $\lambda_{\min}(\boldsymbol{H})$ and $\det(\boldsymbol{H})$ over $\mathcal{P}_{\text{cap}}$.
\item[(iii)] Under the noise conditions of Assumption~\ref{ass:noise}, for any $\delta \in (0, 1)$, if $m \ge \frac{8 \kappa (d - 1)}{\kappa - 1} \ln(d / \delta)$, then with probability at least $1 - \delta$ over the draw of $\boldsymbol{\Pi}$, $\lambda_{\min}(\boldsymbol{H}_{\boldsymbol{\Pi}}) \ge \frac{\kappa - 1}{2 d (d - 1)}$ and
\begin{equation}
\mathbb{E}\big[\|\hat{\boldsymbol{\beta}} - \bar{\boldsymbol{\beta}}\|_2^2 \mid \boldsymbol{\Pi}\big] \le \frac{2\, d^2 (d - 1)}{m\, (\kappa - 1)}\, v_{\max}.
\label{eq:capped_mse}
\end{equation}
\end{enumerate}
\end{theorem}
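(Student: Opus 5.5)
Part (i) is a direct computation. With equal caps $u_k = \kappa/d = 1/q$ and $q$ an integer, the greedy fill of Eq.~\eqref{eq:vertex_fill} saturates the first $q$ domains of $\sigma_i$ exactly, exhausts the budget, and leaves the rest at zero. Because $\sigma_i$ is uniform, its first $q$ entries form a uniformly random $q$-subset $T$, and $\boldsymbol{\pi}_i = \frac{1}{q}\boldsymbol{1}_T$. Since $\Pr[k \in T] = q/d$ and $\Pr[j,k \in T] = q(q-1)/(d(d-1))$, this gives
\begin{equation*}
\bar{\boldsymbol{V}} = \frac{1}{qd}\boldsymbol{I}_d \cdot \frac{d-q}{d-1} \cdot \frac{d}{d}\;+\;\frac{q-1}{qd(d-1)}\boldsymbol{1}_d\boldsymbol{1}_d^\top ,
\end{equation*}
which is more cleanly written as $\bar{\boldsymbol{V}} = a\boldsymbol{I}_d + b\boldsymbol{1}_d\boldsymbol{1}_d^\top$ with $a = \frac{1}{qd} - \frac{q-1}{qd(d-1)} = \frac{d-q}{qd(d-1)}$. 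Substituting $1/q = \kappa/d$ yields $a = \frac{\kappa-1}{d(d-1)}$. The eigenvalue along $\boldsymbol{1}_d$ is $\boldsymbol{1}_d^\top\bar{\boldsymbol{V}}\boldsymbol{1}_d/d = \mathbb{E}[(\boldsymbol{1}^\top\boldsymbol{\pi})^2]/d = 1/d$. The trace of the inverse then follows by summing reciprocals.

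For part (ii), I would follow the proof of Proposition~\ref{prop:vertex_design} but split off the $\boldsymbol{1}_d$ direction. The two key facts hold for every $\boldsymbol{\pi}\in\mathcal{P}_{\text{cap}}$, and by linearity for every $\boldsymbol{H}$ in the convex hull. First, $\boldsymbol{1}^\top\boldsymbol{H}\boldsymbol{1} = 1$. Second, $\mathrm{tr}(\boldsymbol{H}) \le \max_{\boldsymbol{\pi}}\|\boldsymbol{\pi}\|_2^2 = \kappa/d$, because $\|\boldsymbol{\pi}\|_2^2$ is convex and hence maximized at a vertex, where it equals $q\cdot(1/q)^2$. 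Let $\boldsymbol{P}$ be the projector onto $\boldsymbol{1}^\perp$. Then $\mathrm{tr}(\boldsymbol{P}\boldsymbol{H}\boldsymbol{P}) = \mathrm{tr}(\boldsymbol{H}) - \frac{1}{d} \le \frac{\kappa-1}{d}$. The eigenvalue bound follows from Rayleigh quotients: the minimum over the $(d-1)$-dimensional subspace $\boldsymbol{1}^\perp$ is at most the average trace, $\frac{\kappa-1}{d(d-1)}$.

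For the trace of the inverse, I would use the block inequality $\mathrm{tr}(\boldsymbol{H}^{-1}) \ge \mathrm{tr}((\boldsymbol{P}\boldsymbol{H}\boldsymbol{P})^{+}) + (\boldsymbol{u}^\top\boldsymbol{H}\boldsymbol{u})^{-1}$ with $\boldsymbol{u} = \boldsymbol{1}/\sqrt{d}$. This holds because the diagonal blocks of $\boldsymbol{H}^{-1}$ in the basis $(\boldsymbol{u}, \boldsymbol{1}^\perp)$ dominate the inverses of the corresponding blocks of $\boldsymbol{H}$, by Schur complements. The second term equals $d$, and Cauchy--Schwarz on the $d-1$ compressed eigenvalues bounds the first by $(d-1)^2 d/(\kappa-1)$.

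For the determinant, Fischer's inequality gives $\det\boldsymbol{H} \le (\boldsymbol{u}^\top\boldsymbol{H}\boldsymbol{u})\det(\boldsymbol{P}\boldsymbol{H}\boldsymbol{P}|_{\boldsymbol{1}^\perp})$, which with AM--GM is at most $\frac{1}{d}\big(\frac{\kappa-1}{d(d-1)}\big)^{d-1}$. Equality for $\bar{\boldsymbol{V}}$ follows from part (i). The main obstacle I expect is stating the Schur-complement and Fischer step cleanly in the rotated basis; otherwise the argument mirrors Proposition~\ref{prop:vertex_design}.

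Part (iii) applies Corollary~\ref{cor:concentration}. Every vertex has $\|\boldsymbol{\pi}_i\|_2^2 = \kappa/d =: R$, and $\lambda_{\min}(\bar{\boldsymbol{V}}) = \frac{\kappa-1}{d(d-1)}$, since this is at most $1/d$ when $\kappa \le d$. The sample-size condition $8R\ln(d/\delta)/\lambda_{\min}$ becomes exactly $\frac{8\kappa(d-1)}{\kappa-1}\ln(d/\delta)$. The error bound $\frac{2dv_{\max}}{m\lambda_{\min}}$ then gives Eq.~\eqref{eq:capped_mse}.
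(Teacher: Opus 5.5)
Your proposal is correct and follows the paper's proof essentially step for step: you split off $\boldsymbol{1}_d$, use $\boldsymbol{1}_d^\top\boldsymbol{H}\boldsymbol{1}_d = 1$ and $\mathrm{tr}(\boldsymbol{H}) \le \kappa/d$, then Rayleigh, Fischer with AM--GM, and Corollary~\ref{cor:concentration} with $R = \kappa/d$, and you make explicit the check $\kappa \le d$ that pins down $\lambda_{\min}(\bar{\boldsymbol{V}})$ in part (iii). The deviations are minor: the paper bounds $\|\boldsymbol{\pi}\|_2^2 \le \sum_k u\,\pi_k = u$ directly instead of maximizing at vertices (your route tacitly uses that every vertex of $\mathcal{P}_{\text{cap}}$ has exactly $q$ coordinates at the cap, true since $q$ is an integer), and it gets the $\mathrm{tr}(\boldsymbol{H}^{-1})$ bound from $\boldsymbol{x}^\top\boldsymbol{H}^{-1}\boldsymbol{x} \ge 1/(\boldsymbol{x}^\top\boldsymbol{H}\boldsymbol{x})$ on each basis vector rather than your equally valid Schur-complement block domination.
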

\end{greybox}

\begin{proof}
\emph{Part (i).}
With equal caps $u \coloneqq \kappa / d = 1 / q$, the greedy fill assigns the cap to the first $q$ domains of the permutation $\sigma_i$, which exhausts the budget as $q u = 1$, and zero to the remaining domains.
The first $q$ elements of a uniformly random permutation form a uniformly random $q$-subset $S_i$, so $\boldsymbol{\pi}_i = u\, \boldsymbol{1}_{S_i}$ with $\Pr[k \in S_i] = q / d$ and $\Pr[k, l \in S_i] = \frac{q (q - 1)}{d (d - 1)}$ for $k \ne l$.
Hence
\begin{equation*}
\bar{\boldsymbol{V}} = u^2 \Big[ \Big( \frac{q}{d} - \frac{q (q - 1)}{d (d - 1)} \Big) \boldsymbol{I}_d + \frac{q (q - 1)}{d (d - 1)}\, \boldsymbol{1}_d \boldsymbol{1}_d^\top \Big],
\end{equation*}
whose eigenvalue along $\boldsymbol{1}_d$ is $u^2 \big( \frac{q}{d} + \frac{q (q - 1)}{d} \big) = u^2 q^2 / d = 1 / d$, and whose eigenvalue on the orthogonal complement is $u^2\, \frac{q}{d} \cdot \frac{d - q}{d - 1} = \frac{d - q}{q\, d\, (d - 1)} = \frac{\kappa - 1}{d (d - 1)}$, using $u^2 q = 1 / q$ and $q = d / \kappa$.
The trace of $\bar{\boldsymbol{V}}^{-1}$ follows by summing the reciprocals with multiplicities $1$ and $d - 1$.

\emph{Part (ii).}
Let $\boldsymbol{H} = \sum_j w_j \boldsymbol{\pi}_j \boldsymbol{\pi}_j^\top$ be positive definite with weights $w_j \ge 0$ summing to one and $\boldsymbol{\pi}_j \in \mathcal{P}_{\text{cap}}$.
Two constraints hold for every such $\boldsymbol{H}$.
First, every mix sums to one, so $\boldsymbol{1}_d^\top \boldsymbol{H} \boldsymbol{1}_d = \sum_j w_j (\boldsymbol{\pi}_j^\top \boldsymbol{1}_d)^2 = 1$, i.e., the unit vector $\boldsymbol{v} \coloneqq \boldsymbol{1}_d / \sqrt{d}$ satisfies $\boldsymbol{v}^\top \boldsymbol{H} \boldsymbol{v} = 1/d$.
Second, for $\boldsymbol{\pi} \in \mathcal{P}_{\text{cap}}$, $\|\boldsymbol{\pi}\|_2^2 = \sum_k \pi_k^2 \le \sum_k u\, \pi_k = u$, with equality if and only if every $\pi_k \in \{0, u\}$, i.e., $\boldsymbol{\pi}$ is a vertex, so $\mathrm{tr}(\boldsymbol{H}) \le u = \kappa / d$.
Let $\boldsymbol{p}_1, \ldots, \boldsymbol{p}_{d-1}$ be an orthonormal basis of the orthogonal complement of $\boldsymbol{v}$, collected in $\boldsymbol{P} \in \mathbb{R}^{d \times (d-1)}$, so that $\boldsymbol{Q} \coloneqq [\boldsymbol{v}\ \boldsymbol{P}]$ is orthogonal, and let $\boldsymbol{C} \coloneqq \boldsymbol{P}^\top \boldsymbol{H} \boldsymbol{P}$ be the lower-right block of $\boldsymbol{Q}^\top \boldsymbol{H} \boldsymbol{Q}$.
The two constraints give $\mathrm{tr}(\boldsymbol{C}) = \mathrm{tr}(\boldsymbol{H}) - \boldsymbol{v}^\top \boldsymbol{H} \boldsymbol{v} \le (\kappa - 1) / d$.
For the smallest eigenvalue, restricting the Rayleigh quotient of $\boldsymbol{H}$ to the range of $\boldsymbol{P}$ gives $\lambda_{\min}(\boldsymbol{H}) \le \lambda_{\min}(\boldsymbol{C}) \le \mathrm{tr}(\boldsymbol{C}) / (d - 1) \le \frac{\kappa - 1}{d (d - 1)}$.
For the determinant, Fischer's inequality for the positive definite block matrix $\boldsymbol{Q}^\top \boldsymbol{H} \boldsymbol{Q}$~\citep[Section~7.8]{horn2013matrix} and the inequality of arithmetic and geometric means give $\det(\boldsymbol{H}) = \det(\boldsymbol{Q}^\top \boldsymbol{H} \boldsymbol{Q}) \le (\boldsymbol{v}^\top \boldsymbol{H} \boldsymbol{v}) \det(\boldsymbol{C}) \le \frac{1}{d} \big( \frac{\mathrm{tr}(\boldsymbol{C})}{d - 1} \big)^{d - 1} \le \frac{1}{d} \big( \frac{\kappa - 1}{d (d - 1)} \big)^{d - 1}$.
For the trace of the inverse, $\mathrm{tr}(\boldsymbol{H}^{-1}) = \mathrm{tr}(\boldsymbol{Q}^\top \boldsymbol{H}^{-1} \boldsymbol{Q}) = \boldsymbol{v}^\top \boldsymbol{H}^{-1} \boldsymbol{v} + \sum_{j=1}^{d-1} \boldsymbol{p}_j^\top \boldsymbol{H}^{-1} \boldsymbol{p}_j$.
For any unit vector $\boldsymbol{x}$, the Cauchy--Schwarz inequality $1 = (\boldsymbol{x}^\top \boldsymbol{H}^{1/2} \boldsymbol{H}^{-1/2} \boldsymbol{x})^2 \le (\boldsymbol{x}^\top \boldsymbol{H} \boldsymbol{x})(\boldsymbol{x}^\top \boldsymbol{H}^{-1} \boldsymbol{x})$ gives $\boldsymbol{x}^\top \boldsymbol{H}^{-1} \boldsymbol{x} \ge 1 / (\boldsymbol{x}^\top \boldsymbol{H} \boldsymbol{x})$.
Applied to $\boldsymbol{v}$, this gives $\boldsymbol{v}^\top \boldsymbol{H}^{-1} \boldsymbol{v} \ge d$; applied to each $\boldsymbol{p}_j$ and combined with $(\sum_j x_j)(\sum_j 1/x_j) \ge (d-1)^2$ for positive $x_j$, it gives $\sum_j \boldsymbol{p}_j^\top \boldsymbol{H}^{-1} \boldsymbol{p}_j \ge \sum_j 1 / (\boldsymbol{p}_j^\top \boldsymbol{H} \boldsymbol{p}_j) \ge (d - 1)^2 / \mathrm{tr}(\boldsymbol{C}) \ge \frac{d (d - 1)^2}{\kappa - 1}$.
Finally, $\bar{\boldsymbol{V}}$ lies in the convex hull as an expectation, and by part (i) it has $\boldsymbol{v}$ as an eigenvector with eigenvalue $1/d$ and $\boldsymbol{C} = \frac{\kappa - 1}{d (d - 1)} \boldsymbol{I}_{d - 1}$, so all three bounds hold with equality.

\emph{Part (iii).}
Every vertex has $\|\boldsymbol{\pi}_i\|_2^2 = q u^2 = u$, so Corollary~\ref{cor:concentration} applies with $R = u = \kappa / d$ and $\lambda_{\min}(\bar{\boldsymbol{V}}) = \frac{\kappa - 1}{d (d - 1)}$ from part (i), for which its condition on $m$ and its error bound are exactly those claimed.
\end{proof}

Theorem~\ref{thm:capped_vertex} extends Proposition~\ref{prop:vertex_design} to the capped simplex: without caps, $q = 1$ and part (ii) reduces to the bounds of the proposition.
The caps force each run to spread over $q = d / \kappa$ domains, so a run reveals only about $1/q$ as much about any single domain.
This is why $\mathrm{tr}(\bar{\boldsymbol{V}}^{-1})$ in part (i) exceeds the uncapped value $d^2$ by a factor of about $d / (\kappa - 1)$, which is $4.5$, $62$, and $16$ in our three setups if we treat their caps as equal.
Vertex sampling is nevertheless optimal for two reasons.
First, every mix sums to one, so all designs carry the same information about the overall outcome level ($1/d$ along $\boldsymbol{1}_d$) and differ only in the information about contrasts between domains.
Vertices attain the largest norm $\|\boldsymbol{\pi}\|_2^2 = \kappa / d$ on $\mathcal{P}_{\text{cap}}$, so a design of vertices has the largest total information $\mathrm{tr}(\boldsymbol{H})$, which leaves vertex sampling the most information for these contrasts; mixes that are not vertices have a strictly smaller norm and are never optimal.
Second, it spreads this information evenly across these contrasts, which is what the three quantities in Eq.~\eqref{eq:capped_optimality} reward.
Section~\ref{sec:method} calls the design near-orthogonal in expectation because the expected off-diagonal entries of $\boldsymbol{\Pi}^\top \boldsymbol{\Pi}$ are only $(q - 1) / (d - 1) \approx 1 / \kappa$ times the diagonal ones.
Part (iii) shows that a sampled design is nearly optimal once $m$ exceeds a multiple of $d \ln d$: its estimation error is at most $2 d / (d - 1) \approx 2$ times the smallest value any design on $\mathcal{P}_{\text{cap}}$ can achieve under homoscedastic noise.
This guarantee is conservative, as its constants come from the matrix Chernoff bound and its condition on $m$ exceeds the size of our swarms; in practice, we select the best-conditioned of many sampled designs (Appendix~\ref{app:experimental-setup}), which controls the realized design directly but is not covered by the theorem.
Pairing runs by reversed permutations, as in Section~\ref{sec:method}, does not affect the theorem.
The reverse of a uniform permutation is uniform, so $\bar{\boldsymbol{V}}$ stays the same, and for $\kappa \ge 2$ the two runs of a pair have disjoint supports, so part (iii) still holds with the $m/2$ independent pairs as summands, each with expectation $2 \bar{\boldsymbol{V}}$ and $R = u$.
We leave non-uniform priors with unequal caps to future work, as they break the symmetry that gives $\bar{\boldsymbol{V}}$ its closed form.

\section{Limitations}\label{app:limitations}

A limitation shared by all regression-based methods is the cost of training a proxy swarm, which we quantify here by scaling proxy model size and budget.
In practice, the cost-to-performance trade-off of data mixing becomes only more favorable as the target model size and pretraining budget increase.
For example, pretraining a 4B model on 120B tokens for 2.5k GPU hours could be designed with a swarm of $512\times$ 30M models computed for 0.1k GPU hours (4\% of the target cost; cf. Appendix~\ref{app:compute_resources}).
Naturally, we train the target models using a single random repetition to enable more experimental setups.
Our theory bounds the error of the estimated attribution $\hat{\boldsymbol{\beta}}$, not the error of the predictions made with it.
Moreover, the noise level $v_{\max}$ in our bounds depends on how the corpus is split into domains, so the $d^2/m$ rate alone cannot tell how many domains to use.
Finally, whether the optimal mix is dense in positively attributed domains depends on the target benchmark, corpus size, and pretraining token budget.
We thus view the curvature $r$ that controls this as a useful utility trade-off, while the hyperparameter-free property of \methodx applies to the design of proxy-mix sampling and regression model fitting.

\section{Compute Resources}\label{app:compute_resources}

All models are trained on NVIDIA GH200 96GB GPUs without special optimizations beyond the default code implementations.
Training one OLMo2 proxy model takes 0.2 GPU-hours for 30M parameters (1B tokens), 0.7 for 60M (2B), and 1.7 for 100M (3.2B).
The Qwen3.5-like 100M proxy takes 2.6 GPU-hours on the same 3.2B tokens, 1.5 times as much as OLMo2, which we attribute to its hybrid attention with gated DeltaNet layers that add constant overhead independent of model size.
Model evaluation adds further cost per proxy, depending on the benchmark and its sample size.
The proxy swarms cost 0.7k GPU-hours at 30M (1024 models at 24 domains and 2048 at 288 domains), 1.4k at 60M (2048 models), 1.8k at 100M for OLMo2 (1024 models), and 2.7k at 100M for Qwen3.5 (1024 models).
Pretraining a single 1B target model for 32B tokens takes 233 GPU-hours using 16 GPUs, so the 23 runs behind Table~\ref{tab:performance1b} and the 7 runs behind Tables~\ref{tab:ablation_scaling} and~\ref{tab:ablation_r} amount to ca. 7.0k.
The 1B Qwen3.5-like target model takes 272 GPU-hours on the same 32B tokens, so the four runs behind the CPile column of Table~\ref{tab:performance4b} amount to ca. 1.1k.
One 4B target run (80B tokens) takes 1.7k GPU-hours using 32 GPUs, so the 10 runs behind Table~\ref{tab:performance4b} amount to ca. 17k.
Together with model evaluations, the reported experiments cost ca. 34k GPU-hours; we estimate that the whole project consumed ca. 51k GPU-hours, including preliminary work and failed jobs.

\end{document}